\def\eqref#1{equation~\ref{#1}}
\def\Eqref#1{Equation~\ref{#1}}
\def\1{\bm{1}}
\DeclareMathAlphabet{\mathsfit}{\encodingdefault}{\sfdefault}{m}{sl}
\SetMathAlphabet{\mathsfit}{bold}{\encodingdefault}{\sfdefault}{bx}{n}
\newcommand{\E}{\mathbb{E}}
\setlist[itemize]{leftmargin=*}
\definecolor{BrickRed}{rgb}{0.698, 0.132, 0.203}  
\definecolor{ForestGreen}{rgb}{0.1333,0.5451,0.1333}
\definecolor{StrongSteelBlue}{HTML}{004C99}
\newcommand{\steelblue}[1]{\textcolor{StrongSteelBlue}{#1}}
\newcommand{\redtext}[1]{\textcolor{BrickRed}{#1}}
\newcommand{\greentext}[1]{\textcolor{ForestGreen}{#1}}
\newcommand{\cmark}{\textcolor{ForestGreen}{\ding{51}}}
\newcommand{\xmark}{\textcolor{BrickRed}{\ding{55}}}
\newcommand{\f}{\mathbbm{IF}}
\newcommand*\diff{\mathop{}\!\mathrm{d}}
\theoremstyle{plain}
\newtheorem{theorem}{Theorem}[section]
\newtheorem{lemma}[theorem]{Lemma}
\newtheorem{corollary}[theorem]{Corollary}
\theoremstyle{definition}
\newtheorem{definition}[theorem]{Definition}
\theoremstyle{remark}
\newcommand*\circledwhite[1]{\tikz[baseline=(char.base)]{
            \node[shape=circle,draw=gray!60,fill=white,thick,inner sep=1pt] (char) {\scriptsize\textsf#1};}}
\newcommand{\rebuttal}[1]{\textcolor{black}{#1}}
\title{Overlap-weighted orthogonal meta-learner for treatment effect estimation over time}
\author{
\textbf{Konstantin Hess}\textsuperscript{1,2,*},
\textbf{Dennis Frauen}\textsuperscript{1,2},
\textbf{Mihaela van der Schaar}\textsuperscript{3,4},
\textbf{Stefan Feuerriegel}\textsuperscript{1,2}\\[0.8em]
\textsuperscript{1}LMU Munich \quad
\textsuperscript{2}Munich Center for Machine Learning \quad
\textsuperscript{3}University of Cambridge \quad\\
\textsuperscript{4}Alan Turing Institute \quad
\textsuperscript{*}{Corresponding author: \texttt{k.hess@lmu.de}}
}
\begin{document}

\maketitle

\begin{abstract}
Estimating heterogeneous treatment effects (HTEs) in time-varying settings is particularly challenging, as the probability of observing certain treatment sequences decreases exponentially with longer prediction horizons. Thus, the observed data contain little support for many plausible treatment sequences, which creates severe \emph{overlap} problems. Existing meta-learners for the time-varying setting typically assume adequate treatment overlap, and thus suffer from exploding estimation variance when the overlap is low. To address this problem, we introduce a novel \emph{overlap-weighted} orthogonal (WO) meta-learner for estimating HTEs that targets regions in the observed data with high probability of receiving the interventional treatment sequences. This offers a fully data-driven approach through which our WO-learner can counteract instabilities as in existing meta-learners and thus obtain more reliable HTE estimates. Methodologically, we develop a novel Neyman-orthogonal population risk function that minimizes the overlap-weighted oracle risk. We show that our WO-learner has the favorable property of Neyman-orthogonality, meaning that it is robust against misspecification in the nuisance functions. Further, our WO-learner is fully model-agnostic and can be applied to any machine learning model. Through extensive experiments with both transformer and LSTM backbones, we demonstrate the benefits of our novel WO-learner.
\end{abstract}

\begin{wrapfigure}{r}{0.4\textwidth}
\vspace{-1.5cm}
  \centering
  \includegraphics[width=0.4\textwidth, trim=5.45cm 19.15cm 5.75cm 4.75cm, clip]{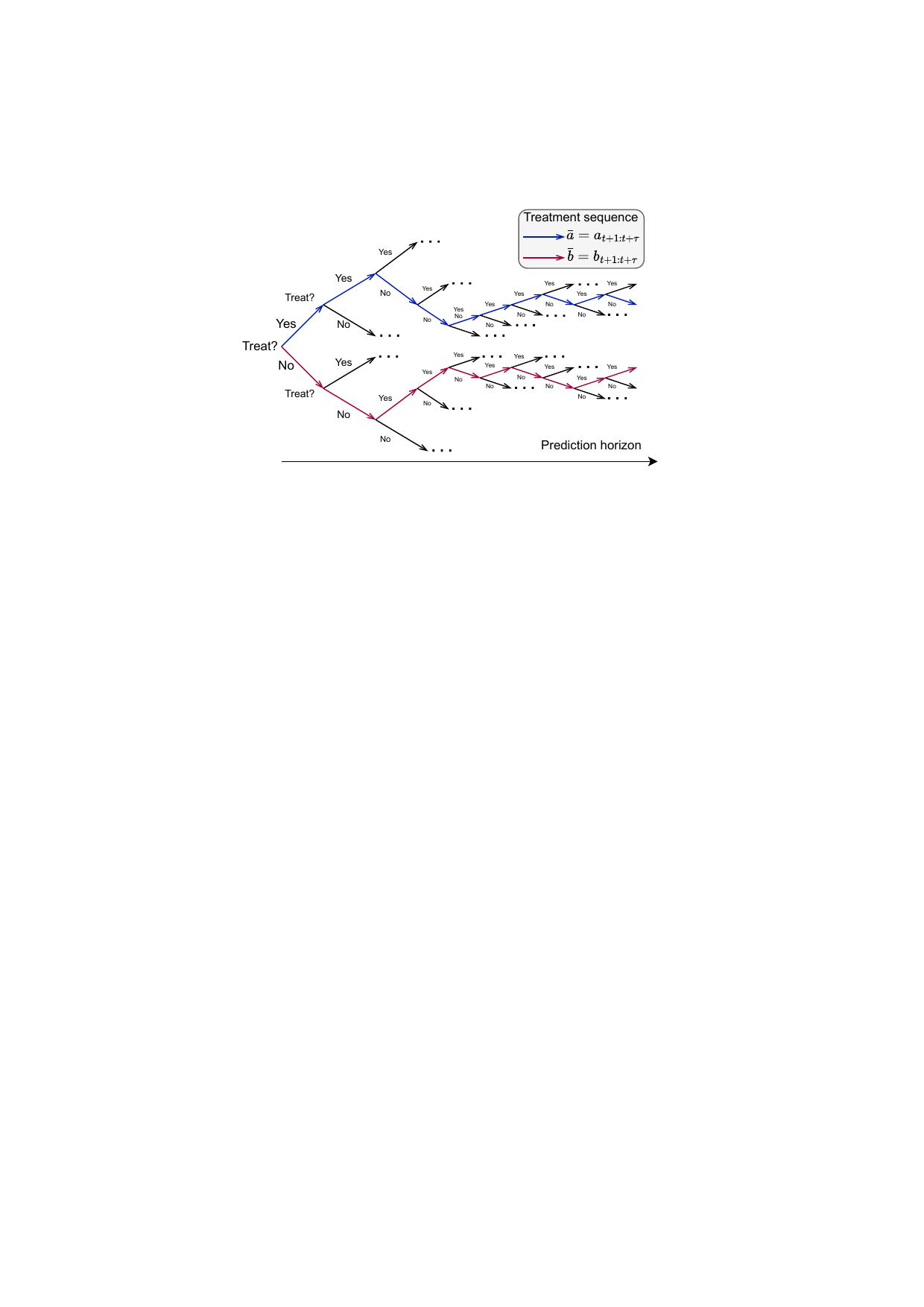}
\vspace{-0.1cm}
\caption{\textbf{Overlap problems in time-varying settings.} With longer prediction horizons, treatment sequences become more complex, and the probability of observing each treatment sequence decreases exponentially. 
Hence, standard meta-learners often have a poor performance due to low overlap.}
\vspace{-0.5cm}
\label{fig:treatment_decisions}
\end{wrapfigure}

\section{Introduction}

Estimating heterogeneous treatment effects (HTEs) \rebuttal{such as conditional average potential outcomes (CAPOs) and conditional average potential outcomes (CAPOs) \citep{Frauen.2025}} \emph{\textbf{over time}} from patient trajectories is central for advancing personalized medicine \citep{Allam.2021,Battalio.2021, Bica.2021b, Feuerriegel.2024}. Such estimates can, for example, guide treatment adaptation in chronic disease management or inform personalized intervention strategies in digital health.

Unlike standard predictive tasks, HTE estimation is inherently causal and, in time-series settings, requires adjustments for time-varying confounders \citep{Bica.2020}. Without such adjustments, time-varying confounding can induce infinite-sample bias and lead to incorrect estimates. To address this, model-agnostic \emph{\textbf{meta-learners}} \citep{Frauen.2025} have been proposed to provide principled strategies for causal adjustments in time-varying settings.\footnote{The term \emph{meta-learner} refers to general estimation strategies (``recipes'') to learn causal quantities \citep{Kunzel.2019, Frauen.2025b}, which can be instantiated with different machine learning backbones (e.g., neural networks).} Prominent examples include adaptations of the inverse propensity weighting (IPW) learner and the doubly robust (DR) learner to time-varying settings \citep{Frauen.2025}.

However, existing meta-learners \citep{Frauen.2025} for estimating HTEs over time suffer from \emph{instabilities in settings with low overlap}, which renders them inapplicable to medical scenarios. Here, \emph{overlap} refers to the probability of receiving each of the two different treatment sequences of interest; if overlap is low, standard meta-learners have severe estimation variance \citep{Jesson.2020, Melnychuk.2026}. This issue is especially serious in time-varying settings: with increasing prediction horizons, the probability of a treatment sequence consists of products of propensity scores \citep{Hess.2025}, which makes the treatment overlap \emph{decrease exponentially} (see Fig.~\ref{fig:treatment_decisions}). 
As a result, in low-overlap regimes, adjustment strategies based on inverse propensity weighting (IPW) will lead to extreme weights due to the product structure and, hence, division by values that are close to zero.

In this work, we propose a novel \emph{overlap-\underline{\textbf{w}}eighted \underline{\textbf{o}}rthogonal meta-learner} (\textbf{WO}-learner) for estimating HTEs over time:
\vspace{-0.2cm}
\begin{enumerate}[leftmargin=1em]
    \item Our novel \textbf{WO}-learner addresses low-overlap regimes with a carefully designed population risk function that \emph{minimizes a novel, overlap-weighted oracle risk}. Hence, by focusing on high-overlap regions in the data, our \textbf{WO}-learner avoids extreme inverse propensity weights and unreliable response function estimates. As a result, our meta-learner provides stable HTE estimates, \emph{even} in low-overlap regimes.
    \item We further ensure that our weighted population risk function is \emph{Neyman-orthogonal} with respect to all nuisance functions. Hence, our \textbf{WO}-learner is robust to misspecification in the nuisance parameters, meaning that estimation errors in the nuisance functions do \textbf{not} propagate as first-order biases into the final HTE estimate, which is a crucial advantage over simple plug-in estimators \citep{Hines2022, Kennedy.2022}. 
    \item Our \textbf{WO}-learner is \emph{fully model-agnostic} and can be used in combination with \textit{any} machine learning backbone, such as transformers or LSTMs. Further, we derive our \textbf{WO}-learner for estimating \emph{conditional average treatment effects} (CATEs). On top of that, we also extend our theory to \textit{conditional average potential outcome estimation} (CAPOs); see Section~\ref{sec:wo_learner}.
\end{enumerate}

We make three key \textbf{contributions}:
\footnote{Code is available at {\url{https://github.com/konstantinhess/wo_learner_timeseries}}.} 
\textbf{(1)}~We introduce a novel overlap-weighted meta-learner for HTE estimation over time that minimizes the \emph{overlap-weighted oracle risk}. \textbf{(2)}~We further derive a \emph{Neyman-orthogonal population risk} that eliminates first-order bias from the nuisance functions in the HTE estimates. \textbf{(3)}~Through extensive experiments, we demonstrate that our \textbf{WO}-learner outperforms existing meta-learners, especially in settings with low overlap. In addition, we demonstrate the benefits of our meta-learner in settings where Neyman-orthogonality is crucial, such as limited sample size and complex nuisance functions.

\section{Related work}

We now review prior literature on treatment effect estimation \emph{\textbf{over time}}, namely: \circledwhite{1}~average treatment effect (ATE) estimation, \circledwhite{2}~model-based HTE estimation, and \circledwhite{3}~meta-learners for HTE estimation. \emph{Our WO-learner belongs to the latter category, and this is where our primary contributions are.}

\circledwhite{1}~\textbf{ATE vs. HTE estimation \emph{over time}:} The literature on estimating ATEs over time dates back to works in epidemiology and classical statistics \citep{Robins.1986, Robins.1999, Robins.2000}. Examples are G-computation \citep{Bang.2005, Robins.1999, Robins.2009}, marginal structural models \citep{ Robins.2009,Robins.2000} and structural nested models \citep{Robins.1994, Robins.2009}, which belong to the broader class of so-called G-methods. More recently, targeted maximum likelihood has been adapted for the time-varying setting \citep{vanderLaan.2012, vanderLaan.2018}. There is also some literature on model-based methods for estimating ATEs over time \citep{Frauen.2023, Shirakawa.2024}. Importantly, all of these works focus on \emph{average} potential outcomes estimation and, therefore, \emph{ignore patient heterogeneity}, because of which these works are not suitable for personalized medicine.

\circledwhite{2}~\textbf{Limitations of model-based HTE estimation \emph{over time}:}
There has been much research on \emph{model-based} estimation of HTEs over time \citep{Bica.2020,  Hess.2024,  Hess.2025,Hess.2026, Li.2021, Lim.2018, Ma.2025, Melnychuk.2022, Seedat.2022, Wang.2025}. Importantly, the focus in this literature stream is primarily on how to adapt the underlying neural backbone, but \textbf{not} how to find the best adjustment strategy (i.e., the learning strategy to address time-varying confounding). Further, the above model-based methods are known to be \emph{instantiations} of different meta-learners (see \citet{Frauen.2025} for a discussion). 
Importantly, \textbf{none} of these model-based methods relies on either overlap-weighted or Neyman-orthogonal meta-learners. In contrast, we design an \emph{overlap-weighted} meta-learner that is \emph{Neyman-orthogonal} with respect to all its nuisance functions and that can be applied to \emph{\textbf{any}} neural backbone.

\circledwhite{3}~\textbf{Meta-learners for HTE estimation \emph{over time}:} Research on meta-learners for HTE estimators over time is still very limited, and we are aware of only a few works. \citet{Lewis.2021} developed a method that, however, relies on \emph{parametric assumptions} on the data-generating process and is, therefore, \emph{not} fully model agnostic.

\begin{wraptable}{r}{0.48\textwidth} 
\vspace{-0.3cm}              
\setlength{\intextsep}{0pt}          
\setlength{\columnsep}{1em}          
\centering
\begin{adjustbox}{width=\linewidth}
    \tiny
    \begin{tabular}{lccc}
    \toprule
    \multicolumn{1}{c}{} & \makecell{Proper time-varying adj.\\($\tau>0$)} &
    \makecell{Neyman-\\orthogonal} &
    \makecell{Designed for\\low-overlap regimes} \\
    \midrule
    (a)~\textbf{HA} &  \textbf{\xmark} & \textbf{\xmark} & \textbf{\xmark} \\
    (b)~\textbf{RA} &  \textbf{\cmark} & \textbf{\xmark} & \textbf{\xmark} \\
    (c)~\textbf{IPW} &  \textbf{\cmark} & \textbf{\xmark} & \textbf{\xmark} \\
    (d)~\textbf{DR}  &  \textbf{\cmark} & \textbf{\cmark} & \textbf{\xmark} \\
    (e)~\textbf{IVW} &  \textbf{\cmark} & \textbf{\xmark} & \textbf{\xmark} \\
    \midrule
    $(*)$~\textbf{WO}~(\emph{ours}) & \textbf{\cmark} & \textbf{\cmark} & \textbf{\cmark} \\
    \bottomrule
    \end{tabular}
\end{adjustbox}
\vspace{-0.2cm} 
\caption{\textbf{Meta-learners for HTE estimation \textit{over time}.} Our \textbf{WO}-learner is the only method that adjusts for time-varying confounding, is Neyman-orthogonal with respect to all its nuisance functions, and avoids extreme weights in low-overlap regimes.}\label{tab:table_method_overview}
\vspace{-0.4cm} 
\end{wraptable}
Recently, \citet{Frauen.2025} formalized a suite of meta-learners for the time-varying setting, namely: (a)~{history adjustment} (\textbf{HA}), (b)~{regression adjustment} (\textbf{RA}), (c)~{inverse propensity weighting} (\textbf{IPW}), (d)~{doubly-robust} (\textbf{DR}), and an (e)~{inverse-variance-weighted} (\textbf{IVW}) learner. However, they all have important \emph{shortcomings} (Table~\ref{tab:table_method_overview}): First, the \textbf{HA} is biased and does \underline{not} target the correct estimand. Second, the \textbf{RA}, \textbf{IPW}, and \textbf{IVW} learners have plug-in bias and are \underline{not} Neyman-orthogonal with respect to their estimated nuisance functions. \rebuttal{\citet{Frauen.2025} refer to the IVW-learner as IVW-DR learner. However, it is \textbf{not} orthogonal with respect to its weights, and errors in the propensities propagate as first order biases to estimated inverse-variance weights, and hence, the reweighted population risk. Hence, for clarity, we refer to it as IVW-learner, as doubly-robustness does \textbf{not} hold.} Third, the \textbf{IPW} and \textbf{DR} learners rely on inverse propensity weighting, which can lead to extreme weights for long treatment sequences, especially in settings with low overlap. In contrast, we develop a novel \textbf{WO}-learner that is Neyman-orthogonal, and designed to deal with low overlap.

\emph{\underline{Why low overlap is a non-trivial challenge for existing meta-learners:}} When the interventional treatment sequences have low overlap, inverse propensity scores may lead to extreme weights. Further, small errors in estimated propensity scores lead to large errors in the constructed pseudo-outcomes and, therefore, to extreme variance for the \textbf{IPW} and the \textbf{DR} learner. Importantly, this issue is even more pronounced in the time-varying setting, where inverse propensity weighting relies on \emph{products} of propensity scores and, thereby, the treatment propensity \emph{decreases exponentially} with longer prediction horizons \citep{ Frauen.2025, Hess.2026, Lim.2018}. \rebuttal{Here, propensity-score clipping is sometimes used as a heuristic that introduces \emph{uncontrollable bias}, since the truncation level \textbf{cannot} be calibrated without access to counterfactual outcomes. In contrast, our \textbf{WO}-learner provides principled stabilization under limited overlap.} 
Similar issues also arise for the response functions learned in the \textbf{RA} and in the \textbf{DR} learner, where low overlap leads to poorly learned, biased response surfaces, especially in high-dimensional covariate spaces. Finally, as the \textbf{IVW} learner is not Neyman-orthogonal w.r.t. its weight functions, estimation errors propagate as first-order bias through all time steps, which makes it unstable in low-overlap regimes.

\textbf{Research gap:}~To the best of our knowledge, there is \underline{\textbf{no}} meta-learner designed to counteract low-overlap regimes while being Neyman-orthogonal. As a remedy, we propose a novel \emph{overlap-weighted}, \emph{orthogonal} meta-learner (\textbf{WO}-learner) for HTE estimation over time. 

\section{Problem formulation}

\textbf{Setup:} 
Let $t \in \mathbb{N}_0$ be the time index. Further, let $Y_t \in \mathbb{R}^{d_y}$ the outcome variable of interest (e.g., a variable indicating the health status of a patient), $X_t \in \mathbb{R}^{d_x}$ the covariates that contain relevant patient information (including static features), and $A_t \in \{0,1\}$ the treatment variable. For any stochastic process $V_t\in\{Y_t, X_t, A_t\}$, we write $\bar{V}_t=V_{0:t}=(V_0,\ldots, V_t)$ for the history of $V_t$ up to time $t$. Then, let $\bar{H}_t=(\bar{Y}_{t-1},\bar{X}_t, \bar{A}_{t-1})$ be the collective history observed at time step $t$, and $\bar{Z}_t=(\bar{Y}_{t},\bar{X}_t, \bar{A}_{t})$ the history including the final treatment and outcome. Finally, $\tau$ is the prediction horizon \rebuttal{such that treatment sequences are of the form $a_{t:t+\tau}\in \{0,1\}^{\tau+1}$ (see Figure~\ref{fig:treatment_decisions})}. We further build upon the potential outcomes framework \citep{Neyman.1923, Rubin.1978} for the time-varying setting \citep{Robins.1999, Robins.2009}. Formally, let $Y_{t+\tau}[a_{t:t+\tau}]$ be the potential outcome that would have been observed under the \emph{interventional} treatment sequence.

\textbf{Estimation task:}
Given a history $\bar{H}_t = \bar{h}_t$ and two interventional sequences of treatments $a_{t:t+\tau}=(a_t,\ldots, a_{t+\tau})$ and $b_{t:t+\tau}=(b_t,\ldots, b_{t+\tau})$, our main objective is to estimate the CATE
{
\begin{align}
    \mu_t^{\bar{a},\bar{b}}(\bar{h}_t)=\E \Big[ Y_{t+\tau}[a_{t:t+\tau}] - Y_{t+\tau}[b_{t:t+\tau}] \mid \bar{H}_t = \bar{h}_t \Big].
\end{align}
}
We extend our theory to the conditional average potential outcome (CAPO), which is defined as
{
\begin{align}
    \mu_t^{\bar{a}}(\bar{h}_t)=\E \Big[ Y_{t+\tau}[a_{t:t+\tau}] \mid \bar{H}_t = \bar{h}_t \Big].
\end{align}
}

\begin{wrapfigure}{r}{0.4\textwidth}
\vspace{-0.5cm}
  \centering
  \includegraphics[width=0.4\textwidth, trim=4cm 20.2cm 8.5cm 3.7cm, clip]{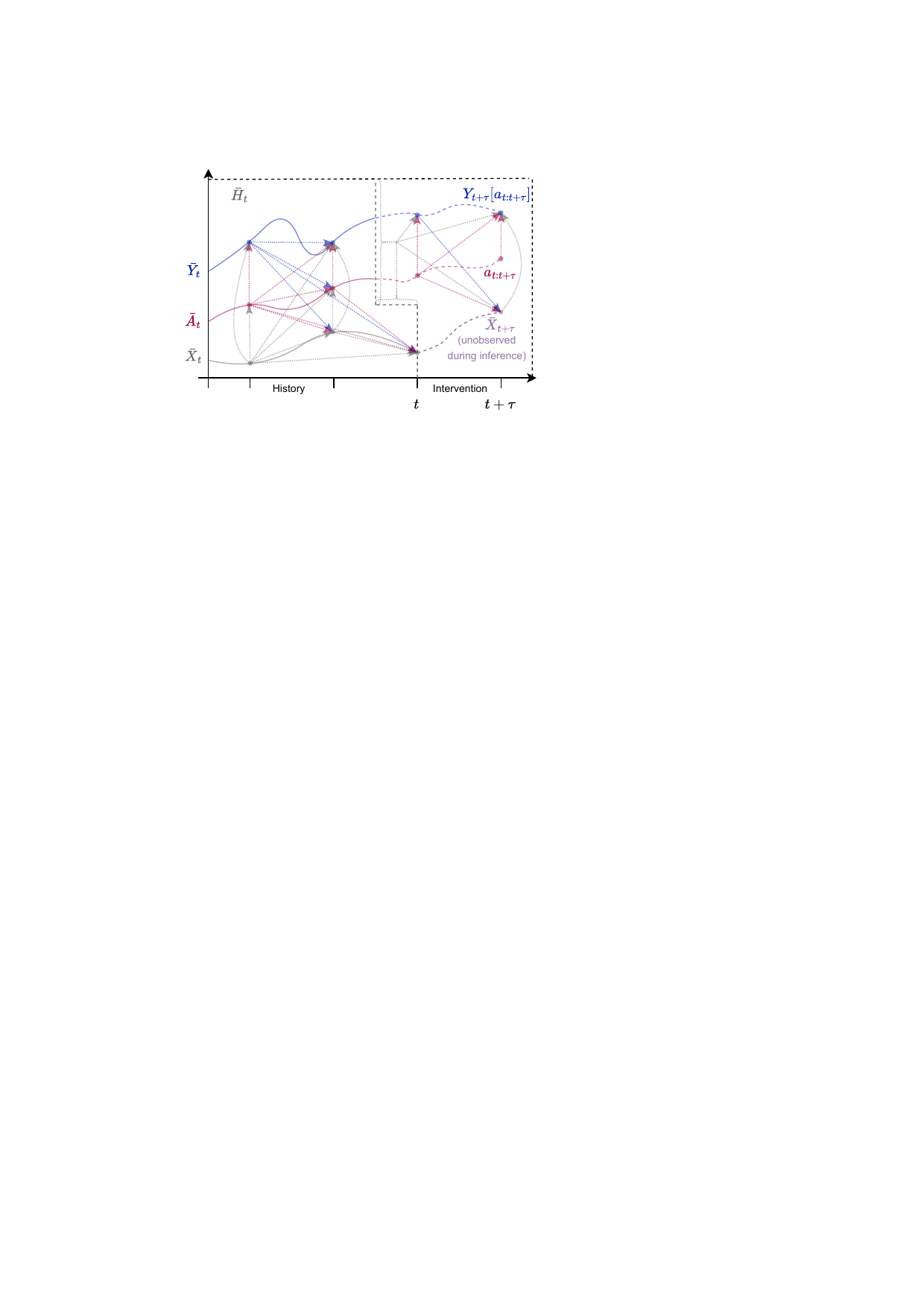}
\vspace{-0.2cm}
\caption{\textbf{Causal graph.} Shown are the observed history, the interventional treatment sequence, and the potential outcome, along with their causal connections. When intervening on future treatments, we do not observe the future covariates (\textcolor[HTML]{9673A6}{in purple}), which leads to \emph{time-varying confounding}.}
\vspace{-0.3cm}
\label{fig:time_varying_treatments}
\end{wrapfigure}

\textbf{Identifiability:} In order to ensure identifiability from observational data, we need to make the following assumptions that are standard in the literature \citep{Bica.2020, Frauen.2025, Hess.2024,  Li.2021, Melnychuk.2022}: (i)~\emph{Consistency:} Whenever the observed treatment $A_t$ equals the interventional treatment $a_t$, the observed outcome $Y_{t}$ corresponds to the potential outcome $Y_{t}[{a}_{t}]$. (ii)~\emph{Positivity:} Given a history $\bar{H}_t=\bar{h}_t$ with $\mathbb{P}(\bar{H}_t=\bar{h}_t)>0$, there is non-zero probability $\mathbb{P}(A_{t}=a_{t}\mid \bar{H}_t=\bar{h}_t)>0$ of receiving any treatment $A_{t}=a_{t}$. (iii)~\emph{Sequential ignorability:} Given a history $\bar{H}_t=\bar{h}_t$, the treatment assignment $A_{t}$ is independent of the potential outcome $Y_{t+\tau}[a_{t:t+\tau}]$.

\textbf{Time-varying confounding:} The key difficulty in estimating HTEs over time lies in \emph{time-varying confounding} whenever $\tau>0$, i.e., when we are interested in a \emph{sequence} of future treatments $a_{t:t+\tau}$ (see Fig.~\ref{fig:time_varying_treatments}). Then, at time $t$, we intervene both on the current treatment $A_t$ \textbf{and} treatments $A_{t+\delta}$ for $0<\delta\leq \tau$ that lie several time steps in the future. However, future covariates $X_{t+\delta}$ and outcomes $Y_{t+\delta-1}$, that are \emph{unobserved during inference time}, will confound the treatment assignment of $A_{t+\delta}$. This induces a feedback loop that needs to be accounted for (see Fig.~\ref{fig:time_varying_treatments}).\footnote{In the static setting, a similar issue is known as runtime confounding, where not all confounders are observed during inference time \citep{Coston.2020}.} Importantly, simply conditioning on the observed history $\bar{H}_t$ (i.e., a backdoor-type history-adjustment) is \emph{biased} in this scenario. Specifically, for $\tau>0$,
{
\begin{align}
    \underbrace{\steelblue{\E \Big[ Y_{t+\tau}[a_{t:t+\tau}] \mid \bar{H}_t = \bar{h}_t \Big]}}_{\text{proper adjustment (=our method)}} {\neq} \underbrace{\redtext{\E \Big[ Y_{t+\tau} \mid \bar{H}_t = \bar{h}_t, A_{t:t+\tau}=a_{t:t+\tau} \Big]}}_{\text{na\"ive history adjustment}},
\end{align}
}
which means that methods for the right-hand side (i.e., as in the \textbf{HA}-learner) target an incorrect estimand that is different from our causal quantity of interest. Instead, proper adjustments for time-varying confounding are required, such as in the \textbf{RA}, \textbf{IPW}, or \textbf{DR} learner. However, these adjustment strategies lead to poor performance when the interventional treatment overlap is low. 
The inverse variance weighted adjustment in the \textbf{IVW} learner tries to circumvent this issue but suffers from first-order plug-in bias from its estimated weights that propagate through all timesteps. As a remedy, we develop our novel \emph{weighted orthogonal (\textbf{WO}) meta-learner}.

\textbf{Standard nuisance functions:} To perform proper adjustments for time-varying confounding, all meta-learners rely on so-called nuisance functions; that is, functions that are not of direct interest but must be estimated accurately to enable valid estimation of the target parameter. Both existing meta-learners and, later, also our \textbf{WO}-learner rely on estimating response functions and/or propensity scores, which we define below.

\begin{definition}[Response functions and propensity scores]
\emph{For interventional treatment sequences $\bar{a}=a_{t:t+\tau}$ and $\bar{b}=b_{t:t+\tau}$, let the \textbf{response functions for CATE} be defined as}
{
\begin{align}
    \mu_j^{\bar{a},\bar{b}}\left(\bar{h}_{j}\right) = \mu_j^{\bar{a}}\left(\bar{h}_{j}\right)-\mu_j^{\bar{b}}\left(\bar{h}_{j}\right),
\end{align}
}
\emph{where $\mu_j^{\bar{a}}(\bar{h}_j),\mu_j^{\bar{b}}(\bar{h}_j)$ are the \textbf{response functions for the CAPOs} with}
{
\begin{equation}
\mu_{t+\tau}^{\bar{a}}\left(\bar{h}_{t+\tau}\right) = \E\left[ Y_{t + \tau} \mid \bar{H}_{t+ \tau} = \bar{h}_{t+ \tau}, A_{t+\tau} = a_{t+\tau} \right]
\end{equation}
}
\emph{and, recursively, for $j \in \{t, \ldots, t + \tau -1\}$, }
{
\begin{equation}
\mu_j^{\bar{a}}\left(\bar{h}_{j}\right) = \E\left[ \mu_{j+1}^{\bar{a}}(\bar{H}_{j+1}) \mid \bar{H}_{j} = \bar{h}_{j}, A_{j} = a_{j} \right].
\end{equation}
}

\emph{Further, let the \textbf{propensity scores} for $j \in \{t, \ldots, t + \tau\}$ be}
{
\begin{equation}
    \pi_j^{\bar{a}}(\bar{h}_{j}) = \mathbb{P}(A_j = a_j \mid \bar{H}_{j} = \bar{h}_{j}).
\end{equation}
}
\end{definition}

Finally, we introduce the pseudo-outcomes of the DR learner, which are a subcomponent of our weighted population risk. Here, pseudo-outcomes are variables that are estimated from nuisance functions, for which the conditional expectation equals the target causal estimand (in our case: CATE / CAPO) and, hence, enable consistent estimation.
\begin{definition}[DR pseudo-outcomes \citep{Frauen.2025}] 
    \emph{For interventional treatment sequences $\bar{a}=a_{t:t+\tau}$ and $\bar{b}=b_{t:t+\tau}$, let the \textbf{DR pseudo-outcomes for CATE} be}
{
\begin{align}
    \gamma_t^{\bar{a},\bar{b}}(\bar{Z}_{t+\tau}) = \gamma_t^{\bar{a}}(\bar{Z}_{t+\tau})-\gamma_t^{\bar{b}}(\bar{Z}_{t+\tau}),
\end{align}
}
\emph{where the corresponding \textbf{DR pseudo-outcomes for CAPO} are}
{
\begin{align}
    &\gamma_t^{\bar{a}}(\bar{Z}_{t+\tau}) = \prod_{j =t}^{t+\tau}\frac{\mathbbm{1}_{\{A_j = a_j\}}}{\pi_j^{\bar{a}}( \bar{H}_{j})} Y_{t + \tau}
    +  \sum_{j = t}^{t+\tau} \mu_j^{\bar{a}}\left(\bar{H}_{j}\right)\left(1 -  \frac{\mathbbm{1}_{\{A_j = a_j\}}}{\pi_j^{\bar{a}}( \bar{H}_{j})}\right) \prod_{k =t}^{j-1}\frac{\mathbbm{1}_{\{A_k = a_k\}}}{\pi_k^{\bar{a}}( \bar{H}_{k})}.
\end{align}
}
\end{definition}

Different from existing meta-learners, the population risk function we minimize in our \textbf{WO}-learner minimizes a \emph{weighted, Neyman-orthogonal risk} to address low-overlap regimes.

\section{Weighted orthogonal meta-learner}\label{sec:wo_learner}

\begin{wrapfigure}{r}{0.45\textwidth}
\vspace{-0.3cm}
  \centering
  \includegraphics[width=0.45\textwidth, trim=5.15cm 11.9cm 5.15cm 10.7cm, clip]{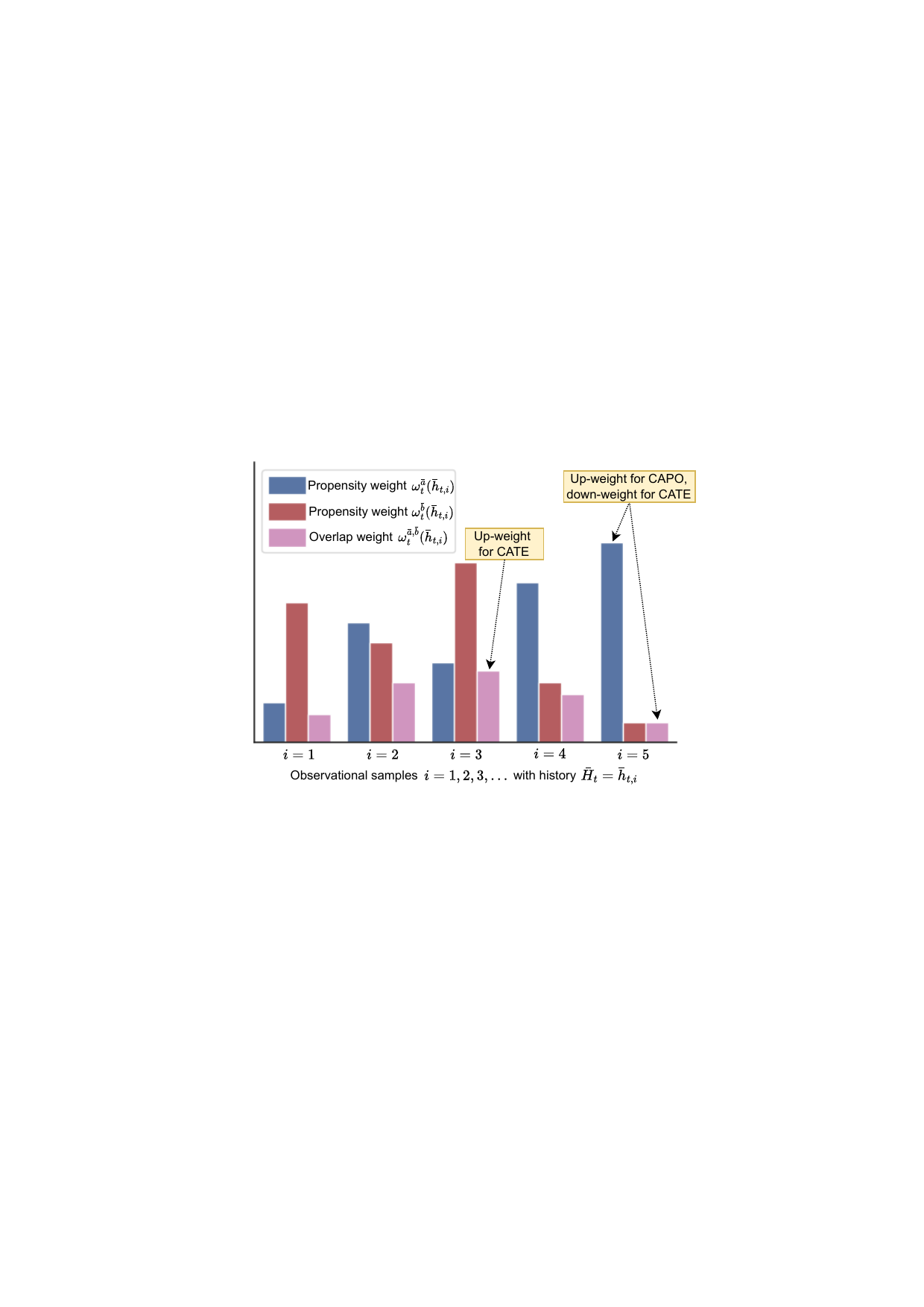}
\vspace{-0.1cm}
\caption{\textbf{Weight functions.} Our weight functions are designed to up-weight samples with large treatment overlap (CATE), or treatment propensity (CAPO). Thereby, we effectively overcome estimation variance issues in low-overlap and low-propensity regimes in a fully data-driven way.}
\vspace{-0.5cm}
\label{fig:weights}
\end{wrapfigure}

The key idea of our \textbf{WO}-learner is to up-weight samples in the training data that have a higher probability of receiving the interventional treatment sequences. By up-weighting samples with \emph{larger overlap} (in the case of CATE estimation) / \emph{larger propensity} (in the case of CAPO estimation), we ensure that we target samples that are more relevant for estimating the HTE of interest (Figure~\ref{fig:weights}). This requires non-trivial derivations to guarantee that our \textbf{WO}-learner (i)~\emph{correctly adjusts for time-varying confounding}, (ii)~\emph{minimizes the weighted oracle risk}, and (iii)~is \emph{Neyman-orthogonal} with respect to all its nuisance functions.

Below, we first introduce our novel weight functions in Section~\ref{sec:weighted_loss}.
Then, we present our pseudo-outcomes for both CATE and CAPO estimation, and develop our weighted population risk; by minimizing our population risk, we adjusts for time-varying confounding while minimizing the weighted oracle risk. Finally, we ensure Neyman-orthogonality of our population risk with respect to all nuisance functions in Section~\ref{sec:neyman_orthogonality}.

\subsection{Weighted population risk}\label{sec:weighted_loss}

We now introduce our weighted population risk for both CATE and CAPO estimation. We start by defining the weight functions, followed by our weighted population risk function. We then show that it is guaranteed to minimize the oracle risk ($\rightarrow$ Proposition~\ref{theorem:oracle_main}) and, additionally, that it properly adjusts for time-varying confounding ($\rightarrow$ Corollary~\ref{theorem:adjustment_main}). 

\begin{mdframed}[linecolor=black!60,linewidth=0.6pt,
  roundcorner=4pt,backgroundcolor=black!2,
  innerleftmargin=8pt,innerrightmargin=8pt,innertopmargin=6pt,innerbottommargin=6pt]
\begin{definition}[Weight functions]\label{def:weight_functions}
\emph{Let the \textbf{weight functions for CATE} be the \textbf{overlap weights}}
{
    \begin{equation}
        \omega_j^{\bar{a},\bar{b}}(\bar{h}_\ell) = \omega_j^{\bar{a}}(\bar{h}_\ell)\omega_j^{\bar{b}}(\bar{h}_\ell),
    \end{equation}
}
\emph{where we define the \textbf{weight functions for CAPO} as the \textbf{propensity weights}}
{
    \begin{align}
    \omega_j^{\bar{a}}(\bar{h}_\ell)
    =\E \Big[ \prod_{k=j}^{t+\tau} \pi_k^{\bar{a}}(\bar{H}_k) \;\; \Big| \;\; \bar{H}_\ell=\bar{h}_\ell \Big]
    = p(A_{j:t+\tau}=a_{j:t+\tau} \mid \bar{H}_\ell=\bar{h}_\ell)
    \end{align}
}
     \emph{for $j, \ell \in \{t, \ldots, t+\tau\}$. Finally, we summarize the \textbf{set of nuisance functions for CATE} as $\eta^{\bar{a},\bar{b}}=\eta^{\bar{a}}\cup \eta^{\bar{b}}$, with the \textbf{set of nuisance functions for CAPO} $\eta^{\bar{a}}=\{\pi_j^{\bar{a}}, \mu_j^{\bar{a}}, \omega_j^{\bar{a}}\}_{j=t}^{t+\tau}$.}
\end{definition}
\end{mdframed}

Intuitively, our weight functions work as follows. For CATE, they act as \emph{overlap weights}, i.e., they up-weight samples with a higher probability of receiving \textbf{both} interventional treatment sequences. Likewise, for CAPO, the weight functions correspond to \emph{propensity weights}. That is, they seek to up-weight samples in the data that have a higher probability of receiving a single interventional treatment sequence of interest (see Fig.~\ref{fig:weights}). Thereby, we effectively circumvent key issues of existing meta-learners that lead to highly unstable inverse propensity weights and response function estimates.


Next, we develop our weighted population risk that minimizes the oracle risk; specifically, our weighted risk function minimizes a weighted error term that up-weights samples that have a higher probability of receiving the interventional treatment sequences. Thereby, we counteract the issue in existing meta-learners (\textbf{IPW}, \textbf{RA}, \textbf{DR}) that suffer from poor data support when the overlap is low. For this, we first define the pseudo-outcomes for our \textbf{WO}-learner, which we will later need to satisfy Neyman-orthogonality in Section~\ref{sec:neyman_orthogonality}. \rebuttal{Our pseudo-outcomes directly arise by orthogonalizing our weighted oracle risk. They inherit part of the structure of DR pseudo-outcomes, part from the orthogonalization of the overlap weights, and part of their combined structure.} 

\begin{mdframed}[linecolor=black!60,linewidth=0.6pt,
  roundcorner=4pt,backgroundcolor=black!2,
  innerleftmargin=8pt,innerrightmargin=8pt,innertopmargin=6pt,innerbottommargin=6pt]
\begin{definition}[\textbf{WO} pseudo-outcomes]\label{def:pseudo_outcomes}
\emph{Let }
{
\begin{align}
    \rho_t^{\bar{a},\bar{b}}(\bar{Z}_{t+\tau}) = \rho_t^{\bar{a}}(\bar{Z}_{t+\tau})\omega_t^{\bar{b}}(\bar{H}_t)+\rho_t^{\bar{b}}(\bar{Z}_{t+\tau})\omega_t^{\bar{a}}(\bar{H}_t) - \omega_t^{\bar{a},\bar{b}}(\bar{H}_t) ,
\end{align}
}
\vspace{-0.3cm}
{
\begin{align}
    &\rho_t^{\bar{a}}(\bar{Z}_{t+\tau}) = \prod_{j=t}^{t+\tau}\pi_j^{\bar{a}}( \bar{H}_{j}) 
    +\sum_{j=t}^{t+\tau} 
    \Big(\mathbbm{1}_{\{a_j = A_j\}} - \pi_j^{\bar{a}}( \bar{H}_{j}) \Big)
     \omega_{j+1}^{\bar{a}}(\bar{H}_j)  \prod_{t\leq k < j} \pi_k^{\bar{a}}( \bar{H}_{k}).
\end{align}
}
\emph{Then, we define our \textbf{WO pseudo-outcomes for CATE} as}
{
\begin{align}
    &\steelblue{\xi_t^{\bar{a},\bar{b}}}(\bar{Z}_{t+\tau}) 
    = \mu_t^{\bar{a},\bar{b}}(\bar{H}_t)+ \frac{\omega_t^{\bar{a},\bar{b}}(\bar{H}_t)}{\rho_t^{\bar{a},\bar{b}}(\bar{Z}_{t+\tau})}\Big( \gamma_t^{\bar{a},\bar{b}}(\bar{Z}_{t+\tau}) - \mu_t^{\bar{a},\bar{b}}(\bar{H}_t)\Big)
\end{align}
}
\emph{and, likewise, the \textbf{WO pseudo-outcomes for CAPO} as}
{
\begin{align}
    \steelblue{\xi_t^{\bar{a}}}(\bar{Z}_{t+\tau}) 
    = \mu_t^{\bar{a}}(\bar{H}_t) + \frac{\omega_t^{\bar{a}}(\bar{H}_t)}{\rho_t^{\bar{a}}(\bar{Z}_{t+\tau})}\Big( \gamma_t^{\bar{a}}(\bar{Z}_{t+\tau}) - \mu_t^{\bar{a}}(\bar{H}_t)\Big).
\end{align}
}
\end{definition}
\end{mdframed}

We now state our first theorem, which guarantees that our \emph{weighted population risk minimizes the weighted oracle risk}. That is, our weighted population risk assigns the appropriate weights from Definition~\ref{def:weight_functions} while correctly adjusting for time-varying confounding. 

\begin{theorem}[Weighted population risk]\label{theorem:oracle_main}
Let $\circ \in \{ (\bar{a}, \bar{b}),\bar{a}\}$ for CATE and CAPO, respectively. Then, the population risk function
{
    \begin{align}
        \mathcal{L}(g;\eta^{\circ})
        = \frac{ 1}{\E\Big[\omega_t^{\circ}(\bar{H}_t)\Big]} 
        \E \Bigg[ \rho_t^{\circ}(\bar{Z}_{t+\tau})  \Big( \steelblue{\xi_t^{\circ}}(\bar{Z}_{t+\tau}) - g(\bar{H}_t)\Big)^2
          \Bigg]
    \end{align}
}
minimizes the oracle risk
{
\begin{align}
    \mathcal{L}^*(g;\eta^{\circ}) =\frac{ 1}{\E\Big[\omega_t^{\circ}(\bar{H}_t)\Big]} \E\Bigg[ \omega_t^{\circ}(\bar{H}_t)\Big(\mu_t^{\circ}(\bar{H}_t)-g(\bar{H}_t)\Big)^2 \Bigg].
\end{align}
}
\end{theorem}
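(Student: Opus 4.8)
The plan is to show that $\mathcal{L}(g;\eta^{\circ})$ and $\mathcal{L}^*(g;\eta^{\circ})$ coincide up to an additive term that does not depend on $g$; since the oracle risk is a weighted $L^2$-distance and is therefore minimized at $g=\mu_t^{\circ}$, this immediately gives that $\mathcal{L}(\cdot;\eta^{\circ})$ is minimized at the same $g$, which is the content of the theorem. I would handle both cases $\circ\in\{(\bar a,\bar b),\bar a\}$ at once, because the entire argument reduces to two conditional-moment identities: (i)~$\E[\rho_t^{\circ}(\bar Z_{t+\tau})\mid \bar H_t]=\omega_t^{\circ}(\bar H_t)$, and (ii)~$\E[\gamma_t^{\circ}(\bar Z_{t+\tau})\mid \bar H_t]=\mu_t^{\circ}(\bar H_t)$.

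I would first establish (i), the only structurally nontrivial step. For the CAPO case, I split $\rho_t^{\bar a}$ into its leading product term $\prod_{j=t}^{t+\tau}\pi_j^{\bar a}(\bar H_j)$ and the sum of correction terms. Each correction term for index $j$ carries the factor $\mathbbm{1}_{\{a_j=A_j\}}-\pi_j^{\bar a}(\bar H_j)$ multiplied by $\omega_{j+1}^{\bar a}(\bar H_j)\prod_{t\le k<j}\pi_k^{\bar a}(\bar H_k)$, which is $\bar H_j$-measurable. Since $\pi_j^{\bar a}(\bar H_j)=\mathbb P(A_j=a_j\mid\bar H_j)$, the factor $\mathbbm{1}_{\{a_j=A_j\}}-\pi_j^{\bar a}(\bar H_j)$ has conditional mean zero given $\bar H_j$; as $\{\bar H_j\}_{j\ge t}$ is an increasing filtration containing $\bar H_t$, the tower property makes each correction term vanish in $\E[\,\cdot\mid\bar H_t]$. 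Only the leading product survives, and $\E[\prod_{j=t}^{t+\tau}\pi_j^{\bar a}(\bar H_j)\mid\bar H_t]=\omega_t^{\bar a}(\bar H_t)$ holds by the very definition of the weight function in Definition~\ref{def:weight_functions}. For the CATE case, (i) then follows directly: using $\bar H_t$-measurability of $\omega_t^{\bar a}$ and $\omega_t^{\bar b}$ together with the CAPO identity, $\E[\rho_t^{\bar a}\omega_t^{\bar b}+\rho_t^{\bar b}\omega_t^{\bar a}-\omega_t^{\bar a,\bar b}\mid\bar H_t]=\omega_t^{\bar b}\omega_t^{\bar a}+\omega_t^{\bar a}\omega_t^{\bar b}-\omega_t^{\bar a,\bar b}=\omega_t^{\bar a,\bar b}$. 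Identity (ii) is the defining unbiasedness property of the DR pseudo-outcomes at the oracle nuisances (cf.\ the DR pseudo-outcome definition and \citet{Frauen.2025}), and for CATE it follows by linearity from the CAPO version.

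With (i) and (ii) in hand the remainder is an algebraic expansion. Writing $\mu,\omega,\rho,\gamma$ for the corresponding quantities and recalling $\xi_t^{\circ}=\mu+\tfrac{\omega}{\rho}(\gamma-\mu)$, I would expand $\rho(\xi_t^{\circ}-g)^2=\rho(\mu-g)^2+2\,\omega\,(\mu-g)(\gamma-\mu)+\tfrac{\omega^2}{\rho}(\gamma-\mu)^2$, where the factor $\rho$ cancels against $\tfrac{\omega}{\rho}$ in the cross term. Taking expectations and conditioning on $\bar H_t$: the first term equals $\E[\omega(\mu-g)^2]$ by (i), since $(\mu-g)^2$ is $\bar H_t$-measurable; the cross term vanishes by (ii), since $\mu,g,\omega$ are $\bar H_t$-measurable and $\E[\gamma-\mu\mid\bar H_t]=0$; and the last term $\E[\tfrac{\omega^2}{\rho}(\gamma-\mu)^2]$ is independent of $g$. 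Dividing by $\E[\omega_t^{\circ}(\bar H_t)]$ yields $\mathcal{L}(g;\eta^{\circ})=\mathcal{L}^*(g;\eta^{\circ})+c$ with $c$ not depending on $g$, which proves the claim.

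The main obstacle I anticipate is identity (i): the martingale-difference telescoping that shows the correction terms of $\rho_t^{\bar a}$ integrate out, leaving precisely the product of propensities whose conditional expectation is the overlap (resp.\ propensity) weight. Some care is also needed to justify that the residual constant $\E[\tfrac{\omega^2}{\rho}(\gamma-\mu)^2]$ is finite and that $\xi_t^{\circ}$ is well-defined, which is where positivity is invoked to keep $\rho$ bounded away from zero in the relevant region; but since we only compare minimizers, it is enough that this term be finite and $g$-free.
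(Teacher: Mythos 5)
Your proposal is correct and takes essentially the same route as the paper: your identities (i) and (ii) are exactly the paper's Lemmas~\ref{lemma:expectation_rho} and \ref{lemma:expectation_gamma} (with the CATE analogues, Lemmas~\ref{lemma:expectation_rho_cate} and \ref{lemma:expectation_gamma_cate}, obtained by the same product/linearity arithmetic you use), and the paper's proofs of Theorems~\ref{theorem:oracle} and \ref{theorem:oracle_cate} perform the same completion-of-square argument, using $\rho_t^{\circ}(\xi_t^{\circ}-\mu_t^{\circ})=\omega_t^{\circ}(\gamma_t^{\circ}-\mu_t^{\circ})$ so the cross term vanishes by $\E[\gamma_t^{\circ}-\mu_t^{\circ}\mid\bar H_t]=0$ and the quadratic term becomes $\E[\omega_t^{\circ}(\mu_t^{\circ}-g)^2]$ by $\E[\rho_t^{\circ}\mid\bar H_t]=\omega_t^{\circ}$. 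The only cosmetic difference is that you split $\xi_t^{\circ}-g$ directly as $(\mu_t^{\circ}-g)+\tfrac{\omega_t^{\circ}}{\rho_t^{\circ}}(\gamma_t^{\circ}-\mu_t^{\circ})$ while the paper adds and subtracts $\mu_t^{\circ}$ inside the square, which is the same algebra.
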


\begin{proof}
    The proof for CAPO can be found in Supplement~\ref{sec:proof_capo} and for CATE in Supplement~\ref{sec:proof_cate}. Therein, we derive several helping lemmas, including the result that $\E[\rho_t^{\circ}(\bar{Z}_{t+\tau})|\bar{H}_t]=\omega_t^{\circ}(\bar{H}_t)$, which we leverage to prove our main theorem.
\end{proof}

Finally, we show that minimizing our weighted population risk guarantees that we target the correct estimand and, therefore, that our WO-learner \emph{adjusts for time-varying confounding}.

\begin{corollary}[Time-varying adjustment]\label{theorem:adjustment_main}
The minimizer of the weighted population risk $\mathcal{L}(g; \eta^{\circ})$ adjusts for time-varying confounding.
\end{corollary}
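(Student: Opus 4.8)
The plan is to chain the two results that precede this corollary. Theorem~\ref{theorem:oracle_main} already shows that the minimizer of our weighted population risk $\mathcal{L}(g;\eta^{\circ})$ coincides with the minimizer of the oracle risk $\mathcal{L}^*(g;\eta^{\circ})$. So the corollary reduces to two further observations: (i) the minimizer of the oracle risk is the response function $\mu_t^{\circ}$, and (ii) $\mu_t^{\circ}$ is the proper, confounding-adjusted causal estimand rather than the na\"ive history-adjusted quantity. Step (ii) is exactly what ``adjusting for time-varying confounding'' means here.

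First I would establish (i) by a pointwise $L^2$ argument. Since $\omega_t^{\circ}$, $\mu_t^{\circ}$, and the candidate $g$ are all functions of $\bar{H}_t$ only, the oracle risk satisfies $\mathcal{L}^*(g;\eta^{\circ}) \propto \E\big[\omega_t^{\circ}(\bar{H}_t)\,(\mu_t^{\circ}(\bar{H}_t) - g(\bar{H}_t))^2\big]$, where the proportionality constant $1/\E[\omega_t^{\circ}(\bar{H}_t)]$ is positive and does not affect the minimizer. The integrand separates over the support of $\bar{H}_t$, and for each fixed $\bar{h}_t$ it is a nonnegative quadratic in $g(\bar{h}_t)$ minimized at $g(\bar{h}_t)=\mu_t^{\circ}(\bar{h}_t)$. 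The positivity assumption guarantees that every propensity score is strictly positive, hence the weights $\omega_t^{\circ}$ (which are products and conditional expectations of propensity scores, and for CATE the product $\omega_t^{\bar{a}}\omega_t^{\bar{b}}$) are strictly positive on the support. This makes the pointwise minimizer unique, so $g^{\star}=\mu_t^{\circ}$ on the whole support; combined with Theorem~\ref{theorem:oracle_main}, the minimizer of $\mathcal{L}(g;\eta^{\circ})$ is $\mu_t^{\circ}$.

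The substantive content is step (ii). For CAPO ($\circ=\bar{a}$) I would unroll the recursive definition of $\mu_t^{\bar{a}}$ and show by backward induction over $j=t+\tau,\ldots,t$ that the nested conditional expectations coincide with the g-computation functional. At each step I apply consistency to replace the observed outcome by the potential outcome on the event $\{A_j=a_j\}$, then invoke sequential ignorability to remove the conditioning on $A_j=a_j$ inside the inner expectation, using positivity to ensure the conditioning events have positive probability. This yields $\mu_t^{\bar{a}}(\bar{h}_t)=\E[Y_{t+\tau}[a_{t:t+\tau}]\mid \bar{H}_t=\bar{h}_t]$, the target CAPO. The CATE case follows immediately by linearity, since $\mu_t^{\bar{a},\bar{b}}=\mu_t^{\bar{a}}-\mu_t^{\bar{b}}$ and the minimizer inherits this difference.

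The hard part will be the backward-induction identification: one must apply sequential ignorability to the correct potential outcome at each horizon and check that the nesting of conditional expectations over the \emph{growing} history $\bar{H}_j$ matches the g-computation formula exactly. It is precisely this sequential conditioning---rather than conditioning on the whole treatment block $A_{t:t+\tau}=a_{t:t+\tau}$ at once---that eliminates time-varying confounding and separates our minimizer from the biased na\"ive history adjustment $\E[Y_{t+\tau}\mid \bar{H}_t,A_{t:t+\tau}=a_{t:t+\tau}]$. Once this identity is in place, the corollary is immediate.
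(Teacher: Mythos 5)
Your argument is correct and follows essentially the same route as the paper's one-line proof: invoke Theorem~\ref{theorem:oracle_main} to pass to the oracle risk, then use strict positivity of the weights $\omega_t^{\circ}(\bar{H}_t)>0$ (guaranteed by the positivity assumption, since $\omega_t^{\circ}$ is built from products and conditional expectations of strictly positive propensity scores) to conclude that the unique minimizer is $g=\mu_t^{\circ}$, the target estimand. Your additional step (ii)---the backward-induction G-computation identification of $\mu_t^{\circ}$ with the CAPO/CATE under consistency, positivity, and sequential ignorability---is sound but not part of the paper's proof, which treats this identification as definitional under the stated identifiability assumptions (it is the standard G-formula result from the literature the paper builds on).
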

\begin{proof}
We leverage Theorem~\ref{theorem:oracle_main} and notice that, since $\omega_t^{\circ}(\bar{H}_t)>0$ by positivity, $\mathcal{L}^*(g; \eta^{\circ})$ (and, hence, $\mathcal{L}(g; \eta^{\circ})$) is minimized if and only if $g=\mu_t^{\circ}$, which is exactly the target estimand.
\end{proof}

\textbf{Weighted orthogonal learning:} We summarize the training of our \textbf{WO}-learner with the \emph{empirical} weighted risk in Algorithm~\ref{algorithm:estimation}. For this, we first learn the response functions $\hat{\mu}_j^{\bar{a}}$ and the propensity scores $\hat{\pi}_j^{\bar{a}}$, $j\in\{t,\ldots,t+\tau\}$, and then, using the pull-out property of expectations, the weights via
{
\begin{align}\label{eq:estimate_weights}
 &\hat{\omega}_j^{\bar{a}}(\bar{h}_j)
    =\hat{\E} \bigg[ \prod_{k=j}^{t+\tau} \hat{\pi}_k^{\bar{a}}(\bar{H}_k) \;\; \Big| \;\; \bar{H}_j=\bar{h}_j \bigg]  
    = \hat{\E} \bigg[ \prod_{k=j+1}^{t+\tau} \hat{\pi}_k^{\bar{a}}(\bar{H}_k) \;\; \Big| \;\; \bar{H}_j=\bar{h}_j \bigg]   \hat{\pi}_j^{\bar{a}}(\bar{h}_j).
\end{align}
}

\subsection{Neyman-orthogonality}\label{sec:neyman_orthogonality}

\begin{wrapfigure}{R}{0.4\textwidth}
\begin{minipage}{0.4\textwidth}
\vspace{-1.9cm} 
\begin{algorithm}[H]
{\small
\textbf{Input:} Data $\mathcal{D}_n=\{\bar{Z}_{t+\tau,i}\}_{i=1}^n$, nuisance function estimators $\hat{\eta}^{\circ}$, parametric second-stage estimator $\hat{g}_\theta$, sample split $\lambda \in (0,1)$.

\vspace{1mm}
}
{\small
\begin{algorithmic}[1]
\STATE Perform sample split $\mathcal{D}_{\lceil (1-\lambda) n\rceil}^\eta$, $\mathcal{D}_{\lfloor \lambda n\rfloor}^{g}$
\STATE Learn nuisance functions $\hat{\eta}^{\circ}$ on $\mathcal{D}_{\lceil (1-\lambda) n\rceil}^\eta$ and evaluate on $\mathcal{D}_{\lfloor \lambda n\rfloor}^{g}$
\STATE Construct $\hat{\gamma}_t^{\circ}$ and $\hat{\rho}_t^{\circ}$ from evaluated nuisance estimators
\STATE Construct \textbf{WO} pseudo outcomes $\steelblue{\hat{\xi}_t^{\circ}}$
\STATE Minimize empirical weighted risk
{\tiny
\begin{align}
        \hat{\mathcal{L}}(\hat{g}_\theta;\eta^{\circ})
        = \frac{1}{\sum_{i=1}^{\lfloor \lambda n\rfloor}\hat{\omega}_t^{\circ}(\bar{H}_{t,i})}
         \sum_{i=1}^{\lfloor \lambda n\rfloor} \Bigg[ \hat{\rho}_t^{\circ}(\bar{Z}_{t+\tau,i}) \nonumber\\
        \quad\times\Big(\steelblue{\hat{\xi}_t^{\circ}}(\bar{Z}_{t+\tau,i}) - \hat{g}_\theta(\bar{H}_{t,i})\Big)^2
          \Bigg]\nonumber
\end{align}
}
w.r.t. $\theta$ on $\mathcal{D}_{\lfloor \lambda n\rfloor}^{g}$ (e.g., with gradient descent).
\STATE \textbf{Return} optimized \textbf{WO}-learner $\hat{g}_\theta$
\end{algorithmic}
}
\caption{\textbf{WO} learning}\label{algorithm:estimation}
\end{algorithm}
\vspace{-1cm} 
\end{minipage}
\end{wrapfigure}

In the following, we show that our weighted population risk from Theorem~\ref{theorem:oracle_main} is Neyman-orthogonal with respect to all its nuisance functions. This is \emph{different} from \textbf{IPW}, \textbf{RA}, and \textbf{IVW}, all of which suffer from severe plug-in bias, which means that estimation errors in their estimated nuisance functions propagate as first-order biases into their final estimated pseudo-outcomes \citep{Kennedy.2022}. Bias propagation is \emph{even more severe in the time-varying setting}, where multiple nuisance functions are learned on top of each other, and incorrectly estimated nuisance functions in earlier stages lead to even worse bias in nuisance functions at later stages (and, hence, the final target estimate). In contrast, our \textbf{WO}-learner is \emph{robust against estimation errors in its nuisance functions}; $\Rightarrow$ bias from nuisance function estimates only propagates as lower-order errors to the final HTE estimate. \rebuttal{Importantly, this includes estimation errors from the weights as in \Eqref{eq:estimate_weights}.}

If we directly minimize the oracle risk $\mathcal{L}^{*}(g; \eta^{\circ})$ as in Theorem~\ref{theorem:oracle_main}, we do \textbf{not} achieve Neyman-orthogonality with respect to the nuisance functions $\eta^{\circ}$. Instead, we need our tailored population risk $\mathcal{L}(g; \eta^{\circ})$.

\begin{theorem}[Neyman-orthogonality]\label{theorem:orthogonality_main}
    Let $\circ \in \{ (\bar{a}, \bar{b}),\bar{a}\}$ for CATE and CAPO, respectively. The weighted population risk
    {
    \begin{align}
        \mathcal{L}(g;\eta^{\circ})
        = \frac{ 1}{\E\Big[\omega_t^{\circ}(\bar{H}_t)\Big]} 
        \E \Bigg[ \rho_t^{\circ}(\bar{Z}_{t+\tau})  \Big( \steelblue{\xi_t^{\circ}}(\bar{Z}_{t+\tau}) - g(\bar{H}_t)\Big)^2
          \Bigg]
    \end{align}
    }
    is Neyman-orthogonal with respect to all nuisance functions $\eta^{\circ}$.
\end{theorem}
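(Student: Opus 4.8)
The plan is to verify Neyman-orthogonality in its standard form: writing the population risk as $\mathcal{L}(g;\eta^{\circ})$, I must show that the mixed Gateaux derivative $\partial_r\,\partial_g\,\mathcal{L}\big(g^*+s\,v;\,\eta^*+r(\eta-\eta^*)\big)$ evaluated at $s=r=0$ vanishes for every admissible target direction $v$ and every nuisance perturbation $\eta-\eta^*$, where $g^*=\mu_t^{\circ}$ is the minimizer from Theorem~\ref{theorem:oracle_main} and $\eta^*$ denotes the true nuisances. I would focus on CAPO ($\circ=\bar{a}$); the CATE case follows by the same argument applied to the decomposition of $\rho_t^{\bar{a},\bar{b}}$ and $\xi_t^{\bar{a},\bar{b}}$.

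First I would simplify the integrand. The crucial algebraic observation is that multiplying out the pseudo-outcome cancels the $1/\rho_t^{\bar{a}}$ factor:
\begin{align}
\rho_t^{\bar{a}}(\xi_t^{\bar{a}}-g)=\rho_t^{\bar{a}}\big(\mu_t^{\bar{a}}-g\big)+\omega_t^{\bar{a}}\big(\gamma_t^{\bar{a}}-\mu_t^{\bar{a}}\big).
\end{align}
Hence the first-order condition (score) in direction $v$ is proportional to $\E\big[\big(\rho_t^{\bar{a}}(\mu_t^{\bar{a}}-g)+\omega_t^{\bar{a}}(\gamma_t^{\bar{a}}-\mu_t^{\bar{a}})\big)\,v(\bar{H}_t)\big]$. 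At the truth, $g^*=\mu_t^{\bar{a}}$ kills the first summand pointwise, and the DR property $\E[\gamma_t^{\bar{a}}\mid\bar{H}_t]=\mu_t^{\bar{a}}$ makes $\E[\omega_t^{\bar{a}}(\gamma_t^{\bar{a}}-\mu_t^{\bar{a}})v]=0$, since $\omega_t^{\bar{a}}$ and $v$ are $\bar{H}_t$-measurable. Thus the score vanishes at $(g^*,\eta^*)$, which lets me discard the derivative of the normalizing constant $1/\E[\omega_t^{\bar{a}}]$ by the product rule.

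Next I would differentiate the score in $r$ and split the nuisance set $\{\pi_j^{\bar{a}},\mu_j^{\bar{a}},\omega_j^{\bar{a}}\}$. Perturbations of the weights $\omega_j^{\bar{a}}$ are immediate: for $j>t$ the weight enters only through $\rho_t^{\bar{a}}$, which is multiplied by $(\mu_t^{\bar{a}}-g^*)=0$; for $j=t$ the derivative produces a term $\E[\Delta\omega_t^{\bar{a}}(\gamma_t^{\bar{a}}-\mu_t^{\bar{a}})v]$, which again vanishes by the DR conditional-mean-zero property. For the response $\mu_t^{\bar{a}}$ (the $j=t$ nuisance), the derivative of $\rho_t^{\bar{a}}(\mu_t^{\bar{a}}-g^*)$ gives $\rho_t^{\bar{a}}\,\Delta\mu_t^{\bar{a}}$, while the derivative of $\omega_t^{\bar{a}}(\gamma_t^{\bar{a}}-\mu_t^{\bar{a}})$ contributes $-\omega_t^{\bar{a}}\,\Delta\mu_t^{\bar{a}}$; using the helper identity $\E[\rho_t^{\bar{a}}\mid\bar{H}_t]=\omega_t^{\bar{a}}$ from the proof of Theorem~\ref{theorem:oracle_main}, these two terms cancel exactly. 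What remains is $\E[\omega_t^{\bar{a}}\,\partial_r\gamma_t^{\bar{a}}\,v]$, i.e.\ the sensitivity of the nested DR pseudo-outcome to perturbations of $\pi_j^{\bar{a}}$ and $\mu_j^{\bar{a}}$ for $j>t$.

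The main obstacle is therefore the final reduction: showing $\E[\partial_r\gamma_t^{\bar{a}}\mid\bar{H}_t]=0$, which then gives $\E[\omega_t^{\bar{a}}\,\partial_r\gamma_t^{\bar{a}}\,v]=\E\big[\omega_t^{\bar{a}}\,v\,\E[\partial_r\gamma_t^{\bar{a}}\mid\bar{H}_t]\big]=0$. This is the classical double-robustness of $\gamma_t^{\bar{a}}$ transported to the time-varying setting, and it is where the recursive structure must be handled carefully. I would prove it by differentiating the nested sum/product definition of $\gamma_t^{\bar{a}}$ term by term and repeatedly applying the tower property together with the recursion $\mu_j^{\bar{a}}(\bar{h}_j)=\E[\mu_{j+1}^{\bar{a}}(\bar{H}_{j+1})\mid\bar{H}_j,A_j=a_j]$ and $\E[\mathbbm{1}_{\{A_j=a_j\}}\mid\bar{H}_j]=\pi_j^{\bar{a}}$: each first-order $\Delta\pi_j^{\bar{a}}$ term is matched by the adjacent $\Delta\mu_j^{\bar{a}}$/$\Delta\mu_{j+1}^{\bar{a}}$ terms so that the contributions telescope to zero. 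I expect the bookkeeping of these cancellations across all $j\in\{t,\dots,t+\tau\}$ to be the technically heaviest step; organizing it as a single backward induction over $j$, peeling off the innermost conditional expectation at each stage, should keep it manageable. Finally, for CATE I would lift the CAPO result through the bilinear structure of $\xi_t^{\bar{a},\bar{b}}$, noting that $\gamma_t^{\bar{a},\bar{b}}=\gamma_t^{\bar{a}}-\gamma_t^{\bar{b}}$ inherits orthogonality from each component.
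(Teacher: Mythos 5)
Your proposal is correct and follows essentially the same route as the paper's own proof (Theorem~\ref{theorem:orthogonality} for CAPO and Theorem~\ref{theorem:orthogonality_cate} for CATE): the same algebraic cancellation $\rho_t^{\circ}(\xi_t^{\circ}-g)=\rho_t^{\circ}(\mu_t^{\circ}-g)+\omega_t^{\circ}(\gamma_t^{\circ}-\mu_t^{\circ})$, the same case split over the nuisances $\{\pi_j^{\circ},\mu_j^{\circ},\omega_j^{\circ}\}$, and the same two conditional-mean identities $\E[\gamma_t^{\circ}\mid\bar{H}_t]=\mu_t^{\circ}$ (Lemma~\ref{lemma:expectation_gamma}) and $\E[\rho_t^{\circ}\mid\bar{H}_t]=\omega_t^{\circ}$ (Lemma~\ref{lemma:expectation_rho}), including the exact cancellation of the $\rho_t\,\Delta\mu_t$ and $-\omega_t\,\Delta\mu_t$ terms and the lifting of CATE from CAPO by linearity. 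The only minor deviations are that you evaluate the cross-derivative at $g^{*}=\mu_t^{\circ}$ and plan to prove $\E[\partial_r\gamma_t^{\bar{a}}\mid\bar{H}_t]=0$ by an explicit telescoping induction, where the paper instead lets the relevant terms vanish at generic $g$ by invoking that $\gamma_t^{\bar{a}}$ and $\rho_t^{\bar{a}}$ are uncentered efficient influence functions (Lemmas~\ref{lemma:orthogonality_gamma} and \ref{lemma:orthogonality_rho}, the latter derived explicitly, the former cited); your score-at-the-truth justification for discarding the derivative of the normalizer $1/\E[\omega_t^{\circ}(\bar{H}_t)]$ is, if anything, slightly more careful than the paper's use of ``$\propto$''.
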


\begin{proof}
    The proof for CAPO is in Supplement~\ref{sec:proof_capo} and for CATE in Supplement~\ref{sec:proof_cate}. Therein, we first calculate ${D_g\mathcal{L}(g;\eta^{\circ})[\hat{g}-g]}$, i.e., the path-wise derivative of $\mathcal{L}(g;\eta^{\circ})$ w.r.t. the target parameter $g$. Then, to establish Neyman orthogonality, we check that the cross-derivative with respect to any nuisance function vanishes, i.e., that the second-order derivative $D_{h_j}D_g \mathcal{L}(g;\eta^{\circ})[\hat{g}-g, \hat{h}_j^\circ-h_j^\circ]=0$  for all $h_j\in \eta^\circ$. Intuitively, this ensures that the influence of nuisance function estimation errors enters only at second order, which makes the score function locally robust to small perturbations (i.e., estimation errors) of $\eta^\circ$.
\end{proof}

$\bullet$~\textbf{Remark 1:}~For a single-step-ahead prediction $\tau=0$ (i.e., when there is \textbf{no} time-varying confounding as in the static setting), the R-learner \citep{Nie.2021} has the same overlap weights as our \textbf{WO}-learner for CATE. In other words, our \textbf{WO}-learner for CATE is a \textit{non-trivial} generalization of the R-learner to the time-series setting. 
We show this property formally in Supplement~\ref{appendix:r_learner}. 

\rebuttal{$\bullet$~\textbf{Remark 2:}~Further, we show in Supplement~\ref{appendix:overlap_weights} that our overlap weights lead to \emph{uniformly bounded variance of our pseudo-outcomes}, which enables \emph{stable estimation even in low-overlap regimes}.}

\textbf{Implementation:} All meta-learners, including our \textbf{WO}-learner, can be implemented with any state-of-the-art neural network. Analogous to \citet{Frauen.2025}, our main results in Section~\ref{sec:experiments} are with transformers \citep{Vaswani.2017} as neural backbones for both the nuisance function estimators and the second-stage estimators. To ensure a fair comparison, all nuisance estimators and second-stage regressions share the same transformer-based architecture (\textbf{implementation details} are in Supplement~\ref{sec:implementation_details}). \rebuttal{In our experiments, we evaluate our \textbf{WO}-learner against the entire family of meta-learners, and hence the comparison is \emph{exhaustive}. As model-based methods are instantiations of specific meta-learners, including (or any specific model-architecture) them would distort the comparison which meta-learner is the best-performing strategy, and make the evaluation fundamentally unfair.} Following \citet{Frauen.2025}, we highlight that these implementations serve as an example; the optimal model architecture depends on many different factors such as sample size or data dimensionality \citep{Curth.2021}.


\section{Numerical experiments}\label{sec:experiments}

In this section, we empirically evaluate our \textbf{WO}-learner against existing meta-learners. The purpose of our experiments 
(i)~to \emph{verify the theoretical insights} developed in Section~\ref{sec:wo_learner}; and (ii)~to show that \emph{our WO-learner consistently improves upon standard meta-learners} across a diverse set of settings with low overlap or where Neyman-orthogonality is crucial such as limited sample size and complex nuisances. 
Our main results are with transformer instantiations (see Supplement~\ref{sec:implementation_details} for \emph{implementation details}, \emph{hyperparameters}, and \emph{runtime}). Further, we provide \emph{additional ablations with LSTM instantiations}. All experiments are repeated over five different seeds.

\textbf{Datasets:} We provide experimental results across several datasets, including synthetic, semi-synthetic, and real-world data. $\bullet$\,We simulate four \textbf{synthetic datasets}, where we isolate different complexities for CATE estimation in the time-varying setting. $\bullet$\,We then show that our \textbf{WO}-learner can deal with real-world covariates, and provide experiments on \textbf{semi-synthetic data} based on the MIMIC-III dataset \citep{Johnson.2016}. Different from observational data, the advantages of both synthetic and semi-synthetic data are that we have access to the \emph{ground-truth CATEs} and, thereby, can \emph{correctly validate all meta-learners} \citep{Poinsot.2025}. $\bullet$\,We provide results on a \textbf{real-world observational dataset} in Supplement~\ref{sec:rwd}. 

$\bullet$\,\underline{\textbf{Synthetic data:}}
In order to \emph{isolate the different complexities in the time-varying setting}, we run experiments on different synthetic datasets $\mathcal{D}^*\in\{\mathcal{D}^\gamma, \mathcal{D}^\pi, \mathcal{D}^\mu, \mathcal{D}^N\}$. Therein, we show that our \textbf{WO}-learner \textbf{(1)}~benefits from its overlap weights in low-overlap regimes ($\mathcal{D}^\gamma$); \textbf{(2)}~has crucial advantages over propensity based-methods when the propensity score function is complex ($\mathcal{D}^\pi$); \textbf{(3)}~outperforms regression adjustments when the response function is complex ($\mathcal{D}^\mu$); and \textbf{(4)}~remains robust even in low-sample settings ($\mathcal{D}^N$). On a high level, \textbf{(1)} and \textbf{(2)} highlight the importance of our overlap weights, while \textbf{(2)}--\textbf{(4)} show the benefits of Neyman-orthogonality. All experiments are conducted for multi-step-ahead predictions, and all datasets have time-varying confounding. Details about the data-generating processes are in Supplement~\ref{sec:dgp_synth}.

\begin{table}[h]
\vspace{-0.5cm}
\centering
\resizebox{\textwidth}{!}{%
\begin{tabular}{lccccccccccccc}
\toprule
Overlap $\gamma$ & 0.5 & 1.0 & 1.5 & 2.0 & 2.5 & 3.0 & 3.5 & 4.0 & 4.5 & 5.0 & 5.5 & 6.0 & 6.5 \\
\midrule
(a)~\textbf{HA}   & $0.17 \pm 0.03$ & $0.19 \pm 0.05$ & $0.20 \pm 0.07$ & $0.21 \pm 0.07$ & $0.23 \pm 0.05$ & $0.22 \pm 0.07$ & $0.24 \pm 0.06$ & $0.36 \pm 0.06$ & $0.22 \pm 0.04$ & $0.22 \pm 0.04$ & $0.25 \pm 0.03$ & $0.25 \pm 0.03$ & $0.25 \pm 0.03$ \\
(b)~\textbf{RA}   & $0.10 \pm 0.04$ & $0.11 \pm 0.04$ & $0.09 \pm 0.02$ & $0.11 \pm 0.03$ & $0.10 \pm 0.02$ & $0.10 \pm 0.02$ & $0.11 \pm 0.03$ & $0.12 \pm 0.04$ & $0.09 \pm 0.03$ & $0.11 \pm 0.03$ & $0.09 \pm 0.02$ & $0.10 \pm 0.04$ & $0.09 \pm 0.03$ \\
(c)~\textbf{IPW}    & $0.09 \pm 0.02$ & $0.10 \pm 0.04$ & $0.13 \pm 0.04$ & $0.10 \pm 0.04$ & $0.12 \pm 0.05$ & $0.19 \pm 0.08$ & $0.30 \pm 0.20$ & $0.70 \pm 0.76$ & $0.28 \pm 0.15$ & $0.33 \pm 0.12$ & $0.45 \pm 0.29$ & $0.67 \pm 0.32$ & $0.47 \pm 0.54$ \\
(d)~\textbf{DR}     & $0.06 \pm 0.01$ & $0.06 \pm 0.01$ & $0.08 \pm 0.04$ & $0.08 \pm 0.01$ & $0.10 \pm 0.04$ & $0.13 \pm 0.05$ & $0.13 \pm 0.03$ & $0.26 \pm 0.22$ & $0.17 \pm 0.07$ & $0.17 \pm 0.10$ & $0.20 \pm 0.10$ & $0.32 \pm 0.15$ & $0.20 \pm 0.10$ \\
(e)~\textbf{IVW}  & $0.06 \pm 0.01$ & $0.05 \pm 0.02$ & $0.07 \pm 0.04$ & $0.07 \pm 0.03$ & $0.08 \pm 0.05$ & $0.12 \pm 0.06$ & $0.11 \pm 0.03$ & $0.62 \pm 0.72$ & $0.13 \pm 0.05$ & $0.17 \pm 0.07$ & $0.08 \pm 0.02$ & $0.15 \pm 0.05$ & $0.16 \pm 0.05$ \\
\midrule
$(*)$~\textbf{WO}~(\emph{ours}) & $\mathbf{0.03 \pm 0.01}$ & $\mathbf{0.02 \pm 0.01}$ & $\mathbf{0.04 \pm 0.02}$ & $\mathbf{0.05 \pm 0.02}$ & $\mathbf{0.07 \pm 0.04}$ & $\mathbf{0.07 \pm 0.03}$ & $\mathbf{0.08 \pm 0.03}$ & $\mathbf{0.10 \pm 0.07}$ & $\mathbf{0.07 \pm 0.03}$ & $\mathbf{0.05 \pm 0.02}$ & $\mathbf{0.06 \pm 0.02}$ & $\mathbf{0.08 \pm 0.02}$ & $\mathbf{0.08 \pm 0.02}$ \\
\midrule
Rel. improv. (\%) & \greentext{$54.4\%$} & \greentext{$58.4\%$} & \greentext{$39.7\%$} & \greentext{$21.0\%$} & \greentext{$6.5\%$} & \greentext{$25.9\%$} & \greentext{$27.6\%$} & \greentext{$13.6\%$} & \greentext{$23.1\%$} & \greentext{$50.2\%$} & \greentext{$26.8\%$} & \greentext{$16.2\%$} & \greentext{$13.5\%$} \\
\bottomrule
\end{tabular}
}
\vspace{-0.3cm}
\caption{\textbf{Low-overlap regime} $\mathcal{D}^\gamma$: Reported are the average RMSEs $\pm$ standard deviation for CATE estimation, and the relative improvement of our \textbf{WO}-learner over the best performing baselines across different levels of overlap (larger values of $\gamma$ correspond to lower overlap). Due to our weighted population loss, our \textbf{WO}-learner is highly stable \textit{even} when the overlap is very low.}\label{tab:overlap}
\vspace{-0.4cm}
\end{table}

\begin{wrapfigure}{r}{0.35\textwidth} 
  \begin{minipage}{\linewidth}
    \centering
    \vspace{-0.5cm}
    \includegraphics[width=\textwidth, trim=0.6cm 0.3cm 1cm 1.3cm, clip]{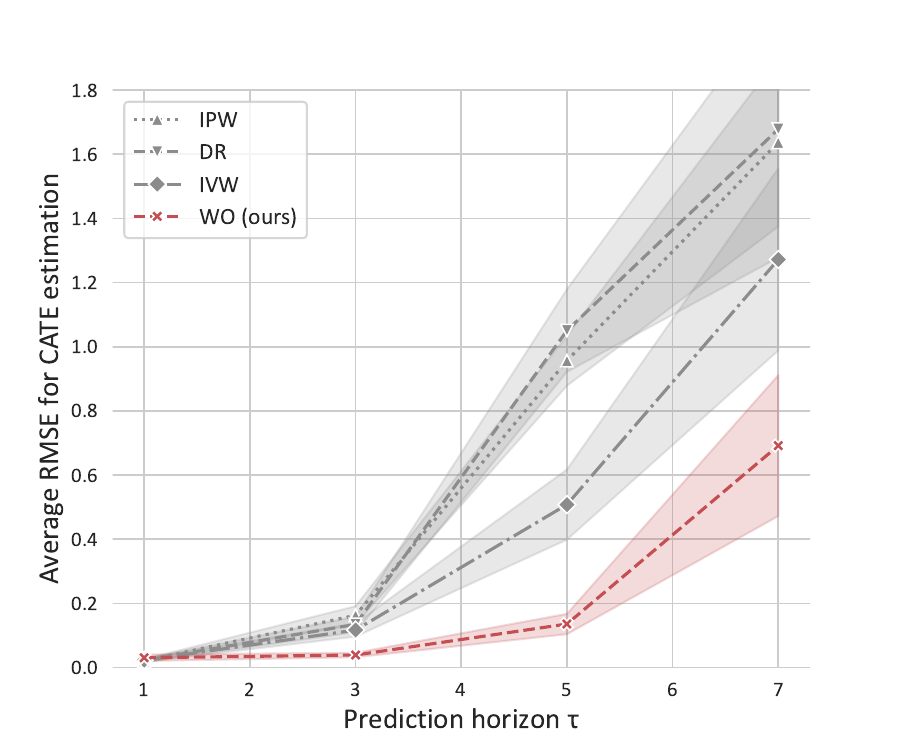}
    \vspace{-0.6cm}
    \captionof{figure}{\textbf{Complex propensity score} $\mathcal{D}^\pi$: Reported are the average RMSEs for CATE estimation. Our \textbf{WO}-learner significantly improves upon existing meta-learners, even for large prediction horizons.}\label{fig:propensity}
    \vspace{-.8cm}
  \end{minipage}
\end{wrapfigure}

\textbf{(1)}~\textbf{Low-overlap regime} $\mathcal{D}^\gamma$: We control the overlap in the data generating process with an overlap parameter $\gamma$. By increasing $\gamma$, we decrease the overlap in the observed data, i.e., the probability of receiving both interventional treatment sequences.

\underline{Results:} Table~\ref{tab:overlap} shows that our \textbf{WO}-learner \emph{outperforms all existing meta-learners over all levels of overlap}. This confirms the \emph{effectiveness of our overlap weights}. We further find that the \textbf{RA}-learner performs fairly stable, as it avoids inverse propensity weights that become increasingly more unstable in low-overlap regimes. Nonetheless, our \textbf{WO}-learner consistently outperforms the best baseline with a relative improvement of up to $58.4\%$.

\textbf{(2)}~\textbf{Complex propensity score} $\mathcal{D}^\pi$: We simulate data with a complex propensity score. We further increase the prediction horizon $\tau$, and study the performance of meta-learners that are based on the propensity score. Thereby, we gain insight into how errors propagate over time, and how the meta-learners behave under exponentially decreasing overlap for increasing prediction horizons.

\underline{Results:} In Figure~\ref{fig:propensity}, we can clearly see the benefits of our \textbf{WO}-learner over propensity-based baselines due to its \emph{overlap weights}, and its \emph{Neyman-orthogonal population risk function}. Our \textbf{WO}-learner is robust against estimation errors in the estimated \emph{propensity scores and weight functions}, and errors do not propagate as for the \textbf{IPW} and the \textbf{IVW} learner. Further, the performance deteriorates for the \textbf{IPW} and \textbf{DR} learners is due to the exponentially decreasing overlap for increasing prediction horizons $\tau$. In contrast, our \textbf{WO}-learner remains highly stable.

\begin{wrapfigure}{r}{0.35\textwidth} 
  \begin{minipage}{\linewidth}
    \centering
    \vspace{-0.6cm}
    \includegraphics[width=\textwidth, trim=0.4cm 0.3cm 1cm 0.3cm, clip]{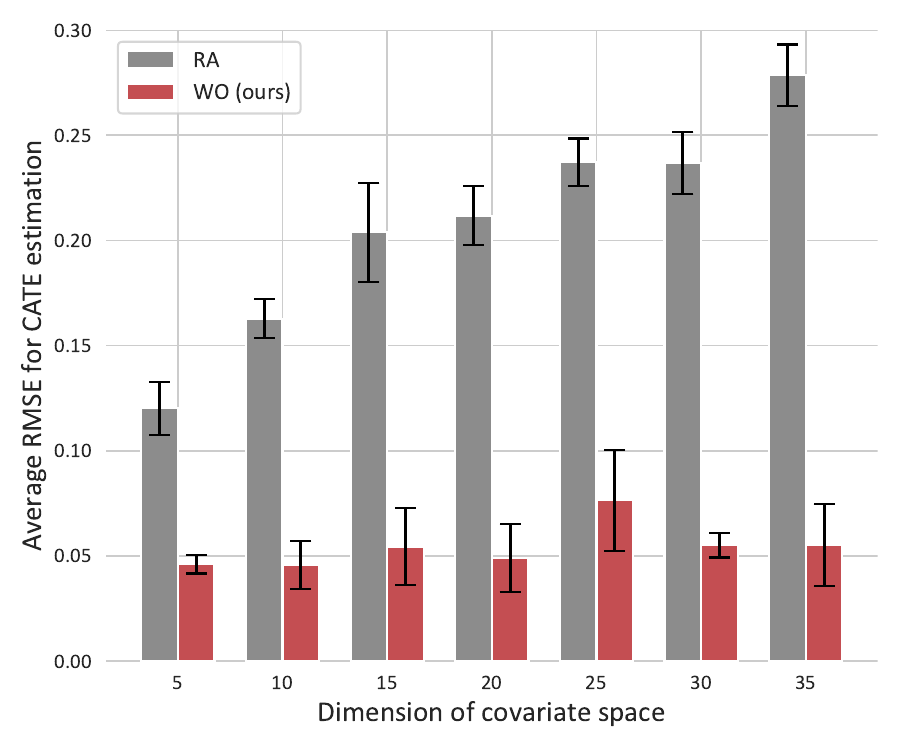}
    \vspace{-0.6cm}
    \captionof{figure}{\textbf{Complex response function} $\mathcal{D}^\mu$: Reported are the average RMSEs for CATE estimation for increasing dimensionality of the covariate space. Our \textbf{WO}-learner clearly outperforms the non-orthogonal \textbf{RA} learner.}\label{fig:response}
    \vspace{-.8cm}
  \end{minipage}
\end{wrapfigure}

\textbf{(3)}~\textbf{Complex response function} $\mathcal{D}^\mu$: In this dataset, we simulate a complex response function, and further vary the complexity by increasing the dimension of the covariate space. The purpose of this experimental setup is to empirically validate that our \textbf{WO}-learner is Neyman-orthogonal with respect to its estimated response functions. Hence, we compare it with the \textbf{RA} learner, which solely relies on estimating the response functions and is \textbf{not} Neyman-orthogonal.

\underline{Results:}~Figure~\ref{fig:response} shows the results for the complex response function. As our \textbf{WO}-learner is also \emph{Neyman-orthogonal with respect to its estimated response functions}, we can clearly see that we outperform a simple \textbf{RA}-learner. As the response function becomes more complex with increasing dimension of the covariate space, our \textbf{WO}-learner stays largely unaffected.

\textbf{(4)}~\textbf{Low-sample setting} $\mathcal{D}^N$: We study the performance in settings with low sample size. By decreasing the number of samples 
for training, we demonstrate the importance of \emph{Neyman-orthogonality with respect to all estimated nuisance functions}. That is, we show empirically that increasing errors in the nuisance function estimates due to decreasing sample size only propagate as lower-order errors to the CATE estimates of our \textbf{WO}-learner.

\underline{Results:} Table~\ref{tab:ntrain} shows that, for decreasing sample size, our \textbf{WO}-learner maintains a stable performance, even as errors in its estimated nuisance functions increase. This is because our learner is Neyman-orthogonal with respect to all nuisance functions, and, thereby, errors do not propagate as first-order biases through all time steps up to the final CATE estimate.

\begin{wraptable}{r}{0.6\textwidth} 
\setlength{\intextsep}{0pt}          
\setlength{\columnsep}{1em}          
\centering
\resizebox{0.6\textwidth}{!}{%
\begin{tabular}{lccccccc}
\toprule
$n_{\text{train}}$ & $8,000$ & $7,000$ & $6,000$ & $5,000$ & $4,000$ & $3,000$ & $2,000$ \\
\midrule
(a)~\textbf{HA}   & $0.33 \pm 0.05$ & $0.38 \pm 0.05$ & $0.39 \pm 0.06$ & $0.42 \pm 0.03$ & $0.50 \pm 0.04$ & $0.49 \pm 0.04$ & $0.58 \pm 0.10$ \\
(b)~\textbf{RA}   & $0.15 \pm 0.02$ & $0.16 \pm 0.03$ & $0.14 \pm 0.02$ & $0.15 \pm 0.04$ & $0.23 \pm 0.05$ & $0.20 \pm 0.02$ & $0.30 \pm 0.10$ \\
(c)~\textbf{IPW}  & $0.13 \pm 0.04$ & $0.14 \pm 0.07$ & $0.16 \pm 0.04$ & $0.17 \pm 0.06$ & $0.24 \pm 0.09$ & $0.58 \pm 0.35$ & $0.41 \pm 0.26$ \\
(d)~\textbf{DR}   & $0.11 \pm 0.04$ & $0.18 \pm 0.08$ & $0.13 \pm 0.03$ & $0.17 \pm 0.03$ & $0.20 \pm 0.04$ & $0.52 \pm 0.42$ & $0.26 \pm 0.10$ \\
(e)~\textbf{IVW}  & $0.10 \pm 0.03$ & $0.13 \pm 0.05$ & $0.13 \pm 0.05$ & $0.13 \pm 0.05$ & $0.13 \pm 0.05$ & $0.19 \pm 0.04$ & $0.30 \pm 0.11$ \\
\midrule
$(*)$~\textbf{WO (\emph{ours})} & $\mathbf{0.06 \pm 0.02}$ & $\mathbf{0.04 \pm 0.01}$ & $\mathbf{0.06 \pm 0.03}$ & $\mathbf{0.04 \pm 0.01}$ & $\mathbf{0.09 \pm 0.04}$ & $\mathbf{0.13 \pm 0.06}$ & $\mathbf{0.18 \pm 0.07}$ \\
\midrule
Rel. improv. (\%) & \greentext{$37.4\%$} & \greentext{$65.8\%$} & \greentext{$49.4\%$} & \greentext{$66.9\%$} & \greentext{$27.7\%$} & \greentext{$30.6\%$} & \greentext{$28.7\%$} \\
\bottomrule
\end{tabular}
}
\vspace{-0.3cm}
\caption{\textbf{Low-sample setting} $\mathcal{D}^N$: Reported are the average RMSEs $\pm$ standard deviation for CATE estimation, and the relative improvement of our \textbf{WO}-learner with decreasing sample size. Due to Neyman-orthogonality with respect to all nuisance functions
, our \textbf{WO}-learner remains stable across all sample sizes.}
\label{tab:ntrain}
\vspace{-0.3cm}
\end{wraptable}

$\bullet$\,\underline{\textbf{Semi-synthetic data:}} We analyze the performance of all meta-learners for increasing prediction horizons on semi-synthetic data. For this, we simulate treatments and outcomes based on real-world patient covariates of the MIMIC-III dataset with intensive care unit patients
\citep{Johnson.2016}. Therein, we (i)~show that our \textbf{WO}-learner can easily handle the complexities of observational covariates, while we (ii)~ensure that we have access to ground-truth values for CATE in order to properly validate our results. We provide details on the data setup in Supplement~\ref{sec:dgp_semisynth}. In short, the data combines \textbf{all the difficulties} from our synthetic experiments: the data has \emph{low overlap}, a \emph{complex propensity score}, a \emph{complex response function}, \emph{low sample size}, and \emph{time-varying confounding}.

\begin{wraptable}{r}{0.5\textwidth}
\vspace{-0.4cm}
\setlength{\intextsep}{0pt}
\setlength{\columnsep}{1em}
\centering
\resizebox{0.5\textwidth}{!}{%
\begin{tabular}{lccc}
\toprule
Prediction horizon & $2$ & $3$ & $4$ \\
\midrule
(b)~\textbf{RA}  & $0.12 \pm 0.01$ & $0.27 \pm 0.04$ & $0.40 \pm 0.00$ \\
(c)~\textbf{IPW}   & $0.66 \pm 0.50$ & $0.94 \pm 0.62$ & $5.23 \pm 6.55$ \\
(d)~\textbf{DR}    & ${0.04 \pm 0.02}$ & $0.20 \pm 0.11$ & $1.57 \pm 1.79$ \\
(e)~\textbf{IVW} & $0.04 \pm 0.02$ & $31.27 \pm 44.02$ & $879.80 \pm 1243.54$ \\
\midrule
$(*)$~\textbf{WO (\emph{ours})} & $\mathbf{0.03 \pm 0.01}$ & $\mathbf{0.15 \pm 0.01}$ & $\mathbf{0.17 \pm 0.07}$ \\
\midrule
Rel. improv. (\%) & \greentext{$23.1\%$} & \greentext{$22.4\%$} & \greentext{$28.5\%$} \\
\bottomrule
\end{tabular}
}
\vspace{-0.3cm}
\caption{\textbf{Semi-synthetic data}: Reported are the average RMSEs for CATE estimation. With all complexities in the time-varying setting combined, our \textbf{WO}-learner is the only meta-learner with consistent performance over all prediction horizons.}
\label{tab:semisynthetic}
\vspace{-0.5cm}
\end{wraptable}

\underline{Results:} Table~\ref{tab:semisynthetic} shows the results for semi-synthetic data. Our \textbf{WO}-learner is the only method that remains stable for all prediction horizons. This confirms that our learner can deal with all the aforementioned difficulties in the time-varying setting, and clearly outperforms existing meta-learners.

$\bullet$\,\underline{\textbf{Real-world data:}} We report results for real-world outcome estimation in Supplement~\ref{sec:rwd}.

$\bullet$\,\underline{\textbf{Ablations:}} We repeat the experiment with decreasing overlap on the dataset $\mathcal{D}^{\gamma}$. Here, instantiate all meta-learners with simple LSTMs \citep{Hochreiter.1997}. That is, we substitute the transformer instantiations for both the nuisance function and the second-stage estimator from the main experiments in Section~\ref{sec:experiments} with LSTM architectures. As meta-learners can be instantiated with \emph{any} neural backbone, this shows that our main results are \textbf{robust} for different instantiations.

\begin{table}[h]
\vspace{-0.2cm}
\centering
\resizebox{\textwidth}{!}{%
\begin{tabular}{lccccccccccccc}
\toprule
Overlap $\gamma$ & 0.5 & 1.0 & 1.5 & 2.0 & 2.5 & 3.0 & 3.5 & 4.0 & 4.5 & 5.0 & 5.5 & 6.0 & 6.5 \\
\midrule
(a)~\textbf{HA} & $0.55 \pm 0.06$ & $0.51 \pm 0.06$ & $0.54 \pm 0.10$ & $0.52 \pm 0.04$ & $0.50 \pm 0.05$ & $0.52 \pm 0.03$ & $0.51 \pm 0.04$ & $0.50 \pm 0.04$ & $0.47 \pm 0.03$ & $0.52 \pm 0.03$ & $0.52 \pm 0.01$ & $0.53 \pm 0.01$ & $0.57 \pm 0.01$ \\
(b)~\textbf{RA} & $0.30 \pm 0.02$ & $0.31 \pm 0.05$ & $0.31 \pm 0.04$ & $0.31 \pm 0.03$ & $0.29 \pm 0.04$ & $0.27 \pm 0.02$ & $0.29 \pm 0.02$ & $0.28 \pm 0.02$ & $0.27 \pm 0.04$ & $0.25 \pm 0.04$ & $0.27 \pm 0.03$ & $0.28 \pm 0.03$ & $0.27 \pm 0.03$ \\
(c)~\textbf{IPW} & $0.20 \pm 0.04$ & $0.12 \pm 0.06$ & $0.12 \pm 0.05$ & $0.18 \pm 0.08$ & $0.14 \pm 0.04$ & $0.24 \pm 0.07$ & $0.27 \pm 0.08$ & $0.52 \pm 0.35$ & $0.45 \pm 0.21$ & $0.35 \pm 0.12$ & $0.52 \pm 0.49$ & $0.72 \pm 0.04$ & $0.47 \pm 0.08$ \\
(d)~\textbf{DR} & $\mathbf{0.05 \pm 0.03}$ & $\mathbf{0.04 \pm 0.01}$ & $0.06 \pm 0.02$ & $0.07 \pm 0.03$ & $0.09 \pm 0.04$ & $0.10 \pm 0.03$ & $0.09 \pm 0.03$ & $0.23 \pm 0.19$ & $0.13 \pm 0.06$ & $0.16 \pm 0.05$ & $0.28 \pm 0.12$ & $0.15 \pm 0.04$ & $0.18 \pm 0.08$ \\
(e)~\textbf{IVW} & $0.72 \pm 1.26$ & $6.14 \pm 8.63$ & $0.21 \pm 0.22$ & $6.24 \pm 7.74$ & $3.51 \pm 6.77$ & $4.98 \pm 7.21$ & $9.79 \pm 6.22$ & $31.71 \pm 31.68$ & $138.57 \pm 165.75$ & $41.99 \pm 74.10$ & $61.49 \pm 67.81$ & $1351.68 \pm 1351.68$ & $76.66 \pm 76.66$ \\
\midrule
(*)~\textbf{WO}~(\emph{ours}) & ${0.07 \pm 0.01}$ & ${0.05 \pm 0.03}$ & $\mathbf{0.03 \pm 0.02}$ & $\mathbf{0.04 \pm 0.03}$ & $\mathbf{0.04 \pm 0.01}$ & $\mathbf{0.03 \pm 0.01}$ & $\mathbf{0.06 \pm 0.02}$ & $\mathbf{0.06 \pm 0.03}$ & $\mathbf{0.08 \pm 0.03}$ & $\mathbf{0.05 \pm 0.03}$ & $\mathbf{0.08 \pm 0.03}$ & $\mathbf{0.08 \pm 0.03}$ & $\mathbf{0.08 \pm 0.03}$ \\
\midrule
Rel. improv. (\%) & {$-29.9\%$} & {$-7.8\%$} & \greentext{$46.0\%$} & \greentext{$37.5\%$} & \greentext{$54.8\%$} & \greentext{$68.2\%$} & \greentext{$37.0\%$} & \greentext{$74.7\%$} & \greentext{$36.0\%$} & \greentext{$66.6\%$} & \greentext{$72.6\%$} & \greentext{$47.1\%$} & \greentext{$54.3\%$} \\
\bottomrule
\end{tabular}
}
\vspace{-0.3cm}
\caption{\textbf{Ablations}: Reported are the average RMSEs $\pm$ standard deviation for CATE estimation on  $\mathcal{D}^\gamma$, and the relative improvement of our \textbf{WO}-learner over the best performing baselines across different levels of overlap.}
\label{tab:overlap_ablation}
\vspace{-0.2cm}
\end{table}

\underline{Results:} Table~\ref{tab:overlap_ablation} shows that \textbf{WO}-learner again improves over all existing meta-learners for decreasing overlap. This is expected and in line with our theoretical findings, which are \emph{agnostic} of the neural backbone. Hence, our novel \textbf{WO}-learner has clear advantages over existing standard meta-learners, \emph{especially when the overlap is low} (large $\gamma$).

\rebuttal{\underline{\textbf{Limitations:}} In two edge cases, simpler meta-learners may suffice: (i) when nuisance functions are known or estimated with exceptional accuracy, in which case the DR learner's double robustness can be of advantage; and (ii) when propensities remain uniformly balanced, so that exponential overlap decay occurs at a lower rate and only becomes problematic for large $\tau$. 
Further, as is standard in the causal literature \citep{Curth.2021, Frauen.2025, Melnychuk.2022, Seedat.2022}, our method has been validated validated mainly on (semi-)synthetic data. Evaluating counterfactuals on real-world data remains an open research direction \citep{Poinsot.2025}.}

\underline{\textbf{Conclusion:}} In this work, we present a novel weighted orthogonal meta-learner for HTE estimation over time. 
\rebuttal{Therein, we introduce novel overlap weights that counteract exploding variance of existing meta-learners in low-overlap regimes, and we provide an orthogonalized population risk that is insensitive to nuisance misspecification. }
Our experiments confirm that our \textbf{WO}-learner presents an important step towards reliable decision-making for domains such as personalized medicine.

\clearpage

\bibliography{bibliography}
\bibliographystyle{iclr2026_conference}
\clearpage

\appendix

\raggedbottom

\section{\rebuttal{Discussion of practical considerations when to use our WO-learner vs. other meta-learners}}\label{sec:discussion}

\rebuttal{While our \textbf{WO}-learner outperforms existing meta-learners in low-overlap regimes, there are narrow scenarios in which simpler approaches may be adequate or equally effective.}

\paragraph{\rebuttal{(i) When nuisance functions are known or exceptionally accurate.}}
\rebuttal{In this edge case, the classical DR learner benefits from its double robustness property: if either the outcome model or the propensity model is correctly specified, the DR estimator remains unbiased. Our WO-learner is Neyman-orthogonal but not doubly robust, so in the \emph{rare scenario where practitioners have exceptionally accurate nuisance models}, DR may perform similarly or slightly better. However, in realistic longitudinal observational data, nuisance estimation is high-dimensional and difficult, which makes our \textbf{WO}-learner the more robust and stable choice.}

\paragraph{\rebuttal{(ii) When treatment assignment exhibits uniformly strong overlap and prediction horizons are small.}}
\rebuttal{If propensities are far from the boundaries and do not degrade too quickly over time, the benefits of overlap weighting only occur for larger prediction horizons, i.e., the exponential rate at which overlap reduces is slightly lower. Hence, for small prediction horizons and very balanced overlap, this is a benign regime where simpler methods may be sufficient. Our experiments show, however, that realistic longitudinal datasets rarely exhibit low overlap, and hence the significant empirical advantage of our \textbf{WO}-learner.}

\rebuttal{Beyond these specific edge cases, we are not aware of meaningful practical conditions under which our \textbf{WO}-learner would not be favored. In high-dimensional or sparse-data regimes, our \textbf{WO}-learner remains well behaved: (i) orthogonality makes the method robust to nuisance mis-specification, and (ii) our theoretical results guarantee uniformly bounded pseudo-outcome variance even when propensities become small. This is reflected consistently in our empirical results.}

\clearpage
\section{\rebuttal{Discussion: Identifying assumptions}}\label{sec:assumptions}

\rebuttal{Throughout our work, we rely on the identifying assumptions consistency, sequential ignorability, and positivity. These assumptions are the \textbf{standard} identifying conditions in longitudinal causal inference and are used uniformly across the literature on epidemiology and biostatistics \citep{Robins.1986, Robins.1999, Robins.2000, Robins.2009, vanderLaan.2012, vanderLaan.2018}, model-based methods \citep{Bica.2020,Hess.2025, Li.2021, Melnychuk.2022, Seedat.2022}, and HTE meta-learners \citep{Frauen.2025}. In the following we discuss their practical meaning and why they are generally viewed as reasonable in real-world longitudinal applications.}

\rebuttal{$\bullet$~\textbf{Consistency:} Consistency requires that the observed outcome under the realized treatment equals the corresponding potential outcome. This is an uncontroversial assumption in causal inference. In practice, it holds whenever the treatment is clearly defined and does not vary in unmodeled ways across time or individuals. This condition is typically met in longitudinal applications such as medical dosing decisions, behavioral interventions, policy sequences, and recommendation systems, where the treatment at each step has an explicit operational definition.}

\rebuttal{$\bullet$~\textbf{Sequential ignorability:} Sequential ignorability assumes that, conditional on the observed covariate history $\bar H_t$, treatment assignment $A_t$ does not depend on unobserved factors that jointly affect future outcomes. This assumption is the \emph{direct longitudinal analogue of standard unconfoundedness}, and is required by \textbf{all} existing approaches for estimating time-varying treatment effects. In practice, it is considered plausible because the covariate history $\bar H_t$ captures the clinician’s or decision-maker’s information set at the time of treatment. Longitudinal datasets in medicine, policy, and online recommendation systems typically record rich state
variables, risk indicators, prior outcomes, and past responses, precisely to model how decisions are made. When these histories include the drivers of treatment choice (which modern datasets increasingly do), the assumption is regarded as reasonable.}

\rebuttal{$\bullet$~\textbf{Positivity:} Positivity requires that each treatment option has nonzero probability for the covariate histories of interest. This does not require equal or history-independent probabilities as in an RCT — only that each action occurs occasionally in practice. In practice, violations of positivity show up as near-deterministic treatment rules or extremely rare treatment paths. These issues are well-known in the causal literature and directly motivate our contribution: our \textbf{WO}-learner is designed to stabilize estimation in settings with limited but nonzero overlap by down-weighting sequences with very low overlap and up-weighting sequences with large overlap. As long as propensities do not collapse to zero, identification remains valid and our method addresses the resulting finite-sample instability.}

\rebuttal{$\bullet$~\textbf{Summary:} Overall, the identifying assumptions we rely on are the same assumptions used throughout the causal inference literature and are widely viewed as appropriate for longitudinal observational data. Our method operates under this well-established framework and specifically addresses the remaining practical difficulty of \emph{limited overlap}, which is a finite-sample challenge that arises even when the underlying identifying assumptions hold.}

\clearpage

\section{Proofs}

\subsection{Conditional average potential outcomes (CAPOs)}\label{sec:proof_capo}
We split the following section into two parts: first, we derive several supporting lemmas to prove our main results ($\rightarrow$ Lemma~\ref{lemma:orthogonality_gamma} to \ref{lemma:expectation_rho}). Then, we derive the theorem from the main paper ($\rightarrow$ Theorem~\ref{theorem:oracle} and Theorem \ref{theorem:orthogonality}).

\subsubsection{Lemmas (CAPOs)}\label{sec:proof_capo_lemmas}
In order to prove our main theorems for CAPOs, we first introduce a \textbf{series of helping lemmas}.

\begin{lemma}\label{lemma:orthogonality_gamma}
    Let 
    \begin{align}
        \gamma_t^{\bar{a}}(\bar{Z}_{t+\tau}) = \prod_{j=t}^{t+\tau}\frac{\mathbbm{1}_{\{A_j = a_j\}}}{\pi_j^{\bar{a}}( \bar{H}_{j})} Y_{t + \tau}
    +  \sum_{j = t}^{t+\tau} \mu_j^{\bar{a}}\left(\bar{H}_{j}\right)\left(1 -  \frac{\mathbbm{1}_{\{A_j = a_j\}}}{\pi_j^{\bar{a}}( \bar{H}_{j})}\right) \prod_{k =t}^{j-1}\frac{\mathbbm{1}_{\{A_k = a_k\}}}{\pi_k^{\bar{a}}( \bar{H}_{k})} ,
    \end{align}
    where
    \begin{align}
        \pi_j^{\bar{a}}(\bar{h}_j) = \mathbb{P}(A_{j}=a_{j}|\bar{H}_j=\bar{h}_j)
    \end{align}
    is the propensity score, and
    \begin{align}
     \mu_{t+\tau}^{\bar{a}}(\bar{h}_{t+\tau}) = \E\Big[ Y_{t:t+\tau} \; \Big| \; \bar{H}_{t+\tau}=\bar{h}_{t+\tau}, A_{t+\tau}=a_{t+\tau} \Big]
\end{align}
and
\begin{align}
     \mu_{j}^{\bar{a}}(\bar{h}_{j}) = \E\Big[ \mu_{j+1}^{\bar{a}}(\bar{h}_{j+1}) \; \Big| \; \bar{H}_{j}=\bar{h}_{j}, A_{j}=a_{j} \Big],
\end{align}
such that
\begin{align}
    \mu_{t}^{\bar{a}}(\bar{h}_{t}) = \E\Big[ Y_{t:t+\tau}[a_{t:t+\tau}] \; \Big| \; \bar{H}_{t}=\bar{h}_{t} \Big]
\end{align}
is the conditional average potential outcome. Then, $\gamma_t^{\bar{a}}(\bar{Z}_{t+\tau})$ is Neyman-orthogonal with respect to all nuisance functions $\eta^{\bar{a}}=\{\pi_j^{\bar{a}}, \mu_j^{\bar{a}}, \omega_j^{\bar{a}}\}_{j=t}^{t+\tau}$.
\end{lemma}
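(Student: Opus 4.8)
The plan is to verify Neyman-orthogonality directly, by showing that the Gateaux (pathwise) derivative of the conditional functional $\eta^{\bar a}\mapsto \E[\gamma_t^{\bar a}(\bar Z_{t+\tau})\mid \bar H_t=\bar h_t]$ vanishes at the true nuisances in every admissible direction. One observation simplifies the task considerably: the expression for $\gamma_t^{\bar a}$ involves only the propensity scores $\pi_j^{\bar a}$ and the response functions $\mu_j^{\bar a}$, and not the weights $\omega_j^{\bar a}$. Hence orthogonality with respect to each $\omega_j^{\bar a}$ is immediate, its derivative being identically zero, and it remains only to handle perturbations of $\pi_j^{\bar a}$ and $\mu_j^{\bar a}$.

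The key preparatory step is to recast $\gamma_t^{\bar a}$ into a telescoping, martingale-transform form. Writing the cumulative inverse-propensity weight $W_j = \prod_{k=t}^{j}\frac{\mathbbm{1}_{\{A_k=a_k\}}}{\pi_k^{\bar a}(\bar H_k)}$ (with $W_{t-1}=1$) and using the identity $\tfrac{\mathbbm{1}_{\{A_j=a_j\}}}{\pi_j^{\bar a}}\,W_{j-1}=W_j$, one rearranges the defining sum into
\begin{equation}
\gamma_t^{\bar a}(\bar Z_{t+\tau}) = \mu_t^{\bar a}(\bar H_t) + \sum_{j=t}^{t+\tau} W_j\Big(\mu_{j+1}^{\bar a}(\bar H_{j+1}) - \mu_j^{\bar a}(\bar H_j)\Big),
\end{equation}
with the convention $\mu_{t+\tau+1}^{\bar a}(\bar H_{t+\tau+1}) := Y_{t+\tau}$. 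This representation exposes the double-robust structure: at the true nuisances, $\E[\mathbbm{1}_{\{A_j=a_j\}}\mid \bar H_j]=\pi_j^{\bar a}$ and $\E[\mu_{j+1}^{\bar a}\mid\bar H_j, A_j=a_j]=\mu_j^{\bar a}$ by the recursion defining the response functions, so each summand is a martingale increment and $\E[\gamma_t^{\bar a}\mid\bar H_t]=\mu_t^{\bar a}$.

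I would then introduce directional perturbations $\pi_k^{\bar a}\to\pi_k^{\bar a}+\epsilon\,\delta_k$ and $\mu_j^{\bar a}\to\mu_j^{\bar a}+\epsilon\,e_j$ (with $e_{t+\tau+1}=0$, since $Y_{t+\tau}$ is data, not a nuisance), differentiate the conditional mean of the telescoping form at $\epsilon=0$, and split the derivative into a propensity part and a response part. For the propensity part, $\dot W_j = -W_j\sum_{k=t}^{j}\delta_k/\pi_k^{\bar a}$ retains the full indicator product and is therefore supported on $\{A_t=a_t,\ldots,A_j=a_j\}$; conditioning on $(\bar H_j, A_j)$ and invoking $\E[\mu_{j+1}^{\bar a}-\mu_j^{\bar a}\mid\bar H_j, A_j=a_j]=0$ makes every such term vanish. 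For the response part, I collect $e_t + \sum_j W_j(e_{j+1}-e_j)$ and telescope it into $e_t(1-W_t) + \sum_{j>t}(W_{j-1}-W_j)e_j$; since $\E[W_j\mid\bar H_j]=W_{j-1}$ and $\E[W_t\mid\bar H_t]=1$, each term has zero conditional mean given $\bar H_t$. Both parts therefore cancel, and the total derivative is zero, giving Neyman-orthogonality.

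The main obstacle will be the careful bookkeeping in the perturbation step: a single perturbation $\delta_k$ of $\pi_k^{\bar a}$ enters every cumulative weight $W_j$ with $j\ge k$, so the derivative couples across all stages, and the argument only closes once one recognizes that $\dot W_j$ inherits the indicator support of $W_j$ (killing the propensity terms) and that the response increments telescope against the martingale property $\E[W_j\mid\bar H_j]=W_{j-1}$. Handling the endpoint conventions $W_{t-1}=1$, $e_{t+\tau+1}=0$, and $\mu_{t+\tau+1}^{\bar a}=Y_{t+\tau}$ correctly is where sign or index errors would most easily creep in.
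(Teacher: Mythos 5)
Your proof is correct, but it takes a genuinely different route from the paper. The paper disposes of the lemma in two sentences: orthogonality with respect to the weights $\omega_j^{\bar{a}}$ is trivial (as you also note), and for the remaining nuisances it simply observes that $\gamma_t^{\bar{a}}$ is the uncentered efficient influence function of $\E[\mu_t^{\bar{a}}(\bar{H}_t)]$, citing \citet{Frauen.2025} and \citet{vanderLaan.2012}, together with the general fact that efficient influence functions are Neyman-orthogonal. You instead verify orthogonality from first principles: your telescoping identity $\gamma_t^{\bar{a}} = \mu_t^{\bar{a}}(\bar{H}_t) + \sum_{j=t}^{t+\tau} W_j\big(\mu_{j+1}^{\bar{a}}(\bar{H}_{j+1}) - \mu_j^{\bar{a}}(\bar{H}_j)\big)$ (with $W_{t-1}=1$, $\mu_{t+\tau+1}^{\bar{a}}:=Y_{t+\tau}$) is an exact rearrangement of the definition, and both halves of your derivative computation are sound: the propensity part dies because $\dot W_j$ inherits the indicator support of $W_j$ and the recursion gives $\E[\mu_{j+1}^{\bar{a}}-\mu_j^{\bar{a}}\mid \bar{H}_j, A_j=a_j]=0$, while the response part telescopes against the martingale identity $\E[W_j\mid \bar{H}_j]=W_{j-1}$. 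What your approach buys is self-containedness (no appeal to semiparametric EIF theory or external references) and, importantly, it directly establishes the \emph{conditional} statement $\E[\dot\gamma_t^{\bar{a}}\mid \bar{H}_t]=0$, which is precisely what is invoked later in the proof of Theorem~\ref{theorem:orthogonality} (there the derivative terms appear multiplied by $\bar{H}_t$-measurable factors such as $\omega_t^{\bar{a}}(\bar{H}_t)(\hat g - g)$); the EIF citation, read literally, gives orthogonality of the unconditional functional $\E[\mu_t^{\bar{a}}(\bar{H}_t)]$, so your argument is in that respect a tighter fit for the downstream use. What the paper's approach buys is brevity and a connection to the efficiency theory in which these pseudo-outcomes originate.
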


\begin{proof}
    First, $\gamma_t^{\bar{a}}(\bar{Z}_{t+\tau})$ is trivially Neyman-orthogonal with respect to $\omega_j^{\bar{a}}$ as it is independent of it. Second, we notice that $\gamma_t^{\bar{a}}(\bar{Z}_{t+\tau})$ is the uncentered efficient influence function of $\E\Big[ \mu_{t}^{\bar{a}}(\bar{H}_{t}) \Big]$ \citep{Frauen.2025, vanderLaan.2012}, which is Neyman-orthogonal with respect to all nuisance functions.
\end{proof}

\clearpage

\begin{lemma}\label{lemma:orthogonality_rho}
    Let 
    \begin{align}
    \rho_t^{\bar{a}}(\bar{Z}_{t+\tau}) = \prod_{j=t}^{t+\tau}\pi_j^{\bar{a}}( \bar{H}_{j}) 
    +\sum_{j=t}^{t+\tau} 
    \Big(\mathbbm{1}_{\{a_j = A_j\}} - \pi_j^{\bar{a}}( \bar{H}_{j}) \Big)
     \omega_{j+1}^{\bar{a}}(\bar{H}_j) \prod_{t\leq k < j} \pi_k^{\bar{a}}( \bar{H}_{k}), 
\end{align}
where
\begin{align}
        \pi_j^{\bar{a}}(\bar{h}_j) = \mathbb{P}(A_{j}=a_{j}|\bar{H}_j=\bar{h}_j)
    \end{align}
    is the propensity score, and
\begin{align}
    \omega_j^{\bar{a}}(\bar{h}_\ell) = p(A_{j:t+\tau}=a_{j:t+\tau} \mid \bar{H}_\ell=\bar{h}_\ell)
    =\E \Big[ \prod_{k=j}^{t+\tau} \pi_k^{\bar{a}}(\bar{H}_k)\; \Big| \; \bar{H}_\ell=\bar{h}_\ell \Big]
    \end{align}
    is the weight function. Then, $\rho_t^{\bar{a}}(\bar{Z}_{t+\tau})$ is Neyman-orthogonal with respect to all nuisance functions $\eta^{\bar{a}}=\{\pi_j^{\bar{a}}, \mu_j^{\bar{a}}, \omega_j^{\bar{a}}\}_{j=t}^{t+\tau}$.
\end{lemma}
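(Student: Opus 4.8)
The plan is to verify Neyman-orthogonality directly, by showing that the Gâteaux (pathwise) derivative of $\E[\rho_t^{\bar{a}}(\bar{Z}_{t+\tau})]$ in the direction of an arbitrary perturbation of each nuisance function vanishes at the truth. Conceptually, $\rho_t^{\bar{a}}$ is the uncentered efficient influence function of the sequential functional $\E[\prod_{j=t}^{t+\tau}\pi_j^{\bar{a}}(\bar{H}_j)] = \E[\omega_t^{\bar{a}}(\bar{H}_t)]$, built from the propensities exactly as $\gamma_t^{\bar{a}}$ is built from the response functions, so the cleanest high-level justification mirrors Lemma~\ref{lemma:orthogonality_gamma}. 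However, rather than invoke influence-function theory, I would make the argument self-contained by exploiting two structural facts: (i) the residual property $\E[\mathbbm{1}_{\{A_j=a_j\}} - \pi_j^{\bar{a}}(\bar{H}_j) \mid \bar{H}_j] = 0$, and (ii) the recursive identity $\omega_{j}^{\bar{a}}(\bar{h}_\ell) = \E[\prod_{k=j}^{t+\tau}\pi_k^{\bar{a}}(\bar{H}_k)\mid \bar{H}_\ell = \bar{h}_\ell]$ that defines the weight functions.

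First, since $\rho_t^{\bar{a}}$ contains no response function $\mu_j^{\bar{a}}$, orthogonality with respect to every $\mu_j^{\bar{a}}$ is immediate. Moreover, $\rho_t^{\bar{a}}$ only involves $\omega_{j+1}^{\bar{a}}$ for $j \in \{t,\dots,t+\tau\}$, i.e. the functions $\omega_{t+1}^{\bar{a}},\dots,\omega_{t+\tau}^{\bar{a}}$ together with the boundary term $\omega_{t+\tau+1}^{\bar{a}}\equiv 1$ (an empty product, hence not estimated); in particular $\omega_t^{\bar{a}}$ never appears, so orthogonality there is trivial. For each remaining $\omega_m^{\bar{a}}$ with $m\in\{t+1,\dots,t+\tau\}$, the only place it occurs is the $j=m-1$ summand, and the corresponding derivative is $\E[(\mathbbm{1}_{\{A_{m-1}=a_{m-1}\}} - \pi_{m-1}^{\bar{a}}(\bar{H}_{m-1}))\,\delta(\bar{H}_{m-1})\prod_{t\le k<m-1}\pi_k^{\bar{a}}(\bar{H}_k)]$; conditioning on $\bar{H}_{m-1}$ and applying (i) makes this vanish.

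The substantive step is orthogonality with respect to a single propensity $\pi_m^{\bar{a}}$, since $\pi_m^{\bar{a}}$ enters $\rho_t^{\bar{a}}$ in three distinct places: the leading product $\prod_{j}\pi_j^{\bar{a}}$, the residual of the $j=m$ summand, and the prefactor $\prod_{t\le k<j}\pi_k^{\bar{a}}$ of every summand with $j>m$. Differentiating and taking expectations, I would first show that all $j>m$ contributions vanish: each has the form $\E[(\mathbbm{1}_{\{A_j=a_j\}}-\pi_j^{\bar{a}})\times(\bar{H}_j\text{-measurable})]$ and dies by (i). The two surviving contributions are the derivative of the leading product, $\E[\delta_m\prod_{j\ne m}\pi_j^{\bar{a}}]$, and the derivative of the $j=m$ residual, $-\E[\delta_m\,\omega_{m+1}^{\bar{a}}(\bar{H}_m)\prod_{t\le k<m}\pi_k^{\bar{a}}]$. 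Writing $\prod_{j\ne m}\pi_j^{\bar{a}} = \prod_{t\le k<m}\pi_k^{\bar{a}}\cdot\prod_{m<j\le t+\tau}\pi_j^{\bar{a}}$ and taking the conditional expectation through $\bar{H}_m$, identity (ii) gives $\E[\prod_{m<j\le t+\tau}\pi_j^{\bar{a}}\mid \bar{H}_m] = \omega_{m+1}^{\bar{a}}(\bar{H}_m)$, so the two terms are equal and cancel.

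I expect the main obstacle to be purely the bookkeeping in this last step: carefully tracking every occurrence of $\pi_m^{\bar{a}}$, and handling the boundary indices $m=t$ (empty prefactor $\prod_{t\le k<t}=1$) and $m=t+\tau$ (boundary weight $\omega_{t+\tau+1}^{\bar{a}}\equiv1$) so that the cancellation via (ii) goes through uniformly. No single computation is deep; the care lies in the index accounting and in confirming that the residual property (i) annihilates exactly those terms not matched by the telescoping produced by (ii).
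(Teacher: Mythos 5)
Your proof is correct, and it reaches the lemma by a genuinely different route than the paper. The paper does not differentiate in the nuisance directions at all: it computes, via the product rule for point-mass perturbations of the data distribution, the efficient influence function of $\omega_t^{\bar a}(\bar h_t)$ and then of $\E\big[\omega_t^{\bar a}(\bar H_t)\big]$, obtaining $\f\big(\E[\omega_t^{\bar a}(\bar H_t)]\big) = \rho_t^{\bar a}(\bar Z_{t+\tau}) - \E\big[\omega_t^{\bar a}(\bar H_t)\big]$, so that Neyman-orthogonality follows from the general fact that uncentered efficient influence functions are first-order insensitive to nuisance perturbations, exactly mirroring the argument for $\gamma_t^{\bar a}$ in Lemma~\ref{lemma:orthogonality_gamma}. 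You instead verify orthogonality by direct computation of the cross-derivatives: trivially for the $\mu_j^{\bar a}$ (and, as you correctly observe, also for $\omega_t^{\bar a}$, which never enters $\rho_t^{\bar a}$ --- a detail the paper glosses over, since only $\omega_{t+1}^{\bar a},\dots,\omega_{t+\tau}^{\bar a}$ and the constant boundary term $\omega_{t+\tau+1}^{\bar a}\equiv 1$ appear); via the residual property $\E\big[\mathbbm{1}_{\{A_j=a_j\}}-\pi_j^{\bar a}(\bar H_j)\mid\bar H_j\big]=0$ for perturbations of each $\omega_m^{\bar a}$ and for the $j>m$ contributions of a $\pi_m^{\bar a}$-perturbation; and via the tower identity $\E\big[\prod_{m<j\le t+\tau}\pi_j^{\bar a}(\bar H_j)\mid\bar H_m\big]=\omega_{m+1}^{\bar a}(\bar H_m)$ to cancel the two surviving $\pi_m^{\bar a}$ terms (derivative of the leading product against derivative of the $j=m$ residual). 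Your bookkeeping checks out, including the boundary cases $m=t$ and $m=t+\tau$ and the implicit point that summands with $j<m$ contain no $\pi_m^{\bar a}$ because the $\omega$'s are held fixed as separate nuisances when $\pi_m^{\bar a}$ is perturbed. As for what each route buys: yours is elementary and self-contained, and since every vanishing step conditions on some $\bar H_j\supseteq\bar H_t$, it in fact delivers the conditional statement $\E\big[D_h\rho_t^{\bar a}\mid \bar H_t\big]=0$, which is precisely the form the proof of Theorem~\ref{theorem:orthogonality} uses when it sets the pathwise derivative of $\rho_t^{\bar a}$ to zero under the outer expectation; the paper's heavier influence-function calculus buys extra structure --- exhibiting $\rho_t^{\bar a}$ as the uncentered efficient influence function of $\E\big[\omega_t^{\bar a}(\bar H_t)\big]$ parallels the treatment of $\gamma_t^{\bar a}$ and transfers directly, by linearity and a product-rule argument, to the CATE analogue in Lemma~\ref{lemma:orthogonality_rho_cate}.
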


\begin{proof}
     First, $\rho_t^{\bar{a}}(\bar{Z}_{t+\tau})$ is trivially Neyman-orthogonal with respect to $\mu_j^{\bar{a}}$ as it is independent of it. Second, we show that $\rho_t^{\bar{a}}(\bar{Z}_{t+\tau})$ is the uncentered efficient influence function of $\E\Big[ \omega_t^{\bar{a}}(\bar{H}_t) \Big]$, and hence, Neyman-orthogonal with respect to the nuisance functions $\{\pi_j^{\bar{a}}, \omega_j^{\bar{a}}\}_{j=t}^{t+\tau}$. For this, we make use of the chain rule for pathwise derivatives \citep{Kennedy.2022, Luedtke.2024}. First, we compute the efficient influence function of $\omega_t^{\bar{a}}(\bar{h}_t)$ via
\begin{align}
&\f \Big(\omega_t^{\bar{a}}(\bar{h}_t)\Big)\\
=& \f \Big( \E \Big[ \prod_{j=t}^{t+\tau} \pi_j^{\bar{a}}( \bar{H}_j) \; \Big| \; \bar{H}_t = \bar{h}_t \Big]\Big)\\
=& \f \Big( \sum_{{h}_{t+1:t+\tau}}p(h_{t+1:t+\tau}\mid \bar{h}_t) \prod_{j=t}^{t+\tau}  \pi_j^{\bar{a}}( \bar{h}_j)\Big)\\
=& \sum_{{h}_{t+1:t+\tau}}\Big[ \underbrace{  \f \Big( p(h_{t+1:t+\tau}\mid \bar{h}_t) \Big) \prod_{j=t}^{t+\tau} \pi_j^{\bar{a}}( \bar{h}_j)}_{(*)} + \underbrace{ p(h_{t+1:t+\tau}\mid \bar{h}_t)\f \Big( \prod_{j=t}^{t+\tau} \pi_j^{\bar{a}}( \bar{h}_j) \Big)}_{(**)}\Big].
\end{align}
For $(*)$, we have that
\begin{align}
    &\f \Big( p(h_{t+1:t+\tau}\mid \bar{h}_t) \Big)\prod_{j=t}^{t+\tau}\pi_j^{\bar{a}}( \bar{h}_j)\\
    =&\f \Big( p(\bar{h}_{t+\tau})/p(\bar{h}_{t}) \Big)\prod_{j=t}^{t+\tau}\pi_j^{\bar{a}}( \bar{h}_j) \\
    =& \frac{\mathbbm{1}_{\{\bar{h}_t = \bar{H}_t \}}}{p(\bar{h}_t)}\Big(\mathbbm{1}_{\{{h}_{t+1:t+\tau}={H}_{t+1:t+\tau}\}}- p({h}_{t+1:t+\tau}|\bar{h}_{t})\Big)\prod_{j=t}^{t+\tau}\pi_j^{\bar{a}}( \bar{h}_j).
\end{align}

Further, we obtain $(**)$ via
\begin{align}
    &p(h_{t+1:t+\tau}\mid \bar{h}_t)\f\Big( \prod_{j=t}^{t+\tau}\pi_j^{\bar{a}}( \bar{h}_j) \Big)\\
    =&p(h_{t+1:t+\tau}\mid \bar{h}_t)\sum_{j=t}^{t+\tau} \f \Big( \pi_j^{\bar{a}}( \bar{h}_j) \Big) \prod_{k \neq j} \pi_k^{\bar{a}}( \bar{h}_k)\\
    =& p(h_{t+1:t+\tau}\mid \bar{h}_t)\sum_{j=t}^{t+\tau} \frac{\mathbbm{1}_{\{\bar{h}_j=\bar{H}_j\}}}{p(\bar{h}_j)}\Big(\mathbbm{1}_{\{a_j = A_j\}} - \pi_j^{\bar{a}}( \bar{h}_j) \Big) \prod_{k \neq j} \pi_k^{\bar{a}}( \bar{h}_k)\\
    =& p(h_{t+1:t+\tau}\mid \bar{h}_t)\sum_{j=t}^{t+\tau} \frac{\mathbbm{1}_{\{\bar{h}_{t}=\bar{H}_{t}\}}\mathbbm{1}_{\{{h}_{t+1:j}={H}_{t+1:j}\}}}{p(\bar{h}_j)}\Big(\mathbbm{1}_{\{a_j = A_j\}} - \pi_j^{\bar{a}}( \bar{h}_j) \Big) \prod_{k \neq j} \pi_k^{\bar{a}}( \bar{h}_k)\\
    =& \frac{\mathbbm{1}_{\{\bar{h}_{t}=\bar{H}_{t}\}}}{p(\bar{h}_t)} \sum_{j=t}^{t+\tau} \mathbbm{1}_{\{{h}_{t+1:j}={H}_{t+1:j}\}}p(h_{j+1:t+\tau}\mid \bar{h}_{j})\Big(\mathbbm{1}_{\{a_j = A_j\}} - \pi_j^{\bar{a}}( \bar{h}_j) \Big) \prod_{k \neq j} \pi_k^{\bar{a}}( \bar{h}_k)
\end{align}
Combining both results, the efficient influence function of $\omega_t^{\bar{a}}(\bar{h}_t)$ is given by
\begin{align}
    &\f \Big( \omega_t^{\bar{a}}(\bar{h}_t) \Big) \\
    =& \frac{\mathbbm{1}_{\{\bar{h}_t = \bar{H}_t \}}}{p(\bar{h}_t)}\sum_{h_{t+1:t+\tau}}\Bigg[ 
     \Big(\mathbbm{1}_{\{{h}_{t+1:t+\tau}={H}_{t+1:t+\tau}\}}- p({h}_{t+1:t+\tau}|\bar{h}_{t})\Big)\prod_{j=t}^{t+\tau}\pi_j^{\bar{a}}( \bar{h}_j)\\
    &+\sum_{j=t}^{t+\tau} \mathbbm{1}_{\{{h}_{t+1:j}={H}_{t+1:j}\}}p(h_{j+1:t+\tau}\mid \bar{h}_{j})\Big(\mathbbm{1}_{\{a_j = A_j\}} - \pi_j^{\bar{a}}( \bar{h}_j) \Big) \prod_{k \neq j} \pi_k^{\bar{a}}( \bar{h}_k)
    \Bigg]\\
    =& \frac{\mathbbm{1}_{\{\bar{h}_t = \bar{H}_t \}}}{p(\bar{h}_t)}\sum_{h_{t+1:t+\tau}}\Bigg[ 
     \Big(\mathbbm{1}_{\{{h}_{t+1:t+\tau}={H}_{t+1:t+\tau}\}}- p({h}_{t+1:t+\tau}|\bar{h}_{t})\Big)\prod_{j=t}^{t+\tau}\pi_j^{\bar{a}}( \bar{h}_j)\\
    &+\sum_{j=t}^{t+\tau} \mathbbm{1}_{\{{h}_{t+1:j}={H}_{t+1:j}\}}p(h_{j+1:t+\tau}\mid \bar{h}_{j})\Big(\mathbbm{1}_{\{a_j = A_j\}} - \pi_j^{\bar{a}}( \bar{h}_j) \Big)\\
    &\quad \times \prod_{t\leq k < j} \pi_k^{\bar{a}}( \bar{h}_k) \prod_{k > j} \pi_k^{\bar{a}}( \bar{h}_k)
    \Bigg]\\
    =& \frac{\mathbbm{1}_{\{\bar{h}_t = \bar{H}_t \}}}{p(\bar{h}_t)} 
    \Bigg[
     \prod_{j=t}^{t+\tau}\pi_j^{\bar{a}}( H_{t+1:j}, \bar{h}_t) - \E \Big[\prod_{j=t}^{t+\tau}\pi_j^{\bar{a}}( \bar{H}_j) \; \Big| \; \bar{H}_t = \bar{h}_t \Big]\\
    &+\sum_{j=t}^{t+\tau} 
    \Big(\mathbbm{1}_{\{a_j = A_j\}} - \pi_j^{\bar{a}}( H_{t+1:j}, \bar{h}_t) \Big)\\
    & \quad \times \prod_{t\leq k < j} \pi_k^{\bar{a}}( {H}_{t+1:k}, \bar{h}_t) \E\Big[ \prod_{k > j} \pi_k^{\bar{a}}( \bar{H}_k) \; \Big| \; H_{t+1:j}, \bar{H}_t = \bar{h}_t\Big]
    \Bigg]\\
    =& \frac{\mathbbm{1}_{\{\bar{h}_t = \bar{H}_t \}}}{p(\bar{h}_t)} 
    \Bigg[- \omega_t^{\bar{a}}(\bar{h}_t) + 
     \prod_{j=t}^{t+\tau}\pi_j^{\bar{a}}( H_{t+1:j}, \bar{h}_t) +\sum_{j=t}^{t+\tau} 
    \Big(\mathbbm{1}_{\{a_j = A_j\}} - \pi_j^{\bar{a}}( H_{t+1:j}, \bar{h}_t) \Big)\\
    & \quad \times \prod_{t\leq k < j} \pi_k^{\bar{a}}( {H}_{t+1:k}, \bar{h}_t) \E\Big[ \prod_{k > j} \pi_k^{\bar{a}}( \bar{H}_k) \; \Big| \; H_{t+1:j}, \bar{H}_t = \bar{h}_t\Big]
    \Bigg].
\end{align}

Using this result, we derive the efficient influence function of $\E\Big[\omega_t^{\bar{a}}(\bar{H}_t)\Big]$ via
\begin{align}
    &\f \Big(\E\Big[\omega_t^{\bar{a}}(\bar{H}_t)\Big]\Big)\\
    =& \sum_{\bar{h}_t} \f\Big( p(\bar{h}_t) \omega_t^{\bar{a}}(\bar{h}_t) \Big)\\
    =& \sum_{\bar{h}_t} \Big[ \f\Big( p(\bar{h}_t)\Big) \omega_t^{\bar{a}}(\bar{h}_t)+   p(\bar{h}_t) \f\Big(\omega_t^{\bar{a}}(\bar{h}_t)\Big)\Big]\\
    =& \omega_t^{\bar{a}}(\bar{H}_t) - \E\Big[ \omega_t^{\bar{a}}(\bar{H}_t) \Big] - \omega_t^{\bar{a}}(\bar{H}_t) \\
    &+ \prod_{j=t}^{t+\tau}\pi_j^{\bar{a}}( \bar{H}_{j}) +\sum_{j=t}^{t+\tau} 
    \Big(\mathbbm{1}_{\{a_j = A_j\}} - \pi_j^{\bar{a}}( \bar{H}_{j}) \Big)
     \prod_{t\leq k < j} \pi_k^{\bar{a}}( \bar{H}_k) \underbrace{\E\Big[ \prod_{k > j} \pi_k^{\bar{a}}( \bar{H}_k) \; \Big| \; \bar{H}_j \Big]}_{=\omega_{j+1}^{\bar{a}}(\bar{H}_j)}\\
    =&  - \E\Big[ \omega_t^{\bar{a}}(\bar{H}_t) \Big] + \rho_t^{\bar{a}}(\bar{Z}_{t+\tau}),
\end{align}
which concludes the proof.
\end{proof}

\clearpage

\begin{lemma}\label{lemma:expectation_gamma}
    Let 
    \begin{align}
        \gamma_t^{\bar{a}}(\bar{Z}_{t+\tau}) = \prod_{j=t}^{t+\tau}\frac{\mathbbm{1}_{\{A_j = a_j\}}}{\pi_j^{\bar{a}}( \bar{H}_{j})} Y_{t + \tau}
    +  \sum_{j = t}^{t+\tau} \mu_j^{\bar{a}}\left(\bar{H}_{j}\right)\left(1 -  \frac{\mathbbm{1}_{\{A_j = a_j\}}}{\pi_j^{\bar{a}}( \bar{H}_{j})}\right) \prod_{k =t}^{j-1}\frac{\mathbbm{1}_{\{A_k = a_k\}}}{\pi_k^{\bar{a}}( \bar{H}_{k})} ,
    \end{align}
    where
    \begin{align}
        \pi_j^{\bar{a}}(\bar{h}_j) = \mathbb{P}(A_{j}=a_{j} \mid\bar{H}_j=\bar{h}_j)
    \end{align}
    is the propensity score, and
    \begin{align}
     \mu_{t+\tau}^{\bar{a}}(\bar{h}_{t+\tau}) = \E\Big[ Y_{t:t+\tau} \; \Big| \; \bar{H}_{t+\tau}=\bar{h}_{t+\tau}, A_{t+\tau}=a_{t+\tau} \Big]
\end{align}
and
\begin{align}
     \mu_{j}^{\bar{a}}(\bar{h}_{j}) = \E\Big[ \mu_{j+1}^{\bar{a}}(\bar{h}_{j+1}) \; \Big| \; \bar{H}_{j}=\bar{h}_{j}, A_{j}=a_{j} \Big],
\end{align}
such that
\begin{align}
    \mu_{t}^{\bar{a}}(\bar{h}_{t}) = \E\Big[ Y_{t:t+\tau}[a_{t:t+\tau}] \; \Big| \; \bar{H}_{t}=\bar{h}_{t} \Big]
\end{align}
is the conditional average potential outcome. Then, it holds that
\begin{align}
    \E\Big[\gamma_t^{\bar{a}}(\bar{Z}_{t+\tau}) \; \Big| \; \bar{H}_t \Big] = \mu_t^{\bar{a}}(\bar{H}_t).
\end{align}
\end{lemma}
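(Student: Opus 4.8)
The plan is to establish the identity by iterated (tower) conditional expectations, exploiting the recursive structure of the nuisance functions $\pi_j^{\bar{a}}$ and $\mu_j^{\bar{a}}$. The cleanest route is to first rewrite $\gamma_t^{\bar{a}}$ in a telescoping form. Writing $R_j = \mathbbm{1}_{\{A_j=a_j\}}/\pi_j^{\bar{a}}(\bar{H}_j)$ and $W_j = \prod_{k=t}^{j} R_k$ with the convention $W_{t-1}=1$ (so the empty product at $j=t$ is one), and noting that $(1-R_j)W_{j-1} = W_{j-1}-W_j$, I would show that
\[
\gamma_t^{\bar{a}}(\bar{Z}_{t+\tau}) = \mu_t^{\bar{a}}(\bar{H}_t) + \sum_{j=t}^{t+\tau} W_j\big(\mu_{j+1}^{\bar{a}}(\bar{H}_{j+1}) - \mu_j^{\bar{a}}(\bar{H}_j)\big),
\]
under the additional convention $\mu_{t+\tau+1}^{\bar{a}}(\bar{H}_{t+\tau+1}) := Y_{t+\tau}$. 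This rearrangement is pure algebra (a reindexing and recombination of the two sums, with the leading $\mu_t^{\bar{a}}(\bar{H}_t)$ arising from the $W_{t-1}=1$ boundary term), and it reduces the claim to showing that every correction term in the sum has zero conditional expectation given $\bar{H}_t$.

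Next I would argue that each summand is a martingale difference with respect to the increasing history filtration $\sigma(\bar{H}_t) \subseteq \sigma(\bar{H}_{t+1}) \subseteq \cdots$. Fix $j$ and condition on $\bar{H}_j$. Since $W_{j-1}$, $\pi_j^{\bar{a}}(\bar{H}_j)$, and $\mu_j^{\bar{a}}(\bar{H}_j)$ are all $\bar{H}_j$-measurable, I can pull them out of the inner expectation, so that up to the factor $W_{j-1}/\pi_j^{\bar{a}}(\bar{H}_j)$ what remains is $\E[\mathbbm{1}_{\{A_j=a_j\}}(\mu_{j+1}^{\bar{a}}(\bar{H}_{j+1}) - \mu_j^{\bar{a}}(\bar{H}_j)) \mid \bar{H}_j]$. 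Conditioning additionally on $A_j$ and invoking the recursive definition $\E[\mu_{j+1}^{\bar{a}}(\bar{H}_{j+1}) \mid \bar{H}_j, A_j=a_j] = \mu_j^{\bar{a}}(\bar{H}_j)$ (and, for the terminal step $j=t+\tau$, the base case $\E[Y_{t+\tau}\mid \bar{H}_{t+\tau}, A_{t+\tau}=a_{t+\tau}] = \mu_{t+\tau}^{\bar{a}}(\bar{H}_{t+\tau})$), the bracket integrates to $\mu_j^{\bar{a}}(\bar{H}_j)$ on the event $A_j=a_j$, which cancels the subtracted $\mu_j^{\bar{a}}(\bar{H}_j)$; meanwhile $\E[\mathbbm{1}_{\{A_j=a_j\}}\mid \bar{H}_j] = \pi_j^{\bar{a}}(\bar{H}_j)$ clears the denominator. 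Hence $\E[W_j(\mu_{j+1}^{\bar{a}}(\bar{H}_{j+1})-\mu_j^{\bar{a}}(\bar{H}_j))\mid \bar{H}_j]=0$, and since $\sigma(\bar{H}_t)\subseteq\sigma(\bar{H}_j)$, the tower property gives a vanishing conditional expectation given the coarser $\bar{H}_t$ as well.

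Combining, all correction terms vanish and $\E[\gamma_t^{\bar{a}}(\bar{Z}_{t+\tau})\mid \bar{H}_t]=\mu_t^{\bar{a}}(\bar{H}_t)$, as claimed. I expect the main obstacle to be bookkeeping rather than conceptual depth: one must verify carefully that $W_{j-1}$ is genuinely $\bar{H}_j$-measurable (it is a product over $k<j$ of quantities determined by $\bar{H}_k$ and $A_k$, all contained in $\bar{H}_j=(\bar{Y}_{j-1},\bar{X}_j,\bar{A}_{j-1})$), and handle the terminal index $j=t+\tau$ uniformly through the $Y_{t+\tau}$ convention. It is worth stressing that this identity is purely a consequence of the \emph{observational} definitions of $\pi_j^{\bar{a}}$ and of the recursively defined nested conditional expectations $\mu_j^{\bar{a}}$; the causal identification of $\mu_t^{\bar{a}}$ with the CAPO is not invoked here. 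An equivalent alternative would be a direct backward induction that peels off the innermost conditional expectation from $j=t+\tau$ down to $j=t$, performing the same cancellations step by step without first stating the telescoping identity.
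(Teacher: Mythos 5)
Your proof is correct, and it takes a mildly but genuinely different route from the paper's. The paper splits $\gamma_t^{\bar{a}}$ into the pure IPW term $\prod_{j=t}^{t+\tau}\frac{\mathbbm{1}_{\{A_j=a_j\}}}{\pi_j^{\bar{a}}(\bar{H}_j)}Y_{t+\tau}$ plus the augmentation sum; it kills each augmentation term exactly as you do (tower property at $\bar{H}_j$, pull out the $\bar{H}_j$-measurable factors, use $\E[\mathbbm{1}_{\{A_j=a_j\}}\mid\bar{H}_j]=\pi_j^{\bar{a}}(\bar{H}_j)$ so that $1-\pi_j^{\bar{a}}/\pi_j^{\bar{a}}=0$), but it handles the IPW term by simply asserting $\E\big[\prod_j \frac{\mathbbm{1}_{\{A_j=a_j\}}}{\pi_j^{\bar{a}}(\bar{H}_j)}Y_{t+\tau}\,\big|\,\bar{H}_t\big]=\mu_t^{\bar{a}}(\bar{H}_t)$ via an underbrace, deferring to \citet{Frauen.2025} and \citet{vanderLaan.2012} for that sequential-IPW identity. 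Your telescoping rewrite $\gamma_t^{\bar{a}}=\mu_t^{\bar{a}}(\bar{H}_t)+\sum_{j=t}^{t+\tau}W_j\big(\mu_{j+1}^{\bar{a}}(\bar{H}_{j+1})-\mu_j^{\bar{a}}(\bar{H}_j)\big)$ (with $\mu_{t+\tau+1}^{\bar{a}}:=Y_{t+\tau}$, and the algebra $(1-R_j)W_{j-1}=W_{j-1}-W_j$ checks out) avoids this split entirely: both the IPW term and the augmentation are absorbed into a single sum of martingale differences, each of which vanishes using only the recursive definition of $\mu_j^{\bar{a}}$, its base case, and $\E[\mathbbm{1}_{\{A_j=a_j\}}\mid\bar{H}_j]=\pi_j^{\bar{a}}(\bar{H}_j)$. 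What this buys you is self-containedness — you never need the sequential-IPW identity as an external input, whereas the paper's version implicitly relies on it — plus an explicit martingale-difference representation of the DR pseudo-outcome that is standard in the sequential doubly-robust literature and useful beyond this lemma (e.g., for variance bounds). Your measurability check on $W_{j-1}$ and your observation that positivity makes the weights well-defined and that the causal (CAPO) interpretation of $\mu_t^{\bar{a}}$ is never invoked are both accurate; the paper's proof likewise uses only the observational recursions.
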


\begin{proof}
\begin{align}
    &\E\Big[\gamma_t^{\bar{a}}(\bar{Z}_{t+\tau}) \; \Big| \; \bar{H}_t \Big]\\
    =&\E\Big[ \prod_{j=t}^{t+\tau}\frac{\mathbbm{1}_{\{A_j = a_j\}}}{\pi_j^{\bar{a}}( \bar{H}_{j})} Y_{t + \tau}
    +  \sum_{j = t}^{t+\tau} \mu_j^{\bar{a}}\left(\bar{H}_{j}\right)\left(1 -  \frac{\mathbbm{1}_{\{A_j = a_j\}}}{\pi_j^{\bar{a}}( \bar{H}_{j})}\right) \prod_{k =t}^{j-1}\frac{\mathbbm{1}_{\{A_k = a_k\}}}{\pi_k^{\bar{a}}( \bar{H}_{k})}  \; \Big| \; \bar{H}_t \Big]\\
    =& \underbrace{\E\Big[ \prod_{j=t}^{t+\tau}\frac{\mathbbm{1}_{\{A_j = a_j\}}}{\pi_j^{\bar{a}}( \bar{H}_{j})} Y_{t + \tau} \; \Big| \; \bar{H}_t\Big]}_{=\mu_t^{\bar{a}}(\bar{H}_t)}\\
    &+ \sum_{j = t}^{t+\tau} \E \Bigg[ \E \Big[ \mu_j^{\bar{a}}\left(\bar{H}_{j}\right)\left(1 -  \frac{\mathbbm{1}_{\{A_j = a_j\}}}{\pi_j^{\bar{a}}( \bar{H}_{j})}\right) \prod_{k =t}^{j-1}\frac{\mathbbm{1}_{\{A_k = a_k\}}}{\pi_k^{\bar{a}}( \bar{H}_{k})} \; \Big| \;\bar{H}_j\Big] \; \Big| \; \bar{H}_t \Bigg]\\
    =& \mu_t^{\bar{a}}(\bar{H}_t)
    + \sum_{j = t}^{t+\tau} \E \Bigg[ \mu_j^{\bar{a}}\left(\bar{H}_{j}\right)\left(1 -  \frac{\E[\mathbbm{1}_{\{A_j=a_j\}}|\bar{H}_j]}{\pi_j^{\bar{a}}( \bar{H}_{j})}\right) \prod_{k =t}^{j-1}\frac{\mathbbm{1}_{\{A_k = a_k\}}}{\pi_k^{\bar{a}}( \bar{H}_{k})} \; \Big| \; \bar{H}_t \Bigg]\\
    =& \mu_t^{\bar{a}}(\bar{H}_t)
    + \sum_{j = t}^{t+\tau} \E \Bigg[ \mu_j^{\bar{a}}\left(\bar{H}_{j}\right)\left(1 -  \frac{\pi_j^{\bar{a}}(\bar{H}_j)}{\pi_j^{\bar{a}}( \bar{H}_{j})}\right) \prod_{k =t}^{j-1}\frac{\mathbbm{1}_{\{A_k = a_k\}}}{\pi_k^{\bar{a}}( \bar{H}_{k})} \; \Big| \; \bar{H}_t \Bigg]\\
    =& \mu_t^{\bar{a}}(\bar{H}_t).
\end{align}
The DR-pseudo outcomes, which are a sub-component of our pseudo-outcomes, have been introduced in \citet{Frauen.2025,vanderLaan.2012}.
\end{proof}

\clearpage

\begin{lemma}\label{lemma:expectation_rho}
    Let 
    \begin{align}
    \rho_t^{\bar{a}}(\bar{Z}_{t+\tau}) = \prod_{j=t}^{t+\tau}\pi_j^{\bar{a}}( \bar{H}_{j}) 
    +\sum_{j=t}^{t+\tau} 
    \Big(\mathbbm{1}_{\{a_j = A_j\}} - \pi_j^{\bar{a}}( \bar{H}_{j}) \Big)
     \omega_{j+1}^{\bar{a}}(\bar{H}_j) \prod_{t\leq k < j} \pi_k^{\bar{a}}( \bar{H}_{k}), 
\end{align}
where
\begin{align}
        \pi_j^{\bar{a}}(\bar{h}_j) = \mathbb{P}(A_{j}=a_{j}|\bar{H}_j=\bar{h}_j)
    \end{align}
    is the propensity score, and
\begin{align}
    \omega_j^{\bar{a}}(\bar{h}_\ell) = p(A_{j:t+\tau}=a_{j:t+\tau} \mid \bar{H}_\ell=\bar{h}_\ell)
    =\E \Big[ \prod_{k=j}^{t+\tau} \pi_k^{\bar{a}}(\bar{H}_k)\; \Big| \; \bar{H}_\ell=\bar{h}_\ell \Big]
    \end{align}
    is the weight function. Then, it holds that
    \begin{align}
        \E\Big[\rho_t^{\bar{a}}(\bar{Z}_{t+\tau}) \; \Big| \; \bar{H}_t \Big]
        = \omega_t^{\bar{a}}(\bar{H}_t).
    \end{align}
\end{lemma}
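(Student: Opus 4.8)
The plan is to split $\rho_t^{\bar{a}}(\bar{Z}_{t+\tau})$ into its leading product term and its correction sum, show that the correction sum has conditional mean zero given $\bar{H}_t$, and observe that the product term alone reproduces $\omega_t^{\bar{a}}(\bar{H}_t)$ by definition. Writing $\rho_t^{\bar{a}}=P+S$ with $P=\prod_{j=t}^{t+\tau}\pi_j^{\bar{a}}(\bar{H}_j)$ and $S=\sum_{j=t}^{t+\tau}\big(\mathbbm{1}_{\{a_j=A_j\}}-\pi_j^{\bar{a}}(\bar{H}_j)\big)\,\omega_{j+1}^{\bar{a}}(\bar{H}_j)\prod_{t\le k<j}\pi_k^{\bar{a}}(\bar{H}_k)$, the defining formula for the weight function gives $\E[P\mid\bar{H}_t]=\E[\prod_{j=t}^{t+\tau}\pi_j^{\bar{a}}(\bar{H}_j)\mid\bar{H}_t]=\omega_t^{\bar{a}}(\bar{H}_t)$ immediately. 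It therefore remains only to prove $\E[S\mid\bar{H}_t]=0$.

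For the sum, I would treat each summand separately and exploit the martingale-difference structure of the centered indicator $\mathbbm{1}_{\{a_j=A_j\}}-\pi_j^{\bar{a}}(\bar{H}_j)$. The key observation is that all the remaining factors of the $j$-th summand are measurable with respect to $\bar{H}_j$: the weight $\omega_{j+1}^{\bar{a}}(\bar{H}_j)$ is by definition a function of $\bar{H}_j$, and each $\pi_k^{\bar{a}}(\bar{H}_k)$ with $k<j$ depends only on $\bar{H}_k\subseteq\bar{H}_j$. Using the tower property with the nesting $\sigma(\bar{H}_t)\subseteq\sigma(\bar{H}_j)$ (valid because $j\ge t$), I would pull these $\bar{H}_j$-measurable factors out of the inner conditional expectation and apply
\[
\E\big[\mathbbm{1}_{\{a_j=A_j\}}-\pi_j^{\bar{a}}(\bar{H}_j)\mid\bar{H}_j\big]=\pi_j^{\bar{a}}(\bar{H}_j)-\pi_j^{\bar{a}}(\bar{H}_j)=0,
\]
which is immediate from $\E[\mathbbm{1}_{\{a_j=A_j\}}\mid\bar{H}_j]=\mathbb{P}(A_j=a_j\mid\bar{H}_j)=\pi_j^{\bar{a}}(\bar{H}_j)$. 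Thus every summand vanishes under $\E[\cdot\mid\bar{H}_t]$, so $\E[S\mid\bar{H}_t]=0$, and combining with the product term yields the claim.

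I do not anticipate a genuine obstacle here; the argument is essentially bookkeeping, and the only points requiring care are organizational. One must correctly track which variables each factor depends on — in particular that $\pi_j^{\bar{a}}(\bar{H}_j)$ and $\omega_{j+1}^{\bar{a}}(\bar{H}_j)$ are $\bar{H}_j$-measurable while the indicator $\mathbbm{1}_{\{a_j=A_j\}}$ is not — and one must condition at exactly the level $\bar{H}_j$, since this is precisely where the centering makes the summand mean zero while everything else is already determined; the boundary term $j=t+\tau$ is handled by the empty-product convention $\omega_{t+\tau+1}^{\bar{a}}\equiv 1$. An alternative would be to invoke Lemma~\ref{lemma:orthogonality_rho}, which identifies $\rho_t^{\bar{a}}$ with the uncentered efficient influence function of $\E[\omega_t^{\bar{a}}(\bar{H}_t)]$, but that only delivers the unconditional identity $\E[\rho_t^{\bar{a}}]=\E[\omega_t^{\bar{a}}(\bar{H}_t)]$, so the direct conditional computation above is the cleaner and fully sufficient route.
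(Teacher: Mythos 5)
Your proposal is correct and matches the paper's own proof essentially step for step: the paper likewise splits $\rho_t^{\bar{a}}$ into the leading product, whose conditional expectation given $\bar{H}_t$ is $\omega_t^{\bar{a}}(\bar{H}_t)$ by definition, and the correction sum, which it kills term by term via the tower property at level $\bar{H}_j$, pulling out the $\bar{H}_j$-measurable factors $\omega_{j+1}^{\bar{a}}(\bar{H}_j)\prod_{t\le k<j}\pi_k^{\bar{a}}(\bar{H}_k)$ and using $\E[\mathbbm{1}_{\{a_j=A_j\}}\mid\bar{H}_j]=\pi_j^{\bar{a}}(\bar{H}_j)$. Your additional remarks (the martingale-difference framing, the $j=t+\tau$ boundary convention, and the observation that Lemma~\ref{lemma:orthogonality_rho} would only give the unconditional identity) are accurate but do not change the argument.
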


\begin{proof}
\begin{align}
 &\E\Big[\rho_t^{\bar{a}}(\bar{Z}_{t+\tau}) \; \Big| \; \bar{H}_t \Big]    \\
=& \E\Big[  \prod_{j=t}^{t+\tau}\pi_j^{\bar{a}}( \bar{H}_{j}) 
    +\sum_{j=t}^{t+\tau} 
    \Big(\mathbbm{1}_{\{a_j = A_j\}} - \pi_j^{\bar{a}}( \bar{H}_{j}) \Big)
     \omega_{j+1}^{\bar{a}}(\bar{H}_j) \prod_{t\leq k < j} \pi_k^{\bar{a}}( \bar{H}_{k}) \; \Big| \; \bar{H}_t \Big]\\
=& \underbrace{\E\Big[  \prod_{j=t}^{t+\tau}\pi_j^{\bar{a}}( \bar{H}_{j}) \; \Big| \; \bar{H}_t \Big]}_{=\omega_t^{\bar{a}}(\bar{H}_t)}+\E\Big[\sum_{j=t}^{t+\tau} 
    \Big(\mathbbm{1}_{\{a_j = A_j\}} - \pi_j^{\bar{a}}( \bar{H}_{j}) \Big)
     \omega_{j+1}^{\bar{a}}(\bar{H}_j) \prod_{t\leq k < j} \pi_k^{\bar{a}}( \bar{H}_{k}) \; \Big| \; \bar{H}_t \Big]\\
=& \omega_t^{\bar{a}}(\bar{H}_t) 
 +\sum_{j=t}^{t+\tau} \E\Bigg[
   \E\Big[ \Big(\mathbbm{1}_{\{a_j = A_j\}} - \pi_j^{\bar{a}}( \bar{H}_{j}) \Big)
     \omega_{j+1}^{\bar{a}}(\bar{H}_j) \prod_{t\leq k < j} \pi_k^{\bar{a}}( \bar{H}_{k}) \; \Big| \; \bar{H}_j \Big] \; \Big| \; \bar{H}_t\Bigg]\\
=& \omega_t^{\bar{a}}(\bar{H}_t) 
 +\sum_{j=t}^{t+\tau} \E\Bigg[
    \Big(\E\Big[\mathbbm{1}_{\{a_j = A_j\}} \; \Big| \; \bar{H}_j \Big]- \pi_j^{\bar{a}}( \bar{H}_{j}) \Big)
     \omega_{j+1}^{\bar{a}}(\bar{H}_j) \prod_{t\leq k < j} \pi_k^{\bar{a}}( \bar{H}_{k})  \; \Big| \; \bar{H}_t\Bigg]\\
=& \omega_t^{\bar{a}}(\bar{H}_t) 
 +\sum_{j=t}^{t+\tau} \E\Bigg[
    \Big(\pi_j^{\bar{a}}( \bar{H}_{j})- \pi_j^{\bar{a}}( \bar{H}_{j}) \Big)
     \omega_{j+1}^{\bar{a}}(\bar{H}_j) \prod_{t\leq k < j} \pi_k^{\bar{a}}( \bar{H}_{k})  \; \Big| \; \bar{H}_t\Bigg]\\
=& \omega_t^{\bar{a}}(\bar{H}_t) .
\end{align}

\end{proof}

\clearpage

\subsubsection{Theorems (CAPOs)}\label{sec:proof_capo_theorems}
We now prove the CAPO version of our theorems from the main paper. For both proofs, we leverage additional helping lemmas that we derived in Supplement~\ref{sec:proof_capo_lemmas}.
\begin{theorem}[Weighted population risk (CAPO)]\label{theorem:oracle}
Let 
\begin{align}
    \xi_t^{\bar{a}}(\bar{Z}_{t+\tau}) 
    = \mu_t^{\bar{a}}(\bar{H}_t) + \frac{\omega_t^{\bar{a}}(\bar{H}_t)}{\rho_t^{\bar{a}}(\bar{Z}_{t+\tau})}\Big( \gamma_t^{\bar{a}}(\bar{Z}_{t+\tau}) - \mu_t^{\bar{a}}(\bar{H}_t)\Big),
\end{align}
where
\begin{align}
    \rho_t^{\bar{a}}(\bar{Z}_{t+\tau}) = \prod_{j=t}^{t+\tau}\pi_j^{\bar{a}}( \bar{H}_{j}) 
    +\sum_{j=t}^{t+\tau} 
    \Big(\mathbbm{1}_{\{a_j = A_j\}} - \pi_j^{\bar{a}}( \bar{H}_{j}) \Big)
     \omega_{j+1}^{\bar{a}}(\bar{H}_j)  \prod_{t\leq k < j} \pi_k^{\bar{a}}( \bar{H}_{k}), 
\end{align}
and
\begin{align}
    \gamma_t^{\bar{a}}(\bar{Z}_{t+\tau}) = \prod_{j =t}^{t+\tau}\frac{\mathbbm{1}_{\{A_j = a_j\}}}{\pi_j^{\bar{a}}( \bar{H}_{j})} Y_{t + \tau}
    +  \sum_{j = t}^{t+\tau} \mu_j^{\bar{a}}\left(\bar{H}_{j}\right)\left(1 -  \frac{\mathbbm{1}_{\{A_j = a_j\}}}{\pi_j^{\bar{a}}( \bar{H}_{j})}\right) \prod_{k =t}^{j-1}\frac{\mathbbm{1}_{\{A_k = a_k\}}}{\pi_k^{\bar{a}}( \bar{H}_{k})},
\end{align}
with $\bar{Z}_{t+\tau}=(\bar{H}_{t+\tau}, A_{t+\tau}, Y_{t+\tau})$. Then, the population risk function 
    \begin{align}
        \mathcal{L}(g;\eta^{\bar{a}})
        = \frac{ 1}{\E\Big[\omega_t^{\bar{a}}(\bar{H}_t)\Big]} 
        \E \Bigg[ \rho_t^{\bar{a}}(\bar{Z}_{t+\tau})  \Big( \xi_t^{\bar{a}}(\bar{Z}_{t+\tau}) - g(\bar{H}_t)\Big)^2
          \Bigg]
    \end{align}
minimizes the oracle risk
\begin{align}
    \mathcal{L}^*(g;\eta^{\bar{a}}) =\frac{ 1}{\E\Big[\omega_t^{\bar{a}}(\bar{H}_t)\Big]} \E\Bigg[ \omega_t^{\bar{a}}(\bar{H}_t)\Big(\mu_t^{\bar{a}}(\bar{H}_t)-g(\bar{H}_t)\Big)^2 \Bigg].
\end{align}
\end{theorem}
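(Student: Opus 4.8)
The plan is to show that the population risk $\mathcal{L}(g;\eta^{\bar{a}})$ and the oracle risk $\mathcal{L}^*(g;\eta^{\bar{a}})$ coincide as functionals of $g$ up to an additive constant that does not depend on $g$; since they carry the same normalizing factor $1/\E[\omega_t^{\bar{a}}(\bar{H}_t)]$, this immediately forces the two risks to share the same minimizer, which is precisely the sense in which the population risk ``minimizes the oracle risk.'' The two helper lemmas from Supplement~\ref{sec:proof_capo_lemmas} do all the probabilistic work, so the argument is essentially an algebraic expansion followed by two conditional-expectation identities.

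First I would substitute the definition of the pseudo-outcome $\xi_t^{\bar{a}}$ and write the residual as
\[
\xi_t^{\bar{a}}(\bar{Z}_{t+\tau}) - g(\bar{H}_t) = \big(\mu_t^{\bar{a}}(\bar{H}_t) - g(\bar{H}_t)\big) + \frac{\omega_t^{\bar{a}}(\bar{H}_t)}{\rho_t^{\bar{a}}(\bar{Z}_{t+\tau})}\big(\gamma_t^{\bar{a}}(\bar{Z}_{t+\tau}) - \mu_t^{\bar{a}}(\bar{H}_t)\big).
\]
Expanding the square and multiplying through by the weight $\rho_t^{\bar{a}}$ produces exactly three terms: a ``clean'' quadratic $\rho_t^{\bar{a}}\,(\mu_t^{\bar{a}} - g)^2$; a cross term $2\,\omega_t^{\bar{a}}(\mu_t^{\bar{a}} - g)(\gamma_t^{\bar{a}} - \mu_t^{\bar{a}})$, in which one sees the factor $\rho_t^{\bar{a}}$ cancel against the $1/\rho_t^{\bar{a}}$ coming from $\xi_t^{\bar{a}}$; and a remainder $\tfrac{(\omega_t^{\bar{a}})^2}{\rho_t^{\bar{a}}}(\gamma_t^{\bar{a}} - \mu_t^{\bar{a}})^2$ that is manifestly free of $g$.

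The core of the argument is then to take expectations term by term via the tower property, conditioning on $\bar{H}_t$. For the clean quadratic, since $\mu_t^{\bar{a}}(\bar{H}_t)$ and $g(\bar{H}_t)$ are $\bar{H}_t$-measurable, I would pull $(\mu_t^{\bar{a}} - g)^2$ outside the inner conditional expectation and invoke Lemma~\ref{lemma:expectation_rho}, which gives $\E[\rho_t^{\bar{a}} \mid \bar{H}_t] = \omega_t^{\bar{a}}(\bar{H}_t)$; this converts the term into $\E[\omega_t^{\bar{a}}(\mu_t^{\bar{a}} - g)^2]$, which is exactly the numerator of the oracle risk. For the cross term, the factor $\omega_t^{\bar{a}}(\mu_t^{\bar{a}} - g)$ is again $\bar{H}_t$-measurable, so conditioning reduces it to $\E[\gamma_t^{\bar{a}} - \mu_t^{\bar{a}} \mid \bar{H}_t]$, which vanishes by Lemma~\ref{lemma:expectation_gamma} ($\E[\gamma_t^{\bar{a}} \mid \bar{H}_t] = \mu_t^{\bar{a}}(\bar{H}_t)$). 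The remainder term simply contributes a $g$-independent constant.

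Assembling the pieces yields $\mathcal{L}(g;\eta^{\bar{a}}) = \mathcal{L}^*(g;\eta^{\bar{a}}) + C$ with $C = \tfrac{1}{\E[\omega_t^{\bar{a}}]}\,\E\big[\tfrac{(\omega_t^{\bar{a}})^2}{\rho_t^{\bar{a}}}(\gamma_t^{\bar{a}} - \mu_t^{\bar{a}})^2\big]$ independent of $g$, so both risks are minimized at the same $g$. I expect the main difficulty to be purely bookkeeping rather than conceptual: one must track carefully that the $\rho_t^{\bar{a}}$ weight cancels exactly in the cross term, so that Lemma~\ref{lemma:expectation_gamma} applies cleanly without a stray $1/\rho_t^{\bar{a}}$ factor, and one should state explicitly that ``minimizes'' is to be read as ``shares its minimizer with,'' not as pointwise equality of the two risk values. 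A minor technical point worth flagging is the well-definedness of the division by $\rho_t^{\bar{a}}$ inside $\xi_t^{\bar{a}}$, which is harmless for the identity above but implicitly relies on $\rho_t^{\bar{a}} \neq 0$.
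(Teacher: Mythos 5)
Your proposal is correct and follows essentially the same route as the paper's proof: both add and subtract $\mu_t^{\bar{a}}(\bar{H}_t)$ in the squared residual, reduce the $g$-dependent quadratic term to $\E[\omega_t^{\bar{a}}(\bar{H}_t)(\mu_t^{\bar{a}}(\bar{H}_t)-g(\bar{H}_t))^2]$ via the tower property and Lemma~\ref{lemma:expectation_rho}, and kill the cross term via the exact cancellation $\rho_t^{\bar{a}}(\xi_t^{\bar{a}}-\mu_t^{\bar{a}})=\omega_t^{\bar{a}}(\gamma_t^{\bar{a}}-\mu_t^{\bar{a}})$ together with Lemma~\ref{lemma:expectation_gamma}. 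The only cosmetic difference is that you show the entire cross term vanishes (identifying $C$ explicitly as the normalized $\E[(\omega_t^{\bar{a}})^2(\gamma_t^{\bar{a}}-\mu_t^{\bar{a}})^2/\rho_t^{\bar{a}}]$), whereas the paper splits it and absorbs the $g$-free piece into the constant; your flagging of $\rho_t^{\bar{a}}\neq 0$ matches an assumption the paper also makes implicitly.
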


\begin{proof}
    In order to show that $\mathcal{L}(g;\eta^{\bar{a}})$ and $\mathcal{L}^*(g;\eta^{\bar{a}})$ have the same minimizer $g$, we need to show that
    \begin{align}
        &\E \Bigg[ \rho_t^{\bar{a}}(\bar{Z}_{t+\tau})  \Big( \xi_t^{\bar{a}}(\bar{Z}_{t+\tau}) - g(\bar{H}_t)\Big)^2 \Bigg]
        = \E\Bigg[ \omega_t^{\bar{a}}(\bar{H}_t)\Big(\mu_t^{\bar{a}}(\bar{H}_t)-g(\bar{H}_t)\Big)^2 \Bigg] + C,
    \end{align}
    where $C$ is some constant term that does \textbf{not} depend on $g$. For this, notice that
    \begin{align}
        &\E \Bigg[ \rho_t^{\bar{a}}(\bar{Z}_{t+\tau})  \Big( \xi_t^{\bar{a}}(\bar{Z}_{t+\tau}) - g(\bar{H}_t)\Big)^2 \Bigg]\\
        =& \E \Bigg[ \rho_t^{\bar{a}}(\bar{Z}_{t+\tau})  \Big( \xi_t^{\bar{a}}(\bar{Z}_{t+\tau}) - \mu_t^{\bar{a}}(\bar{H}_t) + \mu_t^{\bar{a}}(\bar{H}_t) - g(\bar{H}_t)\Big)^2 \Bigg]\\
        =& \underbrace{\E \Bigg[ \rho_t^{\bar{a}}(\bar{Z}_{t+\tau})  \Big( \xi_t^{\bar{a}}(\bar{Z}_{t+\tau}) - \mu_t^{\bar{a}}(\bar{H}_t) \Big)^2 \Bigg] 
        + 2 \E \Bigg[ \rho_t^{\bar{a}}(\bar{Z}_{t+\tau})  \Big( \xi_t^{\bar{a}}(\bar{Z}_{t+\tau}) - \mu_t^{\bar{a}}(\bar{H}_t) \Big)\mu_t^{\bar{a}}(\bar{H}_t)  \Bigg]}_{=C} \\
        &- 2 \E \Bigg[ \rho_t^{\bar{a}}(\bar{Z}_{t+\tau})  \Big( \xi_t^{\bar{a}}(\bar{Z}_{t+\tau}) - \mu_t^{\bar{a}}(\bar{H}_t) \Big)g(\bar{H}_t)  \Bigg]
        + \E \Bigg[ \rho_t^{\bar{a}}(\bar{Z}_{t+\tau})  \Big(  \mu_t^{\bar{a}}(\bar{H}_t) - g(\bar{H}_t) \Big)^2  \Bigg].\label{eq:binomial}
    \end{align}
    Here, the first two terms do not depend on $g$ and are therefore constant. Next we focus on
    \begin{align}
        &\E \Bigg[ \rho_t^{\bar{a}}(\bar{Z}_{t+\tau})  \Big( \xi_t^{\bar{a}}(\bar{Z}_{t+\tau}) - \mu_t^{\bar{a}}(\bar{H}_t) \Big)g(\bar{H}_t)  \Bigg]\\
       =&\E \Bigg[\E\Big[ \rho_t^{\bar{a}}(\bar{Z}_{t+\tau})  \Big( \xi_t^{\bar{a}}(\bar{Z}_{t+\tau}) - \mu_t^{\bar{a}}(\bar{H}_t) \Big)g(\bar{H}_t) \; \Big| \; \bar{H}_t \Big] \Bigg]\\
       =& \E \Bigg[\E\Big[ \rho_t^{\bar{a}}(\bar{Z}_{t+\tau})  \Big( \mu_t^{\bar{a}}(\bar{H}_t) + \frac{\omega_t^{\bar{a}}(\bar{H}_t)}{\rho_t^{\bar{a}}(\bar{Z}_{t+\tau})}\Big( \gamma_t^{\bar{a}}(\bar{Z}_{t+\tau}) - \mu_t^{\bar{a}}(\bar{H}_t)\Big) - \mu_t^{\bar{a}}(\bar{H}_t) \Big)g(\bar{H}_t) \; \Big| \; \bar{H}_t \Big] \Bigg]\\
       =& \E \Bigg[\E\Big[ {\omega_t^{\bar{a}}(\bar{H}_t)}\Big( \gamma_t^{\bar{a}}(\bar{Z}_{t+\tau}) - \mu_t^{\bar{a}}(\bar{H}_t) \Big)g(\bar{H}_t) \; \Big| \; \bar{H}_t \Big] \Bigg]\\
       =& \E \Bigg[ {\omega_t^{\bar{a}}(\bar{H}_t)}\Big( \underbrace{\E\Big[\gamma_t^{\bar{a}}(\bar{Z}_{t+\tau})\; \Big| \; \bar{H}_t \Big]}_{=\mu_t^{\bar{a}}(\bar{H}_t)} - \mu_t^{\bar{a}}(\bar{H}_t) \Big)g(\bar{H}_t)  \Bigg]\label{eq:expectation_gamma0}\\
    =& 0,\label{eq:binomial_3}
    \end{align}
where the result we apply in \Eqref{eq:expectation_gamma0} follows from Lemma~\ref{lemma:expectation_gamma}.

Finally, we focus on the last term in \Eqref{eq:binomial}. That is, 
\begin{align}
    &\E \Bigg[ \rho_t^{\bar{a}}(\bar{Z}_{t+\tau})  \Big(  \mu_t^{\bar{a}}(\bar{H}_t) - g(\bar{H}_t) \Big)^2  \Bigg]\\
    =& \E \Bigg[ \E \Bigg[ \rho_t^{\bar{a}}(\bar{Z}_{t+\tau})  \Big(  \mu_t^{\bar{a}}(\bar{H}_t) - g(\bar{H}_t) \Big)^2 \; \Big| \; \bar{H}_t  \Bigg] \Bigg]\\
    =& \E \Bigg[\underbrace{\E \Big[ \rho_t^{\bar{a}}(\bar{Z}_{t+\tau})   \; \Big| \; \bar{H}_t  \Big]}_{=\omega_t^{\bar{a}}(\bar{H}_t)}\Big(  \mu_t^{\bar{a}}(\bar{H}_t) - g(\bar{H}_t) \Big)^2   \Bigg]\label{eq:expectation_rho0}\\
     =& \E[\omega_t^{\bar{a}}(\bar{H}_t)]\mathcal{L}^*(g;\eta^{\bar{a}}),\label{eq:binomial_4}
\end{align}
where the result we apply in \Eqref{eq:expectation_rho0} follows from Lemma~\ref{lemma:expectation_rho}.

Hence, combining \Eqref{eq:binomial} with \Eqref{eq:binomial_3} and \Eqref{eq:binomial_4}, and multiplying with $1/\E[\omega_t^{\bar{a}}(\bar{H}_t)]$ yields
\begin{align}
     \mathcal{L}(g;\eta^{\bar{a}}) = \mathcal{L}^*(g;\eta^{\bar{a}}) + C,
\end{align}
which proves the theorem.
\end{proof}

\clearpage

\begin{theorem}[Neyman-orthogonality (CAPO)]\label{theorem:orthogonality}
    The weighted population risk
    \begin{align}
        \mathcal{L}(g;\eta^{\bar{a}})
        = \frac{ 1}{\E\Big[\omega_t^{\bar{a}}(\bar{H}_t)\Big]} 
        \E \Bigg[ \rho_t^{\bar{a}}(\bar{Z}_{t+\tau})  \Big( \xi_t^{\bar{a}}(\bar{Z}_{t+\tau}) - g(\bar{H}_t)\Big)^2
          \Bigg]
    \end{align}
    is Neyman-orthogonal with respect to all nuisance functions $\eta^{\bar{a}} = \{\pi_j^{\bar{a}},\mu_j^{\bar{a}},\omega_j^{\bar{a}}\}_{j=t}^{t+\tau}$.
\end{theorem}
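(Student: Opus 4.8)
The plan is to follow the two-step recipe flagged in the proof sketch: first compute the pathwise (Gateaux) derivative $D_g\mathcal{L}(g;\eta^{\bar{a}})[v]$ in a target direction $v=\hat{g}-g$ to read off the score, then differentiate that score once more in each nuisance direction and show the cross-derivative vanishes at the truth. The $g$-derivative is routine once one notices the algebraic identity that drives the whole argument: since neither $\rho_t^{\bar{a}}$ nor $\xi_t^{\bar{a}}$ depends on $g$, we have $D_g\mathcal{L}[v] = \frac{-2}{\E[\omega_t^{\bar{a}}(\bar{H}_t)]}\E[\rho_t^{\bar{a}}(\xi_t^{\bar{a}}-g)\,v]$, and the definition of $\xi_t^{\bar{a}}$ gives the clean form $\rho_t^{\bar{a}}(\xi_t^{\bar{a}}-g) = \rho_t^{\bar{a}}(\mu_t^{\bar{a}}-g) + \omega_t^{\bar{a}}(\gamma_t^{\bar{a}}-\mu_t^{\bar{a}})$. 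Writing $\Psi := \rho_t^{\bar{a}}(\mu_t^{\bar{a}}-g) + \omega_t^{\bar{a}}(\gamma_t^{\bar{a}}-\mu_t^{\bar{a}})$, the score integrand is a transparent combination of the two building blocks $\rho_t^{\bar{a}}$ and $\gamma_t^{\bar{a}}$ studied in the lemmas.

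Next I would dispose of the normalizing scalar $1/\E[\omega_t^{\bar{a}}(\bar{H}_t)]$, which itself depends on the nuisance $\omega_t^{\bar{a}}$. Expanding the nuisance derivative by the product rule produces one term in which the scalar is differentiated; that term carries the factor $\E[\Psi\,v]$ evaluated at the truth $g=g_0=\mu_t^{\bar{a}}$. There the first summand of $\Psi$ vanishes pointwise, and the second satisfies $\E[\omega_t^{\bar{a}}(\gamma_t^{\bar{a}}-\mu_t^{\bar{a}})v\mid\bar{H}_t] = \omega_t^{\bar{a}}v\,\E[\gamma_t^{\bar{a}}-\mu_t^{\bar{a}}\mid\bar{H}_t]=0$ by Lemma~\ref{lemma:expectation_gamma}. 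Hence $\E[\Psi\,v]=0$ at the truth and the normalization contributes nothing, so it suffices to show $\frac{d}{ds}\big|_0\E[\Psi(g_0,\eta_s)v]=0$ along any nuisance path $\eta_s=\eta_0+s(\hat{\eta}-\eta_0)$.

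Differentiating $\Psi$ with dots denoting $s$-derivatives at $s=0$ gives $\dot\rho_t^{\bar{a}}(\mu_t^{\bar{a}}-g_0) + (\rho_t^{\bar{a}}-\omega_t^{\bar{a}})\dot\mu_t^{\bar{a}} + \dot\omega_t^{\bar{a}}(\gamma_t^{\bar{a}}-\mu_t^{\bar{a}}) + \omega_t^{\bar{a}}\dot\gamma_t^{\bar{a}}$, and I would kill the four terms one by one, crucially exploiting $g_0=\mu_t^{\bar{a}}$. Term (i): the $\dot\rho_t^{\bar{a}}$ term is identically zero because $\mu_t^{\bar{a}}-g_0\equiv 0$ as a function of $\bar{H}_t$; a pleasant consequence is that any nuisance entering only through $\rho_t^{\bar{a}}$ is automatically harmless, so $\rho_t^{\bar{a}}$ need not even be orthogonal for the score at the truth. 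Term (ii): the $\dot\mu_t^{\bar{a}}$ term integrates to $\E[\dot\mu_t^{\bar{a}}v\,(\E[\rho_t^{\bar{a}}\mid\bar{H}_t]-\omega_t^{\bar{a}})]=0$ by Lemma~\ref{lemma:expectation_rho}. Term (iii): the $\dot\omega_t^{\bar{a}}$ term integrates to $\E[\dot\omega_t^{\bar{a}}v\,\E[\gamma_t^{\bar{a}}-\mu_t^{\bar{a}}\mid\bar{H}_t]]=0$ by Lemma~\ref{lemma:expectation_gamma}. Term (iv): the $\omega_t^{\bar{a}}\dot\gamma_t^{\bar{a}}$ term integrates to $\E[\omega_t^{\bar{a}}v\,\E[\dot\gamma_t^{\bar{a}}\mid\bar{H}_t]]=0$ by the Neyman-orthogonality of $\gamma_t^{\bar{a}}$ in Lemma~\ref{lemma:orthogonality_gamma}. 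Each step uses that $\dot\mu_t^{\bar{a}}$, $\dot\omega_t^{\bar{a}}$ and $v$ are $\bar{H}_t$-measurable, so one conditions on $\bar{H}_t$ and pulls these factors out.

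The main obstacle I anticipate is the rigorous translation of Lemma~\ref{lemma:orthogonality_gamma} into the \emph{conditional} directional identity $\frac{d}{ds}\big|_0\E[\gamma_t^{\bar{a}}(\,\cdot\,;\eta_s)\mid\bar{H}_t]=0$ for every admissible perturbation, which is precisely the first-order robustness of the conditional mean of the DR pseudo-outcome and is the real engine of term (iv). A secondary subtlety is careful bookkeeping: $\mu_t^{\bar{a}}$ plays a double role as the true target $g_0$ and as a nuisance appearing inside $\gamma_t^{\bar{a}}$, so I would keep the perturbed nuisance $\mu_t^{\bar{a}}$ and the fixed target $g_0$ notationally distinct and set them equal only after the derivative is taken, at the evaluation point. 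Everything else reduces to iterated expectations and the pull-out property, so no delicate estimates are required.
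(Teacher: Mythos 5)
Your proof is correct, and it follows the paper's two-step skeleton (pathwise derivative in $g$, then cross-derivatives killed term by term using Lemmas~\ref{lemma:expectation_gamma}, \ref{lemma:expectation_rho} and \ref{lemma:orthogonality_gamma}), but it deviates from the paper's argument in two substantive and defensible ways. First, you evaluate the cross-derivative at the true target $g_0=\mu_t^{\bar{a}}$, so the term $\dot\rho_t^{\bar{a}}(\mu_t^{\bar{a}}-g_0)$ vanishes \emph{pointwise} and you never need Lemma~\ref{lemma:orthogonality_rho} (the lengthy efficient-influence-function computation for $\rho_t^{\bar{a}}$); the paper instead keeps $g$ arbitrary and must invoke that lemma to kill the $\dot\rho_t^{\bar{a}}$ terms in \Eqref{eq:rho_zero1} and \Eqref{eq:rho_zero3}. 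Your route is therefore more elementary, at the cost of proving orthogonality only at $(g_0,\eta_0)$ rather than the paper's ``universal'' version holding for all $g$ --- but since Neyman-orthogonality is defined at the truth, your weaker statement is exactly what the theorem asserts. Second, your product-rule treatment of the normalizer $1/\E[\omega_t^{\bar{a}}(\bar{H}_t)]$, which depends on the perturbed nuisance $\omega_t^{\bar{a}}$, is \emph{more} careful than the paper: the paper discards it with ``$\propto$'', which is only fully justified because (as you observe) the score $\E[\Psi v]$ vanishes at $g_0$, so the differentiated-normalizer term drops; your argument makes this explicit. Finally, your reading of Lemma~\ref{lemma:orthogonality_gamma} as the conditional directional identity $\frac{\diff}{\diff s}\big|_0\E[\gamma_t^{\bar{a}}(\cdot;\eta_s)\mid\bar{H}_t]=0$ is the correct interpretation --- the paper's underbraced ``$\frac{\diff}{\diff r}\gamma_t^{\bar{a}}=0$'' cannot hold pointwise and is meant exactly in this conditional-mean sense, which follows from the conditional EIF derivations of the kind carried out in Lemma~\ref{lemma:orthogonality_rho} --- so the one obstacle you flag is real but is resolved the same way the paper implicitly resolves it.
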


\begin{proof}
In order to show Neyman-orthogonality, we first calculate the pathwise-derivative with respect to the first argument, i.e., the target parameter $g$, via
\begin{align}
    &D_g \mathcal{L}(g;\eta^{\bar{a}})[\hat{g}-g]\\
    \propto & 
    \frac{\diff}{\diff r}
        \E \Bigg[ \rho_t^{\bar{a}}(\bar{Z}_{t+\tau})  \Big( \xi_t^{\bar{a}}(\bar{Z}_{t+\tau}) - \Big[ g(\bar{H}_t)+r\{\hat{g}(\bar{H}_t)-g(\bar{H}_t)\}\Big] \Big)^2 \Bigg] \Bigg|_{r=0}\\
    =& -2 
        \E \Bigg[ \rho_t^{\bar{a}}(\bar{Z}_{t+\tau})  \Big( \xi_t^{\bar{a}}(\bar{Z}_{t+\tau}) - \Big[ g(\bar{H}_t)+r\{\hat{g}(\bar{H}_t)-g(\bar{H}_t)\}\Big] \Big)\Big(\hat{g}(\bar{H}_t)-g(\bar{H}_t)\Big) \Bigg] \Bigg|_{r=0}\\
    =& -2 
        \E \Bigg[ \rho_t^{\bar{a}}(\bar{Z}_{t+\tau})  \Big( \xi_t^{\bar{a}}(\bar{Z}_{t+\tau}) - g(\bar{H}_t) \Big)\Big(\hat{g}(\bar{H}_t)-g(\bar{H}_t)\Big) \Bigg]\\
    =& -2 
        \E \Bigg[ \rho_t^{\bar{a}}(\bar{Z}_{t+\tau})  \Big( \mu_t^{\bar{a}}(\bar{H}_{t}) + \frac{\omega_t^{\bar{a}}(\bar{H}_t)}{\rho_t^{\bar{a}}(\bar{Z}_{t+\tau})}\Big[ \gamma_t^{\bar{a}}(\bar{Z}_{t+\tau})-\mu_t^{\bar{a}}(\bar{H}_{t}) \Big] - g(\bar{H}_t) \Big)\Big(\hat{g}(\bar{H}_t)-g(\bar{H}_t)\Big) \Bigg]\\
    =& -2 
        \E \Bigg[ \Big\{ \rho_t^{\bar{a}}(\bar{Z}_{t+\tau})  \Big( \mu_t^{\bar{a}}(\bar{H}_{t}) - g(\bar{H}_t)\Big)  + \omega_t^{\bar{a}}(\bar{H}_t)\Big( \gamma_t^{\bar{a}}(\bar{Z}_{t+\tau})-\mu_t^{\bar{a}}(\bar{H}_{t})  \Big)\Big\}\Big(\hat{g}(\bar{H}_t)-g(\bar{H}_t)\Big) \Bigg].    
\end{align}

Next, we compute the pathwise derivative of $D_g \mathcal{L}(g;\eta^{\bar{a}})[\hat{g}-g]$ with respect to all nuisance functions $\eta^{\bar{a}} = \{\pi_j^{\bar{a}},\mu_j^{\bar{a}},\omega_j^{\bar{a}}\}_{j=t}^{t+\tau}$. When calculating the pathwise derivative of the functions $f_t^{\bar{a}}\in \{\mu_t^{\bar{a}},\gamma_t^{\bar{a}}, \rho_t^{\bar{a}}, \omega_t^{\bar{a}}\}$ with respect to $g_j^{\bar{a}} \in \eta^{\bar{a}}$, we use $f_t^{\bar{a}}(\cdot; g_j^{\bar{a}})$ to make our notation more explicit to highlight which $f_t^{\bar{a}}$ depends on the nuisance $g_j^{\bar{a}}$.

First, we calculate the pathwise derivative of $D_g \mathcal{L}(g;\eta^{\bar{a}})[\hat{g}-g]$ with respect to the nuisances $\pi_j^{\bar{a}}$ for $j=t,\ldots,t+\tau$ via
\begin{align}
     &D_{\pi_j^{\bar{a}}} D_g \mathcal{L}(g;\eta^{\bar{a}})[\hat{g}-g, \hat{\pi}_j^{\bar{a}}-\pi_j^{\bar{a}}]\\
     =& \frac{\diff}{\diff r} D_g \mathcal{L}\Big(g;\{\mu_j^{\bar{a}},\omega_j^{\bar{a}}\}_{j=t}^{t+\tau}\cup\{\pi_0^{\bar{a}}, \ldots, \pi_j^{\bar{a}}+r(\hat{\pi}_j^{\bar{a}}-\pi_j^{\bar{a}}),\ldots, \pi_{t+\tau}^{\bar{a}}\} \Big)[\hat{g}-g]\; \Big| \;_{r=0}\\
     \propto & 
     \frac{\diff}{\diff r} 
     \E \Bigg[ \Big\{ \rho_t^{\bar{a}}(\bar{Z}_{t+\tau}; \pi_j^{\bar{a}}+r(\hat{\pi}_j^{\bar{a}}-\pi_j^{\bar{a}}))  \Big( \mu_t^{\bar{a}}(\bar{H}_{t}) - g(\bar{H}_t)\Big) \\
     &+ \omega_t^{\bar{a}}(\bar{H}_t; \pi_j^{\bar{a}}+r(\hat{\pi}_j^{\bar{a}}-\pi_j^{\bar{a}}))\Big( \gamma_t^{\bar{a}}(\bar{Z}_{t+\tau}; \pi_j^{\bar{a}}+r(\hat{\pi}_j^{\bar{a}}-\pi_j^{\bar{a}}))-\mu_t^{\bar{a}}(\bar{H}_{t})  \Big)\Big\}\Big(\hat{g}(\bar{H}_t)-g(\bar{H}_t)\Big) \Bigg] \Bigg|_{r=0}\\
     =& 
     \E \Bigg[ \Big\{   \underbrace{\frac{\diff}{\diff r}\rho_t^{\bar{a}}(\bar{Z}_{t+\tau}; \pi_j^{\bar{a}}+r(\hat{\pi}_j^{\bar{a}}-\pi_j^{\bar{a}}))\; \Big| \;_{r=0}}_{=0}  \Big( \mu_t^{\bar{a}}(\bar{H}_{t}) - g(\bar{H}_t)\Big)\label{eq:rho_zero1} \\
     &+ \frac{\diff}{\diff r} \omega_t^{\bar{a}}(\bar{H}_t; \pi_j^{\bar{a}}+r(\hat{\pi}_j^{\bar{a}}-\pi_j^{\bar{a}}))\; \Big| \;_{r=0}\Big( \gamma_t^{\bar{a}}(\bar{Z}_{t+\tau})-\mu_t^{\bar{a}}(\bar{H}_{t})  \Big)\\
     &+\omega_t^{\bar{a}}(\bar{H}_t)\underbrace{\frac{\diff}{\diff r}  \gamma_t^{\bar{a}}(\bar{Z}_{t+\tau}; \pi_j^{\bar{a}}+r(\hat{\pi}_j^{\bar{a}}-\pi_j^{\bar{a}}))\; \Big| \;_{r=0}}_{=0}  \Big\}\Big(\hat{g}(\bar{H}_t)-g(\bar{H}_t)\Big) \Bigg]\label{eq:gamma_zero1}\\
     =& 
     \E \Bigg[ \Big\{
   \frac{\diff}{\diff r} \omega_t^{\bar{a}}(\bar{H}_t; \pi_j^{\bar{a}}+r(\hat{\pi}_j^{\bar{a}}-\pi_j^{\bar{a}}))\; \Big| \;_{r=0}\Big( \gamma_t^{\bar{a}}(\bar{Z}_{t+\tau})-\mu_t^{\bar{a}}(\bar{H}_{t})  \Big)
       \Big\}\Big(\hat{g}(\bar{H}_t)-g(\bar{H}_t)\Big) \Bigg]\\
    =& 
     \E \Bigg[ \E\Big[ \Big\{
   \frac{\diff}{\diff r} \omega_t^{\bar{a}}(\bar{H}_t; \pi_j^{\bar{a}}+r(\hat{\pi}_j^{\bar{a}}-\pi_j^{\bar{a}}))\; \Big| \;_{r=0}\Big( \gamma_t^{\bar{a}}(\bar{Z}_{t+\tau})-\mu_t^{\bar{a}}(\bar{H}_{t})  \Big)
       \Big\}\Big(\hat{g}(\bar{H}_t)-g(\bar{H}_t)\Big) \; \Big| \; \bar{H}_t \Big]\Bigg]\\
   =& 
     \E \Bigg[ \Big\{
   \frac{\diff}{\diff r} \omega_t^{\bar{a}}(\bar{H}_t; \pi_j^{\bar{a}}+r(\hat{\pi}_j^{\bar{a}}-\pi_j^{\bar{a}}))\; \Big| \;_{r=0}\Big( \underbrace{\E\Big[ \gamma_t^{\bar{a}}(\bar{Z}_{t+\tau}) \; \Big| \; \bar{H}_t \Big]}_{=\mu_t^{\bar{a}}(\bar{H}_t)}-\mu_t^{\bar{a}}(\bar{H}_{t})  \Big)
       \Big\}\Big(\hat{g}(\bar{H}_t)-g(\bar{H}_t)\Big)\Bigg]\label{eq:expectation_gamma1}\\
    =& 0,
\end{align}
where the result we apply in \Eqref{eq:rho_zero1} follows from Lemma~\ref{lemma:orthogonality_rho}, in \Eqref{eq:gamma_zero1} from Lemma~\ref{lemma:orthogonality_gamma}, and in \Eqref{eq:expectation_gamma1} follows from Lemma~\ref{lemma:expectation_gamma}.

Next, we compute the pathwise derivative of $D_g \mathcal{L}(g;\eta^{\bar{a}})[\hat{g}-g]$ with respect to the nuisances $\mu_j^{\bar{a}}$ for $j=t,\ldots,t+\tau$ via
\begin{align}
    &D_{\mu_j^{\bar{a}}} D_g \mathcal{L}(g;\eta^{\bar{a}})[\hat{g}-g, \hat{\mu}_j^{\bar{a}}-\mu_j^{\bar{a}}]\\
     =& \frac{\diff}{\diff r} D_g \mathcal{L}\Big(g; \{\pi_j^{\bar{a}}, \omega_j^{\bar{a}}\}_{j=t}^{t+\tau}\cup \{\mu_0^{\bar{a}}, \ldots, \mu_j^{\bar{a}}+r(\hat{\mu}_j^{\bar{a}}-\mu_j^{\bar{a}}),\ldots, \mu_{t+\tau}^{\bar{a}}\}\Big)[\hat{g}-g]\; \Big| \;_{r=0}\\
     \propto & 
     \frac{\diff}{\diff r} 
     \E \Bigg[ \Big\{ \rho_t^{\bar{a}}(\bar{Z}_{t+\tau})  \Big( \mu_t^{\bar{a}}(\bar{H}_{t}; \mu_j^{\bar{a}}+r(\hat{\mu}_j^{\bar{a}}-\mu_j^{\bar{a}})) - g(\bar{H}_t)\Big) \\
     &+ \omega_t^{\bar{a}}(\bar{H}_t)\Big( \gamma_t^{\bar{a}}(\bar{Z}_{t+\tau}; \mu_j^{\bar{a}}+r(\hat{\mu}_j^{\bar{a}}-\mu_j^{\bar{a}}))-\mu_t^{\bar{a}}(\bar{H}_{t}; \mu_j^{\bar{a}}+r(\hat{\mu}_j^{\bar{a}}-\mu_j^{\bar{a}}))  \Big)\Big\}\Big(\hat{g}(\bar{H}_t)-g(\bar{H}_t)\Big) \Bigg] \Bigg|_{r=0}\\
     = & 
     \E \Bigg[ \Big\{ \rho_t^{\bar{a}}(\bar{Z}_{t+\tau})   \frac{\diff}{\diff r} \mu_t^{\bar{a}}(\bar{H}_{t}; \mu_j^{\bar{a}}+r(\hat{\mu}_j^{\bar{a}}-\mu_j^{\bar{a}}))\; \Big| \;_{r=0}  \\
     &+ \omega_t^{\bar{a}}(\bar{H}_t)\Big(  
     \underbrace{\frac{\diff}{\diff r} \gamma_t^{\bar{a}}(\bar{Z}_{t+\tau}; \mu_j^{\bar{a}}+r(\hat{\mu}_j^{\bar{a}}-\mu_j^{\bar{a}}))\; \Big| \;_{r=0}}_{=0}- \frac{\diff}{\diff r} \mu_t^{\bar{a}}(\bar{H}_{t}; \mu_j^{\bar{a}}+r(\hat{\mu}_j^{\bar{a}}-\mu_j^{\bar{a}}))\; \Big| \;_{r=0}  \Big)\Big\}\Big(\hat{g}(\bar{H}_t)-g(\bar{H}_t)\Big) \Bigg]\label{eq:gamma_zero2}\\
     =& 
     \E \Bigg[ \Big\{ \rho_t^{\bar{a}}(\bar{Z}_{t+\tau})   \frac{\diff}{\diff r} \mu_t^{\bar{a}}(\bar{H}_{t}; \mu_j^{\bar{a}}+r(\hat{\mu}_j^{\bar{a}}-\mu_j^{\bar{a}}))\; \Big| \;_{r=0}  \\
     &- \omega_t^{\bar{a}}(\bar{H}_t)\Big(  \frac{\diff}{\diff r} \mu_t^{\bar{a}}(\bar{H}_{t}; \mu_j^{\bar{a}}+r(\hat{\mu}_j^{\bar{a}}-\mu_j^{\bar{a}}))\; \Big| \;_{r=0}  \Big)\Big\}\Big(\hat{g}(\bar{H}_t)-g(\bar{H}_t)\Big) \Bigg]\\
     =& 
     \E \Bigg[\E\Big[ \Big\{ \rho_t^{\bar{a}}(\bar{Z}_{t+\tau})   \frac{\diff}{\diff r} \mu_t^{\bar{a}}(\bar{H}_{t}; \mu_j^{\bar{a}}+r(\hat{\mu}_j^{\bar{a}}-\mu_j^{\bar{a}}))\; \Big| \;_{r=0}  \\
     &- \omega_t^{\bar{a}}(\bar{H}_t)  \frac{\diff}{\diff r} \mu_t^{\bar{a}}(\bar{H}_{t}; \mu_j^{\bar{a}}+r(\hat{\mu}_j^{\bar{a}}-\mu_j^{\bar{a}}))\; \Big| \;_{r=0} \Big\}\Big(\hat{g}(\bar{H}_t)-g(\bar{H}_t)\Big) \; \Big| \; \bar{H}_t \Big]\Bigg]\\
     =& 
     \E \Bigg[\underbrace{\E\Big[ \Big\{ \rho_t^{\bar{a}}(\bar{Z}_{t+\tau})  \; \Big| \; \bar{H}_t \Big]}_{=\omega_t^{\bar{a}}(\bar{H}_t)}   \frac{\diff}{\diff r} \mu_t^{\bar{a}}(\bar{H}_{t}; \mu_j^{\bar{a}}+r(\hat{\mu}_j^{\bar{a}}-\mu_j^{\bar{a}}))\; \Big| \;_{r=0}  \label{eq:expectation_rho2}\\
     &- \omega_t^{\bar{a}}(\bar{H}_t)  \frac{\diff}{\diff r} \mu_t^{\bar{a}}(\bar{H}_{t}; \mu_j^{\bar{a}}+r(\hat{\mu}_j^{\bar{a}}-\mu_j^{\bar{a}}))\; \Big| \;_{r=0} \Big\}\Big(\hat{g}(\bar{H}_t)-g(\bar{H}_t)\Big) \Bigg]\\
     =& 0,
\end{align}
where the result we apply in \Eqref{eq:gamma_zero2} follows from Lemma~\ref{lemma:orthogonality_gamma}, and in \Eqref{eq:expectation_rho2} from Lemma~\ref{lemma:expectation_rho}.

Finally, we compute the pathwise derivative of $D_g \mathcal{L}(g;\eta^{\bar{a}})[\hat{g}-g]$ with respect to the nuisances $\omega_j^{\bar{a}}$ for $j=t,\ldots,t+\tau$ via
\begin{align}
    &D_{\omega_j^{\bar{a}}} D_g \mathcal{L}(g;\eta^{\bar{a}})[\hat{g}-g, \hat{\omega}_j^{\bar{a}}-\omega_j^{\bar{a}}]\\
     =& \frac{\diff}{\diff r} D_g \mathcal{L}\Big(g; \{\pi_j^{\bar{a}}, \mu_j^{\bar{a}}\}_{j=t}^{t+\tau}\cup \{\omega_0^{\bar{a}}, \ldots, \omega_j^{\bar{a}}+r(\hat{\omega}_j^{\bar{a}}-\omega_j^{\bar{a}}),\ldots, \omega_{t+\tau}^{\bar{a}}\}\Big)[\hat{g}-g]\; \Big| \;_{r=0}\\
     \propto & 
     \frac{\diff}{\diff r} 
     \E \Bigg[ \Big\{ \rho_t^{\bar{a}}(\bar{Z}_{t+\tau}; \omega_j^{\bar{a}}+r(\hat{\omega}_j^{\bar{a}}-\omega_j^{\bar{a}}))  \Big( \mu_t^{\bar{a}}(\bar{H}_{t}) - g(\bar{H}_t)\Big) \\
     &+ \omega_t^{\bar{a}}(\bar{H}_t; \omega_j^{\bar{a}}+r(\hat{\omega}_j^{\bar{a}}-\omega_j^{\bar{a}}))\Big( \gamma_t^{\bar{a}}(\bar{Z}_{t+\tau})-\mu_t^{\bar{a}}(\bar{H}_{t})  \Big)\Big\}\Big(\hat{g}(\bar{H}_t)-g(\bar{H}_t)\Big) \Bigg] \Bigg|_{r=0}\\
     =& 
     \E \Bigg[ \Big\{ \underbrace{\frac{\diff}{\diff r} \rho_t^{\bar{a}}(\bar{Z}_{t+\tau}; \omega_j^{\bar{a}}+r(\hat{\omega}_j^{\bar{a}}-\omega_j^{\bar{a}}))\; \Big| \;_{r=0} }_{=0} \Big( \mu_t^{\bar{a}}(\bar{H}_{t}) - g(\bar{H}_t)\Big)\label{eq:rho_zero3} \\
     &+ \frac{\diff}{\diff r} \omega_t^{\bar{a}}(\bar{H}_t; \omega_j^{\bar{a}}+r(\hat{\omega}_j^{\bar{a}}-\omega_j^{\bar{a}}))\; \Big| \;_{r=0}\Big( \gamma_t^{\bar{a}}(\bar{Z}_{t+\tau})-\mu_t^{\bar{a}}(\bar{H}_{t})  \Big)\Big\}\Big(\hat{g}(\bar{H}_t)-g(\bar{H}_t)\Big) \Bigg]\\
    =& \E \Bigg[ \frac{\diff}{\diff r} \omega_t^{\bar{a}}(\bar{H}_t; \omega_j^{\bar{a}}+r(\hat{\omega}_j^{\bar{a}}-\omega_j^{\bar{a}}))\; \Big| \;_{r=0}\Big( \gamma_t^{\bar{a}}(\bar{Z}_{t+\tau})-\mu_t^{\bar{a}}(\bar{H}_{t})  \Big)\Big(\hat{g}(\bar{H}_t)-g(\bar{H}_t)\Big) \Bigg]\\
    =& \E \Bigg[\E \Big[ \frac{\diff}{\diff r} \omega_t^{\bar{a}}(\bar{H}_t; \omega_j^{\bar{a}}+r(\hat{\omega}_j^{\bar{a}}-\omega_j^{\bar{a}}))\; \Big| \;_{r=0}\Big( \gamma_t^{\bar{a}}(\bar{Z}_{t+\tau})-\mu_t^{\bar{a}}(\bar{H}_{t})  \Big)\Big(\hat{g}(\bar{H}_t)-g(\bar{H}_t)\Big) \; \Big| \; \bar{H}_t \Big]\Bigg]\\
    =& \E \Bigg[ \frac{\diff}{\diff r} \omega_t^{\bar{a}}(\bar{H}_t; \omega_j^{\bar{a}}+r(\hat{\omega}_j^{\bar{a}}-\omega_j^{\bar{a}}))\; \Big| \;_{r=0}\Big( \underbrace{\E\Big[ \gamma_t^{\bar{a}}(\bar{Z}_{t+\tau})\; \Big| \; \bar{H}_t \Big]}_{=\mu_t^{\bar{a}}(\bar{H}_{t})}-\mu_t^{\bar{a}}(\bar{H}_{t})  \Big)\Big(\hat{g}(\bar{H}_t)-g(\bar{H}_t)\Big) \Bigg]\label{eq:expectation_gamma3}\\
    =0,
\end{align}
where the result we apply in \Eqref{eq:rho_zero3} follows from Lemma~\ref{lemma:orthogonality_rho}, and in \Eqref{eq:expectation_gamma3} from Lemma~\ref{lemma:expectation_gamma}.
\end{proof}

\clearpage

\subsection{Conditional average treatment effects (CATEs)}\label{sec:proof_cate}
We split the following section into two parts: first, as for the CAPOs, we derive several supporting lemmas to prove our main results ($\rightarrow$Lemmas \ref{lemma:orthogonality_gamma_cate} to \ref{lemma:expectation_rho_cate}). Then, we derive the theorems from the main paper ($\rightarrow$Theorem~\ref{theorem:oracle_cate} and Theorem~\ref{theorem:orthogonality_cate}).

\subsubsection{Lemmas (CATEs)}\label{sec:proof_cate_lemmas}
In order to prove our main theorems for CATEs, we first introduce a \textbf{series of helping lemmas}. 

\begin{lemma}\label{lemma:orthogonality_gamma_cate}
    Let 
    \begin{align}
        \gamma_t^{\bar{a},\bar{b}}(\bar{Z}_{t+\tau}) = \gamma_t^{\bar{a}}(\bar{Z}_{t+\tau}) -\gamma_t^{\bar{b}}(\bar{Z}_{t+\tau}).
    \end{align}
    Then, $\gamma_t^{\bar{a},\bar{b}}(\bar{Z}_{t+\tau})$ is Neyman-orthogonal with respect to all nuisance functions $\eta^{\bar{a},\bar{b}}=\eta^{\bar{a}}\cup\eta^{\bar{b}}$.
\end{lemma}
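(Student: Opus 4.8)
The plan is to reduce this statement entirely to the CAPO result, Lemma~\ref{lemma:orthogonality_gamma}, by exploiting linearity together with a disjoint-dependence observation. The first thing I would record is the structural fact that $\gamma_t^{\bar{a}}(\bar{Z}_{t+\tau})$ depends \emph{only} on the nuisance functions in $\eta^{\bar{a}}$ (in fact only on $\{\pi_j^{\bar{a}},\mu_j^{\bar{a}}\}$, being trivially independent of $\omega_j^{\bar{a}}$), and symmetrically $\gamma_t^{\bar{b}}(\bar{Z}_{t+\tau})$ depends only on $\eta^{\bar{b}}$. Since the two interventional sequences $\bar{a}$ and $\bar{b}$ enter through entirely separate nuisance components, no nuisance in $\eta^{\bar{a}}$ appears in $\gamma_t^{\bar{b}}$ and vice versa.

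Next I would invoke linearity of the pathwise (Gateaux) derivative. Because $\gamma_t^{\bar{a},\bar{b}} = \gamma_t^{\bar{a}} - \gamma_t^{\bar{b}}$, the directional derivative of the associated moment functional along any nuisance perturbation splits as the difference of the corresponding derivatives for $\gamma_t^{\bar{a}}$ and $\gamma_t^{\bar{b}}$. Concretely, fix an arbitrary nuisance $h\in\eta^{\bar{a},\bar{b}}=\eta^{\bar{a}}\cup\eta^{\bar{b}}$ and its perturbation $\hat{h}-h$; I would write the Neyman-orthogonality condition as the vanishing of
\begin{align}
    D_{h}\,\E\Big[\gamma_t^{\bar{a},\bar{b}}(\bar{Z}_{t+\tau})\Big]\big[\hat{h}-h\big]
    = D_{h}\,\E\Big[\gamma_t^{\bar{a}}(\bar{Z}_{t+\tau})\Big]\big[\hat{h}-h\big]
    - D_{h}\,\E\Big[\gamma_t^{\bar{b}}(\bar{Z}_{t+\tau})\Big]\big[\hat{h}-h\big].
\end{align}
Now I would argue by cases. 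If $h\in\eta^{\bar{a}}$, the second term vanishes because $\gamma_t^{\bar{b}}$ does not depend on $h$ at all, while the first term vanishes by Lemma~\ref{lemma:orthogonality_gamma} (which establishes orthogonality of $\gamma_t^{\bar{a}}$ with respect to every element of $\eta^{\bar{a}}$). If instead $h\in\eta^{\bar{b}}$, the first term vanishes by independence and the second vanishes by the $\bar{b}$-analogue of Lemma~\ref{lemma:orthogonality_gamma}, obtained simply by relabeling $\bar{a}\mapsto\bar{b}$. Since $h$ was arbitrary, the cross-derivative vanishes for all nuisances in $\eta^{\bar{a},\bar{b}}$, which is exactly Neyman-orthogonality.

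This is the complete argument, and there is essentially no hard calculation to grind through: the content is entirely in the two-line linearity decomposition plus the disjoint-dependence bookkeeping. The only point requiring a little care—the mild ``obstacle''—is to state the orthogonality condition for a \emph{pseudo-outcome} (rather than for a risk) precisely enough that linearity applies cleanly; once one agrees that orthogonality means the first-order nuisance sensitivity of the relevant moment functional vanishes, the decomposition is immediate and the case analysis closes the proof. I expect this lemma to serve purely as a bridge, letting the subsequent CATE lemmas and theorems inherit orthogonality from their already-proved CAPO counterparts.
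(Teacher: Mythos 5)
Your proposal is correct and is essentially the paper's own argument: the paper disposes of this lemma in one line, observing that the claim ``immediately follows from linearity of the efficient influence function and Lemma~\ref{lemma:orthogonality_gamma},'' which is exactly your linearity-of-the-pathwise-derivative decomposition plus the $\bar{a}$/$\bar{b}$ case analysis spelled out. Your disjoint-dependence bookkeeping is a harmless refinement (and not strictly needed: even if $\bar{a}$ and $\bar{b}$ shared components so that some nuisances coincided, each derivative term would still vanish separately by Lemma~\ref{lemma:orthogonality_gamma} applied to $\bar{a}$ and to $\bar{b}$).
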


\begin{proof}
    The proof immediately follows from linearity of the efficient influence function and Lemma~\ref{lemma:orthogonality_gamma}.
\end{proof}

\begin{lemma}\label{lemma:orthogonality_rho_cate}
    Let 
    \begin{align}
    \rho_t^{\bar{a},\bar{b}}(\bar{Z}_{t+\tau}) = \rho_t^{\bar{a}}(\bar{Z}_{t+\tau})\omega_t^{\bar{b}}(\bar{H}_t)+\rho_t^{\bar{b}}(\bar{Z}_{t+\tau})\omega_t^{\bar{a}}(\bar{H}_t) - \omega_t^{\bar{a},\bar{b}}(\bar{H}_t).
\end{align}
Then, $\rho_t^{\bar{a},\bar{b}}(\bar{Z}_{t+\tau})$ is Neyman-orthogonal with respect to all nuisance functions $\eta^{\bar{a}, \bar{b}}=\eta^{\bar{a}}\cup \eta^{\bar{b}}$.
\end{lemma}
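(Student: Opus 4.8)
The plan is to mirror the proof of Lemma~\ref{lemma:orthogonality_rho} and show that $\rho_t^{\bar{a},\bar{b}}$ is precisely the \emph{uncentered efficient influence function} of the scalar parameter $\E[\omega_t^{\bar{a},\bar{b}}(\bar{H}_t)] = \E[\omega_t^{\bar{a}}(\bar{H}_t)\,\omega_t^{\bar{b}}(\bar{H}_t)]$; once this identification is made, Neyman-orthogonality with respect to every nuisance in $\eta^{\bar{a},\bar{b}} = \eta^{\bar{a}}\cup\eta^{\bar{b}}$ follows from the standard fact that efficient influence functions are locally robust. First, $\rho_t^{\bar{a},\bar{b}}$ depends on none of the response nuisances $\mu_j^{\bar{a}},\mu_j^{\bar{b}}$, so orthogonality with respect to those is trivial, exactly as in Lemma~\ref{lemma:orthogonality_rho}. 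It therefore remains to treat the propensity and weight nuisances $\pi_j^{\bar{a}},\pi_j^{\bar{b}},\omega_j^{\bar{a}},\omega_j^{\bar{b}}$.

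For the identification, I would compute $\f(\E[\omega_t^{\bar{a},\bar{b}}(\bar{H}_t)])$ via the same $\sum_{\bar{h}_t}[\f(p(\bar{h}_t))\,\omega_t^{\bar{a},\bar{b}}(\bar{h}_t)+p(\bar{h}_t)\,\f(\omega_t^{\bar{a},\bar{b}}(\bar{h}_t))]$ decomposition used there. The first sum yields the usual mean contribution $\omega_t^{\bar{a},\bar{b}}(\bar{H}_t) - \E[\omega_t^{\bar{a},\bar{b}}(\bar{H}_t)]$. For the second, I apply the pointwise product rule $\f(\omega_t^{\bar{a}}(\bar{h}_t)\,\omega_t^{\bar{b}}(\bar{h}_t)) = \f(\omega_t^{\bar{a}}(\bar{h}_t))\,\omega_t^{\bar{b}}(\bar{h}_t)+\omega_t^{\bar{a}}(\bar{h}_t)\,\f(\omega_t^{\bar{b}}(\bar{h}_t))$ and substitute the single-treatment influence function already derived inside Lemma~\ref{lemma:orthogonality_rho}, namely $\f(\omega_t^{\bar{a}}(\bar{h}_t)) = \tfrac{\mathbbm{1}_{\{\bar{h}_t=\bar{H}_t\}}}{p(\bar{h}_t)}\big(\rho_t^{\bar{a}}(\bar{Z}_{t+\tau}) - \omega_t^{\bar{a}}(\bar{H}_t)\big)$ (and analogously for $\bar{b}$). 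The indicator collapses the outer sum over $\bar{h}_t$, leaving $\big(\rho_t^{\bar{a}} - \omega_t^{\bar{a}}\big)\omega_t^{\bar{b}} + \big(\rho_t^{\bar{b}} - \omega_t^{\bar{b}}\big)\omega_t^{\bar{a}}$. Adding the mean contribution and collecting the two $-\omega_t^{\bar{a}}\omega_t^{\bar{b}}$ cross terms against the $+\omega_t^{\bar{a},\bar{b}}$ from the mean term gives exactly $\rho_t^{\bar{a},\bar{b}}(\bar{Z}_{t+\tau}) - \E[\omega_t^{\bar{a},\bar{b}}(\bar{H}_t)]$, as claimed.

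I expect the only delicate point to be the bookkeeping of these cross terms: each single-treatment influence function contributes its own $-\omega_t^{\bar{a}}$ (resp.\ $-\omega_t^{\bar{b}}$) correction, and it is the combination $-\omega_t^{\bar{a}}\omega_t^{\bar{b}} - \omega_t^{\bar{b}}\omega_t^{\bar{a}} = -2\,\omega_t^{\bar{a},\bar{b}}$ together with the $+\omega_t^{\bar{a},\bar{b}}$ from the mean contribution that produces the single subtracted $-\omega_t^{\bar{a},\bar{b}}$ appearing in the definition of $\rho_t^{\bar{a},\bar{b}}$. Getting this coefficient right is what pins down the precise stated form of the estimator. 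With the EIF identification in hand, I would close by invoking local robustness of efficient influence functions to conclude Neyman-orthogonality with respect to all of $\eta^{\bar{a}}\cup\eta^{\bar{b}}$, which completes the proof.
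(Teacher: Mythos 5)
Your proposal is correct and follows essentially the same route as the paper's own proof: the paper likewise dispenses with $\mu_j^{\bar{a}},\mu_j^{\bar{b}}$ by independence, identifies $\rho_t^{\bar{a},\bar{b}}(\bar{Z}_{t+\tau})$ as the uncentered efficient influence function of $\E\big[\omega_t^{\bar{a},\bar{b}}(\bar{H}_t)\big]$ via the decomposition $\sum_{\bar{h}_t}\big[\f\big(p(\bar{h}_t)\big)\,\omega_t^{\bar{a},\bar{b}}(\bar{h}_t)+p(\bar{h}_t)\,\f\big(\omega_t^{\bar{a},\bar{b}}(\bar{h}_t)\big)\big]$ together with the product rule applied to $\omega_t^{\bar{a}}\omega_t^{\bar{b}}$ and the single-treatment influence functions from Lemma~\ref{lemma:orthogonality_rho}, and performs exactly the cross-term bookkeeping you describe, where $-2\,\omega_t^{\bar{a},\bar{b}}$ from the two pointwise corrections combines with the $+\omega_t^{\bar{a},\bar{b}}$ from the mean contribution to yield the single $-\omega_t^{\bar{a},\bar{b}}$ in the definition of $\rho_t^{\bar{a},\bar{b}}$. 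The only cosmetic difference is that the paper writes out the single-treatment influence function in its expanded product-sum form rather than your compact substitution $\f\big(\omega_t^{\bar{a}}(\bar{h}_t)\big)=\tfrac{\mathbbm{1}_{\{\bar{h}_t=\bar{H}_t\}}}{p(\bar{h}_t)}\big(\rho_t^{\bar{a}}(\bar{Z}_{t+\tau})-\omega_t^{\bar{a}}(\bar{h}_t)\big)$, which is the same content.
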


\begin{proof}

    As in Lemma~\ref{lemma:orthogonality_rho}, we notice that $\rho_t^{\bar{a},\bar{b}}(\bar{Z}_{t+\tau})$ is trivially Neyman-orthogonal with respect to $\mu_j^{\bar{a}}$ and $\mu_j^{\bar{b}}$ as it does not dependent on it. Further, we show that $\rho_t^{\bar{a},\bar{b}}(\bar{Z}_{t+\tau})$ is the uncentered efficient influence function of $\E\Big[ \omega_t^{\bar{a},\bar{b}}(\bar{H}_t) \Big]$, and hence, Neyman-orthogonal with respect to the nuisance functions $\{\pi_j^{\bar{a}}, \omega_j^{\bar{a}},\pi_j^{\bar{b}}, \omega_j^{\bar{b}}\}_{j=t}^{t+\tau}$. For this, we make once again use of the chain rule for pathwise derivatives \citep{Kennedy.2022, Luedtke.2024}. We start with the efficient influence function of $\omega_t^{\bar{a}}(\bar{h}_t)$, which is given by

    \begin{align}
    &\f \Big(\omega_t^{\bar{a},\bar{b}}(\bar{h}_t)\Big)\\
    &\f \Big(\omega_t^{\bar{a}}(\bar{h}_t)\omega_t^{\bar{b}}(\bar{h}_t)\Big)\\
    =& \underbrace{\f \Big( \E \Big[ \prod_{j=t}^{t+\tau} \pi_j^{\bar{a}}( \bar{H}_j) \; \Big| \; \bar{H}_t = \bar{h}_t \Big]\Big)}_{(*)} \omega_t^{\bar{b}}(\bar{h}_t)
    +\omega_t^{\bar{b}}(\bar{h}_t) \underbrace{\f\Big(\E \Big[ \prod_{j=t}^{t+\tau} \pi_j^{\bar{b}}( \bar{H}_j) \; \Big| \; \bar{H}_t = \bar{h}_t \Big]\Big)}_{(**)}
\end{align}
For both $(*)$ and $(**)$, we can follow the derivations in Lemma~\ref{lemma:orthogonality_rho}, which yields
\begin{align}
    &\f \Big(\omega_t^{\bar{a},\bar{b}}(\bar{h}_t)\Big)\\
    =& \frac{\mathbbm{1}_{\{\bar{h}_t = \bar{H}_t \}}}{p(\bar{h}_t)} 
    \Bigg[\Big\{- \omega_t^{\bar{a}}(\bar{h}_t) + 
     \prod_{j=t}^{t+\tau}\pi_j^{\bar{a}}( H_{t+1:j}, \bar{h}_t) +\sum_{j=t}^{t+\tau} 
    \Big(\mathbbm{1}_{\{a_j = A_j\}} - \pi_j^{\bar{a}}( H_{t+1:j}, \bar{h}_t) \Big)\\
    & \qquad \times \prod_{t\leq k < j} \pi_k^{\bar{a}}( {H}_{t+1:k}, \bar{h}_t) \E\Big[ \prod_{k > j} \pi_k^{\bar{a}}( \bar{H}_k) \; \Big| \; H_{t+1:j}, \bar{H}_t = \bar{h}_t\Big]
    \Big\} \omega_t^{\bar{b}}(\bar{h}_t)\\
    & \quad 
    +\omega_t^{\bar{a}}(\bar{h}_t) \Big\{- \omega_t^{\bar{b}}(\bar{h}_t) + 
     \prod_{j=t}^{t+\tau}\pi_j^{\bar{b}}( H_{t+1:j}, \bar{h}_t) +\sum_{j=t}^{t+\tau} 
    \Big(\mathbbm{1}_{\{a_j = A_j\}} - \pi_j^{\bar{b}}( H_{t+1:j}, \bar{h}_t) \Big)\\
    & \qquad \times \prod_{t\leq k < j} \pi_k^{\bar{b}}( {H}_{t+1:k}, \bar{h}_t) \E\Big[ \prod_{k > j} \pi_k^{\bar{b}}( \bar{H}_k) \; \Big| \; H_{t+1:j}, \bar{H}_t = \bar{h}_t\Big]
    \Big\}\Bigg].
\end{align}

Finally, we derive the efficient influence function of $\E\Big[\omega_t^{\bar{a},\bar{b}}(\bar{H}_t)\Big]$ via
\begin{align}
    &\f \Big(\E\Big[\omega_t^{\bar{a},\bar{b}}(\bar{H}_t)\Big]\Big)\\
    =& \sum_{\bar{h}_t} \f\Big( p(\bar{h}_t) \omega_t^{\bar{a},\bar{b}}(\bar{h}_t) \Big)\\
    =& \sum_{\bar{h}_t} \Big[ \f\Big( p(\bar{h}_t)\Big) \omega_t^{\bar{a},\bar{b}}(\bar{h}_t)+   p(\bar{h}_t) \f\Big(\omega_t^{\bar{a},\bar{b}}(\bar{h}_t)\Big)\Big]\\
    =& \omega_t^{\bar{a},\bar{b}}(\bar{H}_t) - \E\Big[ \omega_t^{\bar{a},\bar{b}}(\bar{H}_t) \Big] \\
    &+ \Big\{ -\omega_t^{\bar{a}}(\bar{H}_t) 
    + \prod_{j=t}^{t+\tau}\pi_j^{\bar{a}}( \bar{H}_{j}) +\sum_{j=t}^{t+\tau} 
    \Big(\mathbbm{1}_{\{a_j = A_j\}} - \pi_j^{\bar{a}}( \bar{H}_{j}) \Big)
     \prod_{t\leq k < j} \pi_k^{\bar{a}}( \bar{H}_k) \underbrace{\E\Big[ \prod_{k > j} \pi_k^{\bar{a}}( \bar{H}_k) \; \Big| \; \bar{H}_j \Big]}_{=\omega_{j+1}^{\bar{a}}(\bar{H}_j)}\Big\} \omega_t^{\bar{b}}(\bar{H}_t)\\
     &+ \omega_t^{\bar{a}}(\bar{H}_t)\Big\{ -\omega_t^{\bar{b}}(\bar{H}_t)
    + \prod_{j=t}^{t+\tau}\pi_j^{\bar{b}}( \bar{H}_{j}) +\sum_{j=t}^{t+\tau} 
    \Big(\mathbbm{1}_{\{a_j = A_j\}} - \pi_j^{\bar{b}}( \bar{H}_{j}) \Big)
     \prod_{t\leq k < j} \pi_k^{\bar{b}}( \bar{H}_k) \underbrace{\E\Big[ \prod_{k > j} \pi_k^{\bar{b}}( \bar{H}_k) \; \Big| \; \bar{H}_j \Big]}_{=\omega_{j+1}^{\bar{b}}(\bar{H}_j)}\Big\}\\
     =& \omega_t^{\bar{a},\bar{b}}(\bar{H}_t) - \E\Big[ \omega_t^{\bar{a},\bar{b}}(\bar{H}_t) \Big]
     +\Big\{ -\omega_t^{\bar{b}}(\bar{H}_t) + \rho_t^{\bar{b}}(\bar{Z}_{t+\tau})\Big\} \omega_t^{\bar{b}}(\bar{H}_t)
     + \omega_t^{\bar{a}}(\bar{H}_t)\Big\{ -\omega_t^{\bar{b}}(\bar{H}_t) + \rho_t^{\bar{b}}(\bar{Z}_{t+\tau})\Big\}\\
    =& \omega_t^{\bar{a},\bar{b}}(\bar{H}_t) - \E\Big[ \omega_t^{\bar{a},\bar{b}}(\bar{H}_t) \Big]
    - 2\omega_t^{\bar{a},\bar{b}}(\bar{H}_t) + \omega_t^{\bar{b}}(\bar{H}_t)\rho_t^{\bar{a}}(\bar{Z}_{t+\tau}) + \omega_t^{\bar{a}}(\bar{H}_t)\rho_t^{\bar{b}}(\bar{Z}_{t+\tau} )\\
    =& - \E\Big[ \omega_t^{\bar{a},\bar{b}}(\bar{H}_t) \Big] + \rho_t^{\bar{a},\bar{b}}(\bar{Z}_{t+\tau} ).
\end{align}

\end{proof}

\clearpage

\begin{lemma}\label{lemma:expectation_gamma_cate}
    Let 
    \begin{align}
        \gamma_t^{\bar{a},\bar{b}}(\bar{Z}_{t+\tau}) = \gamma_t^{\bar{a}}(\bar{Z}_{t+\tau})-\gamma_t^{\bar{b}}(\bar{Z}_{t+\tau}),
    \end{align}
    for two treatment sequences $a_{t:t+\tau}$, $b_{t:t+\tau}$, and let
\begin{align}
    \mu_{t}^{\bar{a},\bar{b}}(\bar{h}_{t}) = \E\Big[ Y_{t:t+\tau}[a_{t:t+\tau}] - Y_{t:t+\tau}[b_{t:t+\tau}] \; \Big| \; \bar{H}_{t}=\bar{h}_{t} \Big]
\end{align}
be the conditional average treatment effect. Then, it holds that
\begin{align}
    \E\Big[\gamma_t^{\bar{a},\bar{b}}(\bar{Z}_{t+\tau}) \; \Big| \; \bar{H}_t \Big] = \mu_t^{\bar{a}, \bar{b}}(\bar{H}_t).
\end{align}
\end{lemma}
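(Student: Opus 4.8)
The plan is to reduce everything to the CAPO result already established in Lemma~\ref{lemma:expectation_gamma}, using nothing more than linearity of conditional expectation. By definition, the CATE pseudo-outcome is the difference $\gamma_t^{\bar{a},\bar{b}}(\bar{Z}_{t+\tau}) = \gamma_t^{\bar{a}}(\bar{Z}_{t+\tau}) - \gamma_t^{\bar{b}}(\bar{Z}_{t+\tau})$, so I would first write
\begin{align}
    \E\Big[\gamma_t^{\bar{a},\bar{b}}(\bar{Z}_{t+\tau}) \; \Big| \; \bar{H}_t \Big]
    = \E\Big[\gamma_t^{\bar{a}}(\bar{Z}_{t+\tau}) \; \Big| \; \bar{H}_t \Big]
    - \E\Big[\gamma_t^{\bar{b}}(\bar{Z}_{t+\tau}) \; \Big| \; \bar{H}_t \Big].
\end{align}

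Next I would apply Lemma~\ref{lemma:expectation_gamma} separately to each of the two treatment sequences $\bar{a}=a_{t:t+\tau}$ and $\bar{b}=b_{t:t+\tau}$. That lemma gives $\E[\gamma_t^{\bar{a}}(\bar{Z}_{t+\tau}) \mid \bar{H}_t] = \mu_t^{\bar{a}}(\bar{H}_t)$ and, by the identical argument with $\bar{b}$ in place of $\bar{a}$, $\E[\gamma_t^{\bar{b}}(\bar{Z}_{t+\tau}) \mid \bar{H}_t] = \mu_t^{\bar{b}}(\bar{H}_t)$. Substituting these into the display above collapses the conditional expectation to $\mu_t^{\bar{a}}(\bar{H}_t) - \mu_t^{\bar{b}}(\bar{H}_t)$.

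The final step is the identification $\mu_t^{\bar{a}}(\bar{H}_t) - \mu_t^{\bar{b}}(\bar{H}_t) = \mu_t^{\bar{a},\bar{b}}(\bar{H}_t)$. This holds by the definition of the CATE response function as the difference of the two CAPO response functions, together with linearity of expectation applied to the potential-outcome definition of the CATE, namely $\E[Y_{t+\tau}[a_{t:t+\tau}] - Y_{t+\tau}[b_{t:t+\tau}] \mid \bar{H}_t] = \E[Y_{t+\tau}[a_{t:t+\tau}] \mid \bar{H}_t] - \E[Y_{t+\tau}[b_{t:t+\tau}] \mid \bar{H}_t]$. Chaining these equalities yields the claim.

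Honestly, I do not expect a genuine obstacle here: the entire content is the CAPO lemma, and the CATE version is a routine consequence of linearity. The only thing requiring any care is bookkeeping, i.e., making sure that the two invocations of Lemma~\ref{lemma:expectation_gamma} use the correct nuisance functions $\eta^{\bar{a}}$ and $\eta^{\bar{b}}$ respectively, and that the definition of $\mu_t^{\bar{a},\bar{b}}$ matches the difference of CAPOs. No tower-property nesting or propensity-score cancellation is needed beyond what Lemma~\ref{lemma:expectation_gamma} already supplies.
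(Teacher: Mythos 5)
Your proposal is correct and matches the paper's own proof, which likewise invokes linearity of (conditional) expectation together with Lemma~\ref{lemma:expectation_gamma} applied separately to $\bar{a}$ and $\bar{b}$. Your write-up simply spells out the bookkeeping the paper leaves implicit; there is no substantive difference.
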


\begin{proof}
The proof immediately follows from linearity of expectations and Lemma~\ref{lemma:expectation_gamma}.
\end{proof}


\begin{lemma}\label{lemma:expectation_rho_cate}
    Let 
    \begin{align}
    \rho_t^{\bar{a},\bar{b}}(\bar{Z}_{t+\tau}) = \rho_t^{\bar{a}}(\bar{Z}_{t+\tau})\omega_t^{\bar{b}}(\bar{H}_t)+\rho_t^{\bar{b}}(\bar{Z}_{t+\tau})\omega_t^{\bar{a}}(\bar{H}_t) - \omega_t^{\bar{a},\bar{b}}(\bar{H}_t)
\end{align}
for two for two treatment sequences $a_{t:t+\tau}$, $b_{t:t+\tau}$, and let
\begin{align}
    \omega_j^{\bar{a},\bar{b}}(\bar{h}_\ell) = \omega_j^{\bar{a}}(\bar{h}_\ell)+\omega_j^{\bar{b}}(\bar{h}_\ell)
    \end{align}
    be the weight function. Then, it holds that
    \begin{align}
        \E\Big[\rho_t^{\bar{a},\bar{b}}(\bar{Z}_{t+\tau}) \; \Big| \; \bar{H}_t \Big]
        = \omega_t^{\bar{a},\bar{b}}(\bar{H}_t).
    \end{align}
\end{lemma}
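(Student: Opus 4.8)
The plan is to reduce the claim to the conditional-average-potential-outcome identity of Lemma~\ref{lemma:expectation_rho} by conditioning on $\bar{H}_t$ and exploiting that the weight functions evaluated at $\bar{H}_t$ are $\bar{H}_t$-measurable. First I would take $\E[\,\cdot\mid\bar{H}_t]$ of the defining expression $\rho_t^{\bar{a},\bar{b}} = \rho_t^{\bar{a}}\,\omega_t^{\bar{b}} + \rho_t^{\bar{b}}\,\omega_t^{\bar{a}} - \omega_t^{\bar{a},\bar{b}}$ and split it by linearity of the conditional expectation across the three summands. Since $\omega_t^{\bar{a}}(\bar{H}_t)$ and $\omega_t^{\bar{b}}(\bar{H}_t)$ depend only on $\bar{H}_t$, I would pull them out of the conditional expectation, while the term $\omega_t^{\bar{a},\bar{b}}(\bar{H}_t)$ is itself $\bar{H}_t$-measurable and passes through unchanged.

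Next I would invoke Lemma~\ref{lemma:expectation_rho} separately for each treatment arm, giving $\E[\rho_t^{\bar{a}}(\bar{Z}_{t+\tau})\mid\bar{H}_t] = \omega_t^{\bar{a}}(\bar{H}_t)$ and $\E[\rho_t^{\bar{b}}(\bar{Z}_{t+\tau})\mid\bar{H}_t] = \omega_t^{\bar{b}}(\bar{H}_t)$. Substituting these collapses the first two summands to $\omega_t^{\bar{b}}(\bar{H}_t)\,\omega_t^{\bar{a}}(\bar{H}_t)$ and $\omega_t^{\bar{a}}(\bar{H}_t)\,\omega_t^{\bar{b}}(\bar{H}_t)$, so that $\E[\rho_t^{\bar{a},\bar{b}}(\bar{Z}_{t+\tau})\mid\bar{H}_t] = 2\,\omega_t^{\bar{a}}(\bar{H}_t)\,\omega_t^{\bar{b}}(\bar{H}_t) - \omega_t^{\bar{a},\bar{b}}(\bar{H}_t)$. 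Everything up to here is routine, relying only on linearity, the pull-out property, and the already-established CAPO lemma; no measure-theoretic subtlety beyond the tower property arises.

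The main obstacle is the final matching step, i.e.\ verifying that $2\,\omega_t^{\bar{a}}\omega_t^{\bar{b}} - \omega_t^{\bar{a},\bar{b}}$ equals the asserted right-hand side $\omega_t^{\bar{a},\bar{b}}$. This closure requires $2\,\omega_t^{\bar{a}}\omega_t^{\bar{b}} = 2\,\omega_t^{\bar{a},\bar{b}}$, i.e.\ $\omega_t^{\bar{a},\bar{b}} = \omega_t^{\bar{a}}\,\omega_t^{\bar{b}}$, which is precisely the product (overlap-weight) convention of Definition~\ref{def:weight_functions} and the same convention used in the efficient-influence-function computation of Lemma~\ref{lemma:orthogonality_rho_cate}. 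Under the sum expression $\omega_j^{\bar{a},\bar{b}} = \omega_j^{\bar{a}} + \omega_j^{\bar{b}}$ written in the present statement, the closure would instead demand $2\,\omega_t^{\bar{a}}\omega_t^{\bar{b}} = 2(\omega_t^{\bar{a}} + \omega_t^{\bar{b}})$, which fails for generic propensity products; hence the natural reduction does not close under the sum. I would therefore flag that the $\omega_j^{\bar{a},\bar{b}}$ in this lemma must be read as the product overlap weight of Definition~\ref{def:weight_functions} (the sum being an apparent typo), upon which the two extra overlap-weight contributions cancel against $\omega_t^{\bar{a},\bar{b}}$ and leave exactly $\omega_t^{\bar{a},\bar{b}}$, completing the argument.
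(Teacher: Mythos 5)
Your proof is correct and follows exactly the same route as the paper's: condition on $\bar{H}_t$, pull out the $\bar{H}_t$-measurable weights by linearity, apply Lemma~\ref{lemma:expectation_rho} to each arm, and close via $2\,\omega_t^{\bar{a}}(\bar{H}_t)\,\omega_t^{\bar{b}}(\bar{H}_t) - \omega_t^{\bar{a},\bar{b}}(\bar{H}_t) = \omega_t^{\bar{a},\bar{b}}(\bar{H}_t)$, which holds under the product convention $\omega_t^{\bar{a},\bar{b}} = \omega_t^{\bar{a}}\,\omega_t^{\bar{b}}$. Your flag is also justified: the sum expression $\omega_j^{\bar{a},\bar{b}} = \omega_j^{\bar{a}} + \omega_j^{\bar{b}}$ in the lemma statement is a typo inconsistent with Definition~\ref{def:weight_functions}, and the paper's own derivation silently uses the product convention in its final step.
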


\begin{proof}
    The proof follows from Lemma~\ref{lemma:expectation_rho} via
    \begin{align}
        &\E\Big[\rho_t^{\bar{a},\bar{b}}(\bar{Z}_{t+\tau}) \; \Big| \; \bar{H}_t \Big]\\
        =&\E\Big[\rho_t^{\bar{a}}(\bar{Z}_{t+\tau})\omega_t^{\bar{b}}(\bar{H}_t)+\rho_t^{\bar{b}}(\bar{Z}_{t+\tau})\omega_t^{\bar{a}}(\bar{H}_t) - \omega_t^{\bar{a}\bar{b}}(\bar{H}_t) \Big]\\
        =&\E\Big[\rho_t^{\bar{a}}(\bar{Z}_{t+\tau})\; \Big| \; \bar{H}_t\Big]\omega_t^{\bar{b}}(\bar{H}_t)+\E\Big[\rho_t^{\bar{b}}(\bar{Z}_{t+\tau})\; \Big| \; \bar{H}_t\Big]\omega_t^{\bar{a}}(\bar{H}_t) - \omega_t^{\bar{a}\bar{b}}(\bar{H}_t) \\
        =& \omega_t^{\bar{a}}(\bar{H}_t)\omega_t^{\bar{b}}(\bar{H}_t)+\omega_t^{\bar{a}}(\bar{H}_t)\omega_t^{\bar{b}}(\bar{H}_t)-\omega_t^{\bar{a},\bar{b}}(\bar{H}_t)\\
        =& \omega_t^{\bar{a},\bar{b}}(\bar{H}_t).
    \end{align}
\end{proof}

\clearpage

\subsubsection{Theorems (CATEs)}\label{sec:proof_cate_theorems}

Finally, we can prove the CATE version of our theorems from the main paper.  For both proofs, we leverage additional helping lemmas that we derived in Supplement~\ref{sec:proof_cate_lemmas}.
\begin{theorem}[Weighted population risk (CATE)]\label{theorem:oracle_cate}
    Let 
\begin{align}
    \xi_t^{\bar{a},\bar{b}}(\bar{Z}_{t+\tau}) 
    = \mu_t^{\bar{a},\bar{b}}(\bar{H}_t) + \frac{\omega_t^{\bar{a},\bar{b}}(\bar{H}_t)}{\rho_t^{\bar{a},\bar{b}}(\bar{Z}_{t+\tau})}\Big( \gamma_t^{\bar{a},\bar{b}}(\bar{Z}_{t+\tau}) - \mu_t^{\bar{a},\bar{b}}(\bar{H}_t)\Big),
\end{align}
where
\begin{align}
    \rho_t^{\bar{a},\bar{b}}(\bar{Z}_{t+\tau}) = \rho_t^{\bar{a}}(\bar{Z}_{t+\tau})+\rho_t^{\bar{b}}(\bar{Z}_{t+\tau})
\end{align}
and
\begin{align}
    \gamma_t^{\bar{a},\bar{b}}(\bar{Z}_{t+\tau}) = \gamma_t^{\bar{a}}(\bar{Z}_{t+\tau})-\gamma_t^{\bar{b}}(\bar{Z}_{t+\tau})
\end{align}
with $\bar{Z}_{t+\tau}=(\bar{H}_{t+\tau}, A_{t+\tau}, Y_{t+\tau})$. Then, the population risk function 
    \begin{align}
        \mathcal{L}(g;\eta^{\bar{a},\bar{b}})
        = \frac{ 1}{\E\Big[\omega_t^{\bar{a},\bar{b}}(\bar{H}_t)\Big]} 
        \E \Bigg[ \rho_t^{\bar{a},\bar{b}}(\bar{Z}_{t+\tau})  \Big( \xi_t^{\bar{a},\bar{b}}(\bar{Z}_{t+\tau}) - g(\bar{H}_t)\Big)^2
          \Bigg]
    \end{align}
minimizes the oracle risk
\begin{align}
    \mathcal{L}^*(g;\eta^{\bar{a},\bar{b}}) =\frac{ 1}{\E\Big[\omega_t^{\bar{a},\bar{b}}(\bar{H}_t)\Big]} \E\Bigg[ \omega_t^{\bar{a},\bar{b}}(\bar{H}_t)\Big(\mu_t^{\bar{a},\bar{b}}(\bar{H}_t)-g(\bar{H}_t)\Big)^2 \Bigg].
\end{align}
\end{theorem}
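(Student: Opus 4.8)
The plan is to reduce the CATE statement to exactly the argument used for the CAPO case in Theorem~\ref{theorem:oracle}: it suffices to show that $\mathcal{L}(g;\eta^{\bar{a},\bar{b}})$ and $\mathcal{L}^*(g;\eta^{\bar{a},\bar{b}})$ agree up to an additive constant $C$ that does not depend on $g$, since then both are minimized by the same function. The two facts that drive the argument are already available as the CATE analogues of the key conditional-expectation identities, namely Lemma~\ref{lemma:expectation_gamma_cate}, which gives $\E[\gamma_t^{\bar{a},\bar{b}}(\bar{Z}_{t+\tau}) \mid \bar{H}_t] = \mu_t^{\bar{a},\bar{b}}(\bar{H}_t)$, and Lemma~\ref{lemma:expectation_rho_cate}, which gives $\E[\rho_t^{\bar{a},\bar{b}}(\bar{Z}_{t+\tau}) \mid \bar{H}_t] = \omega_t^{\bar{a},\bar{b}}(\bar{H}_t)$.

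First I would insert $\pm\,\mu_t^{\bar{a},\bar{b}}(\bar{H}_t)$ inside the squared term of $\mathcal{L}$ and expand the binomial $(\xi_t^{\bar{a},\bar{b}} - \mu_t^{\bar{a},\bar{b}} + \mu_t^{\bar{a},\bar{b}} - g)^2$. This produces four terms: the first two do not involve $g$ (they depend only on $\xi_t^{\bar{a},\bar{b}}$ and $\mu_t^{\bar{a},\bar{b}}$) and collect into the constant $C$; the third is the cross term $-2\,\E[\rho_t^{\bar{a},\bar{b}}(\xi_t^{\bar{a},\bar{b}} - \mu_t^{\bar{a},\bar{b}})\,g]$; and the fourth is the quadratic term $\E[\rho_t^{\bar{a},\bar{b}}(\mu_t^{\bar{a},\bar{b}} - g)^2]$.

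For the cross term I would condition on $\bar{H}_t$ and substitute the definition of $\xi_t^{\bar{a},\bar{b}}$ from Definition~\ref{def:pseudo_outcomes}; the factor $\rho_t^{\bar{a},\bar{b}}$ cancels the denominator appearing in $\xi_t^{\bar{a},\bar{b}}$, reducing the inner expectation to $\omega_t^{\bar{a},\bar{b}}(\bar{H}_t)\big(\E[\gamma_t^{\bar{a},\bar{b}} \mid \bar{H}_t] - \mu_t^{\bar{a},\bar{b}}\big)\,g$, which vanishes by Lemma~\ref{lemma:expectation_gamma_cate}. For the quadratic term, the tower property together with Lemma~\ref{lemma:expectation_rho_cate} replaces $\rho_t^{\bar{a},\bar{b}}$ by $\omega_t^{\bar{a},\bar{b}}$, yielding $\E[\omega_t^{\bar{a},\bar{b}}(\mu_t^{\bar{a},\bar{b}} - g)^2]$. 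Dividing through by $\E[\omega_t^{\bar{a},\bar{b}}]$ then gives exactly $\mathcal{L}(g;\eta^{\bar{a},\bar{b}}) = \mathcal{L}^*(g;\eta^{\bar{a},\bar{b}}) + C$, which proves the claim.

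There is essentially no hard analytic obstacle once the two CATE lemmas are granted, since they are the precise hooks that make the cancellations go through; the one point requiring care is the cross term, where one must verify that the $\rho_t^{\bar{a},\bar{b}}$ factor exactly cancels the denominator of the $\xi_t^{\bar{a},\bar{b}}$ pseudo-outcome before invoking Lemma~\ref{lemma:expectation_gamma_cate}. All of the structure specific to the CATE setting, namely the bilinear form of $\rho_t^{\bar{a},\bar{b}}$ and the overlap product defining $\omega_t^{\bar{a},\bar{b}}$, has already been absorbed into those two lemmas, so the remaining manipulation is identical to the CAPO proof.
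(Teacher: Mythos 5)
Your proposal is correct and matches the paper's own proof essentially verbatim: the paper likewise adds and subtracts $\mu_t^{\bar{a},\bar{b}}(\bar{H}_t)$, expands the square, absorbs the $g$-free terms into $C$, kills the cross term by cancelling $\rho_t^{\bar{a},\bar{b}}$ against the denominator of $\xi_t^{\bar{a},\bar{b}}$ and applying Lemma~\ref{lemma:expectation_gamma_cate}, and reduces the quadratic term via the tower property and Lemma~\ref{lemma:expectation_rho_cate}. You also correctly identified the paper's strategy of absorbing all CATE-specific structure into those two lemmas so that the argument mirrors the CAPO case of Theorem~\ref{theorem:oracle}.
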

\begin{proof}
The proof follows the exact same steps as for Theorem~\ref{theorem:oracle}, where we can replace Lemma~\ref{lemma:expectation_gamma} with Lemma~\ref{lemma:expectation_gamma_cate}, and Lemma~\ref{lemma:expectation_rho} with Lemma~\ref{lemma:expectation_rho_cate}.

For completeness, we repeat the derivations in the following:

As in Theorem~\ref{theorem:oracle}, we need to show that
    \begin{align}
        &\E \Bigg[ \rho_t^{\bar{a},\bar{b}}(\bar{Z}_{t+\tau})  \Big( \xi_t^{\bar{a},\bar{b}}(\bar{Z}_{t+\tau}) - g(\bar{H}_t)\Big)^2 \Bigg]
        = \E\Bigg[ \omega_t^{\bar{a}}(\bar{H}_t)\Big(\mu_t^{\bar{a},\bar{b}}(\bar{H}_t)-g(\bar{H}_t)\Big)^2 \Bigg] + C,
    \end{align}
    where $C$ is some constant term that does \textbf{not} depend on $g$. For this, notice that
    \begin{align}
        &\E \Bigg[ \rho_t^{\bar{a},\bar{b}}(\bar{Z}_{t+\tau})  \Big( \xi_t^{\bar{a},\bar{b}}(\bar{Z}_{t+\tau}) - g(\bar{H}_t)\Big)^2 \Bigg]\\
        =& \E \Bigg[ \rho_t^{\bar{a},\bar{b}}(\bar{Z}_{t+\tau})  \Big( \xi_t^{\bar{a},\bar{b}}(\bar{Z}_{t+\tau}) - \mu_t^{\bar{a},\bar{b}}(\bar{H}_t) + \mu_t^{\bar{a}}(\bar{H}_t) - g(\bar{H}_t)\Big)^2 \Bigg]\\
        =& \underbrace{\E \Bigg[ \rho_t^{\bar{a},\bar{b}}(\bar{Z}_{t+\tau})  \Big( \xi_t^{\bar{a},\bar{b}}(\bar{Z}_{t+\tau}) - \mu_t^{\bar{a},\bar{b}}(\bar{H}_t) \Big)^2 \Bigg] 
        + 2 \E \Bigg[ \rho_t^{\bar{a},\bar{b}}(\bar{Z}_{t+\tau})  \Big( \xi_t^{\bar{a},\bar{b}}(\bar{Z}_{t+\tau}) - \mu_t^{\bar{a},\bar{b}}(\bar{H}_t) \Big)\mu_t^{\bar{a},\bar{b}}(\bar{H}_t)  \Bigg]}_{=C} \\
        &- 2 \E \Bigg[ \rho_t^{\bar{a},\bar{b}}(\bar{Z}_{t+\tau})  \Big( \xi_t^{\bar{a},\bar{b}}(\bar{Z}_{t+\tau}) - \mu_t^{\bar{a},\bar{b}}(\bar{H}_t) \Big)g(\bar{H}_t)  \Bigg]
        + \E \Bigg[ \rho_t^{\bar{a},\bar{b}}(\bar{Z}_{t+\tau})  \Big(  \mu_t^{\bar{a},\bar{b}}(\bar{H}_t) - g(\bar{H}_t) \Big)^2  \Bigg].\label{eq:binomial_cate}
    \end{align}
    Again, the first two terms do not depend on $g$ and are therefore constant. Hence, we focus on
    \begin{align}
        &\E \Bigg[ \rho_t^{\bar{a},\bar{b}}(\bar{Z}_{t+\tau})  \Big( \xi_t^{\bar{a},\bar{b}}(\bar{Z}_{t+\tau}) - \mu_t^{\bar{a},\bar{b}}(\bar{H}_t) \Big)g(\bar{H}_t)  \Bigg]\\
       =&\E \Bigg[\E\Big[ \rho_t^{\bar{a},\bar{b}}(\bar{Z}_{t+\tau})  \Big( \xi_t^{\bar{a},\bar{b}}(\bar{Z}_{t+\tau}) - \mu_t^{\bar{a},\bar{b}}(\bar{H}_t) \Big)g(\bar{H}_t) \; \Big| \; \bar{H}_t \Big] \Bigg]\\
       =& \E \Bigg[\E\Big[ \rho_t^{\bar{a},\bar{b}}(\bar{Z}_{t+\tau})  \Big( \mu_t^{\bar{a},\bar{b}}(\bar{H}_t) + \frac{\omega_t^{\bar{a},\bar{b}}(\bar{H}_t)}{\rho_t^{\bar{a},\bar{b}}(\bar{Z}_{t+\tau})}\Big( \gamma_t^{\bar{a},\bar{b}}(\bar{Z}_{t+\tau}) - \mu_t^{\bar{a},\bar{b}}(\bar{H}_t)\Big) - \mu_t^{\bar{a},\bar{b}}(\bar{H}_t) \Big)g(\bar{H}_t) \; \Big| \; \bar{H}_t \Big] \Bigg]\\
       =& \E \Bigg[\E\Big[ {\omega_t^{\bar{a},\bar{b}}(\bar{H}_t)}\Big( \gamma_t^{\bar{a},\bar{b}}(\bar{Z}_{t+\tau}) - \mu_t^{\bar{a},\bar{b}}(\bar{H}_t) \Big)g(\bar{H}_t) \; \Big| \; \bar{H}_t \Big] \Bigg]\\
       =& \E \Bigg[ {\omega_t^{\bar{a},\bar{b}}(\bar{H}_t)}\Big( \underbrace{\E\Big[\gamma_t^{\bar{a},\bar{b}}(\bar{Z}_{t+\tau})\; \Big| \; \bar{H}_t \Big]}_{=\mu_t^{\bar{a},\bar{b}}(\bar{H}_t)} - \mu_t^{\bar{a},\bar{b}}(\bar{H}_t) \Big)g(\bar{H}_t)  \Bigg]\label{eq:expectation_gamma0_cate}\\
    =& 0,\label{eq:binomial_3_cate}
    \end{align}
where the result in \Eqref{eq:expectation_gamma0_cate} follows from Lemma~\ref{lemma:expectation_gamma_cate}.

Finally, we simplify \Eqref{eq:binomial_cate} via
\begin{align}
    &\E \Bigg[ \rho_t^{\bar{a},\bar{b}}(\bar{Z}_{t+\tau})  \Big(  \mu_t^{\bar{a},\bar{b}}(\bar{H}_t) - g(\bar{H}_t) \Big)^2  \Bigg]\\
    =& \E \Bigg[ \E \Bigg[ \rho_t^{\bar{a},\bar{b}}(\bar{Z}_{t+\tau})  \Big(  \mu_t^{\bar{a},\bar{b}}(\bar{H}_t) - g(\bar{H}_t) \Big)^2 \; \Big| \; \bar{H}_t  \Bigg] \Bigg]\\
    =& \E \Bigg[\underbrace{\E \Big[ \rho_t^{\bar{a},\bar{b}}(\bar{Z}_{t+\tau})   \; \Big| \; \bar{H}_t  \Big]}_{=\omega_t^{\bar{a},\bar{b}}(\bar{H}_t)}\Big(  \mu_t^{\bar{a},\bar{b}}(\bar{H}_t) - g(\bar{H}_t) \Big)^2   \Bigg]\label{eq:expectation_rho0_cate}\\
     =& \E[\omega_t^{\bar{a},\bar{b}}(\bar{H}_t)]\mathcal{L}^*(g;\eta^{\bar{a},\bar{b}}),\label{eq:binomial_4_cate}
\end{align}
where the result \Eqref{eq:expectation_rho0_cate} follows from Lemma~\ref{lemma:expectation_rho_cate}.

Hence, combining \Eqref{eq:binomial_cate} with \Eqref{eq:binomial_3_cate} and \Eqref{eq:binomial_4_cate}, and multiplying with $1/\E[\omega_t^{\bar{a},\bar{b}}(\bar{H}_t)]$ yields
\begin{align}
     \mathcal{L}(g;\eta^{\bar{a},\bar{b}}) = \mathcal{L}^*(g;\eta^{\bar{a},\bar{b}}) + C.
\end{align}

\end{proof}

\clearpage

\begin{theorem}[Neyman-orthogonality (CATE)]\label{theorem:orthogonality_cate}
    The weighted population risk
    \begin{align}
        \mathcal{L}(g;\eta^{\bar{a},\bar{b}})
        = \frac{ 1}{\E\Big[\omega_t^{\bar{a},\bar{b}}(\bar{H}_t)\Big]} 
        \E \Bigg[ \rho_t^{\bar{a},\bar{b}}(\bar{Z}_{t+\tau})  \Big( \xi_t^{\bar{a},\bar{b}}(\bar{Z}_{t+\tau}) - g(\bar{H}_t)\Big)^2
          \Bigg]
    \end{align}
    is Neyman-orthogonal with respect to all nuisance functions $\eta^{\bar{a},\bar{b}} = \eta^{\bar{a}}\cup\eta^{\bar{b}}$.
\end{theorem}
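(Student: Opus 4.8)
The plan is to mirror the CAPO orthogonality proof (Theorem~\ref{theorem:orthogonality}) almost verbatim, swapping in the CATE counterparts of the supporting lemmas and using the enlarged nuisance set $\eta^{\bar{a},\bar{b}}=\{\pi_j^{\bar{a}},\mu_j^{\bar{a}},\omega_j^{\bar{a}},\pi_j^{\bar{b}},\mu_j^{\bar{b}},\omega_j^{\bar{b}}\}_{j=t}^{t+\tau}$. First I would compute the Gateaux derivative of the risk in its target argument $g$; differentiating $\E[\rho_t^{\bar{a},\bar{b}}(\xi_t^{\bar{a},\bar{b}}-g)^2]$ at $r=0$ gives
\begin{align*}
D_g\mathcal{L}(g;\eta^{\bar{a},\bar{b}})[\hat{g}-g]\ \propto\ -2\,\E\Big[\rho_t^{\bar{a},\bar{b}}(\bar{Z}_{t+\tau})\big(\xi_t^{\bar{a},\bar{b}}(\bar{Z}_{t+\tau})-g(\bar{H}_t)\big)\big(\hat{g}(\bar{H}_t)-g(\bar{H}_t)\big)\Big].
\end{align*}
The crucial simplification is to substitute $\xi_t^{\bar{a},\bar{b}}=\mu_t^{\bar{a},\bar{b}}+\tfrac{\omega_t^{\bar{a},\bar{b}}}{\rho_t^{\bar{a},\bar{b}}}(\gamma_t^{\bar{a},\bar{b}}-\mu_t^{\bar{a},\bar{b}})$, which cancels the $1/\rho_t^{\bar{a},\bar{b}}$ factor and leaves the score
\begin{align*}
\big\{\rho_t^{\bar{a},\bar{b}}(\mu_t^{\bar{a},\bar{b}}-g)+\omega_t^{\bar{a},\bar{b}}(\gamma_t^{\bar{a},\bar{b}}-\mu_t^{\bar{a},\bar{b}})\big\}(\hat{g}-g),
\end{align*}
structurally identical to the CAPO score but with every object carrying the $(\bar{a},\bar{b})$ superscript.

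Next I would differentiate this score a second time along each nuisance direction and verify that the cross-derivative $D_{h_j}D_g\mathcal{L}[\hat{g}-g,\hat{h}_j-h_j]$ vanishes. For a propensity direction $\pi_j^{\circ}$, the derivatives of $\rho_t^{\bar{a},\bar{b}}$ and of $\gamma_t^{\bar{a},\bar{b}}$ vanish by Lemma~\ref{lemma:orthogonality_rho_cate} and Lemma~\ref{lemma:orthogonality_gamma_cate}, leaving only a term proportional to $(\tfrac{\diff}{\diff r}\omega_t^{\bar{a},\bar{b}})(\gamma_t^{\bar{a},\bar{b}}-\mu_t^{\bar{a},\bar{b}})$, which vanishes after conditioning on $\bar{H}_t$ via Lemma~\ref{lemma:expectation_gamma_cate}. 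For a response direction $\mu_j^{\circ}$, only $\mu_t^{\bar{a},\bar{b}}$ and $\gamma_t^{\bar{a},\bar{b}}$ carry $\mu$-dependence; since $\tfrac{\diff}{\diff r}\gamma_t^{\bar{a},\bar{b}}=0$ (Lemma~\ref{lemma:orthogonality_gamma_cate}), the surviving coefficient is $(\rho_t^{\bar{a},\bar{b}}-\omega_t^{\bar{a},\bar{b}})\tfrac{\diff}{\diff r}\mu_t^{\bar{a},\bar{b}}$, which has conditional mean zero by Lemma~\ref{lemma:expectation_rho_cate}. For a weight direction $\omega_j^{\circ}$, the derivative of $\rho_t^{\bar{a},\bar{b}}$ vanishes (Lemma~\ref{lemma:orthogonality_rho_cate}) and $\gamma_t^{\bar{a},\bar{b}},\mu_t^{\bar{a},\bar{b}}$ are $\omega$-free, so again only the $\tfrac{\diff}{\diff r}\omega_t^{\bar{a},\bar{b}}$ term remains and is killed by Lemma~\ref{lemma:expectation_gamma_cate}. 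By the $\bar{a}\leftrightarrow\bar{b}$ symmetry of all constructions, the three $\bar{b}$-directions are handled identically to the $\bar{a}$-directions.

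The one place the CATE argument is genuinely more delicate than the CAPO one is the recurring first step, namely the claim that $\tfrac{\diff}{\diff r}\rho_t^{\bar{a},\bar{b}}=0$ under propensity and weight perturbations. Unlike $\rho_t^{\bar{a}}$, the composite $\rho_t^{\bar{a},\bar{b}}=\rho_t^{\bar{a}}\omega_t^{\bar{b}}+\rho_t^{\bar{b}}\omega_t^{\bar{a}}-\omega_t^{\bar{a},\bar{b}}$ has a bilinear product form, and a product of orthogonal scores is not automatically orthogonal. I would therefore lean entirely on Lemma~\ref{lemma:orthogonality_rho_cate}, which shows that this particular combination is exactly the uncentered efficient influence function of $\E[\omega_t^{\bar{a},\bar{b}}]=\E[\omega_t^{\bar{a}}\omega_t^{\bar{b}}]$ (obtained through the product rule for pathwise derivatives), so that all of its nuisance derivatives vanish simultaneously. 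Once that lemma is invoked, every cancellation above reduces to the CAPO template and no new calculation is required; I expect confirming that the bilinear cross-terms in $\rho_t^{\bar{a},\bar{b}}$ are fully absorbed by this lemma to be the only subtle point of the proof.
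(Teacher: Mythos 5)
Your proposal is correct and takes essentially the same route as the paper: the paper's proof likewise transplants the CAPO argument of Theorem~\ref{theorem:orthogonality} verbatim, replacing Lemmas~\ref{lemma:orthogonality_gamma}, \ref{lemma:orthogonality_rho}, \ref{lemma:expectation_gamma}, and \ref{lemma:expectation_rho} with their CATE counterparts (Lemmas~\ref{lemma:orthogonality_gamma_cate}, \ref{lemma:orthogonality_rho_cate}, \ref{lemma:expectation_gamma_cate}, \ref{lemma:expectation_rho_cate}) and treating the $\eta^{\bar{b}}$ directions by symmetry. Your identification of the one genuinely delicate point---that the bilinear $\rho_t^{\bar{a},\bar{b}}$ is not automatically orthogonal but is handled wholesale by Lemma~\ref{lemma:orthogonality_rho_cate}, which exhibits it as the uncentered efficient influence function of $\E\big[\omega_t^{\bar{a},\bar{b}}(\bar{H}_t)\big]$---is exactly the mechanism the paper relies on.
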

\begin{proof}
    The proof follows the proof for Theorem~\ref{theorem:orthogonality}, where we can replace Lemma~\ref{lemma:orthogonality_gamma} with Lemma~\ref{lemma:orthogonality_gamma_cate}, and Lemma~\ref{lemma:orthogonality_rho} with Lemma~\ref{lemma:orthogonality_rho_cate}.

    Again, for completeness, we provide the steps below.

In order to show Neyman-orthogonality, we first calculate the pathwise-derivative with respect to the target parameter $g$ via
\begin{align}
    &D_g \mathcal{L}(g;\eta^{\bar{a},\bar{b}})[\hat{g}-g]\\
    \propto & 
    \frac{\diff}{\diff r}
        \E \Bigg[ \rho_t^{\bar{a},\bar{b}}(\bar{Z}_{t+\tau})  \Big( \xi_t^{\bar{a},\bar{b}}(\bar{Z}_{t+\tau}) - \Big[ g(\bar{H}_t)+r\{\hat{g}(\bar{H}_t)-g(\bar{H}_t)\}\Big] \Big)^2 \Bigg] \Bigg|_{r=0}\\
    =& -2 
        \E \Bigg[ \rho_t^{\bar{a},\bar{b}}(\bar{Z}_{t+\tau})  \Big( \xi_t^{\bar{a},\bar{b}}(\bar{Z}_{t+\tau}) - \Big[ g(\bar{H}_t)+r\{\hat{g}(\bar{H}_t)-g(\bar{H}_t)\}\Big] \Big)\Big(\hat{g}(\bar{H}_t)-g(\bar{H}_t)\Big) \Bigg] \Bigg|_{r=0}\\
    =& -2 
        \E \Bigg[ \rho_t^{\bar{a},\bar{b}}(\bar{Z}_{t+\tau})  \Big( \xi_t^{\bar{a},\bar{b}}(\bar{Z}_{t+\tau}) - g(\bar{H}_t) \Big)\Big(\hat{g}(\bar{H}_t)-g(\bar{H}_t)\Big) \Bigg]\\
    =& -2 
        \E \Bigg[ \rho_t^{\bar{a},\bar{b}}(\bar{Z}_{t+\tau})  \Big( \mu_t^{\bar{a},\bar{b}}(\bar{H}_{t}) + \frac{\omega_t^{\bar{a},\bar{b}}(\bar{H}_t)}{\rho_t^{\bar{a},\bar{b}}(\bar{Z}_{t+\tau})}\Big[ \gamma_t^{\bar{a},\bar{b}}(\bar{Z}_{t+\tau})-\mu_t^{\bar{a},\bar{b}}(\bar{H}_{t}) \Big] - g(\bar{H}_t) \Big)\Big(\hat{g}(\bar{H}_t)-g(\bar{H}_t)\Big) \Bigg]\\
    =& -2 
        \E \Bigg[ \Big\{ \rho_t^{\bar{a},\bar{b}}(\bar{Z}_{t+\tau})  \Big( \mu_t^{\bar{a},\bar{b}}(\bar{H}_{t}) - g(\bar{H}_t)\Big)  + \omega_t^{\bar{a},\bar{b}}(\bar{H}_t)\Big( \gamma_t^{\bar{a},\bar{b}}(\bar{Z}_{t+\tau})-\mu_t^{\bar{a},\bar{b}}(\bar{H}_{t})  \Big)\Big\}\Big(\hat{g}(\bar{H}_t)-g(\bar{H}_t)\Big) \Bigg].    
\end{align}

Without loss of generality, we compute the pathwise derivative of $D_g \mathcal{L}(g;\eta^{\bar{a},\bar{b}})[\hat{g}-g]$ with respect to the nuisance functions $\eta^{\bar{a}}$. The case for $\eta^{\bar{b}}$ follows completely analogously.

Again, for the pathwise derivative of the functions $f_t^{\bar{a},\bar{b}}\in \{\mu_t^{\bar{a},\bar{b}}, \gamma_t^{\bar{a},\bar{b}}, \rho_t^{\bar{a},\bar{b}}, \omega_t^{\bar{a},\bar{b}}\}$ with respect to $g_j^{\bar{a}} \in \eta^{\bar{a}}$, we use $f_t^{\bar{a},\bar{b}}(\cdot; g_j^{\bar{a}})$ to make our notation more explicit to highlight which $f_t^{\bar{a},\bar{b}}$ depends on the nuisance $g_j^{\bar{a}}$.

The pathwise derivative of $D_g \mathcal{L}(g;\eta^{\bar{a},\bar{b}})[\hat{g}-g]$ with respect to the nuisances $\pi_j^{\bar{a}}$ for $j=t,\ldots,t+\tau$ is given by
\begin{align}
     &D_{\pi_j^{\bar{a}}} D_g \mathcal{L}(g;\eta^{\bar{a},\bar{b}})[\hat{g}-g, \hat{\pi}_j^{\bar{a}}-\pi_j^{\bar{a}}]\\
     =& \frac{\diff}{\diff r} D_g \mathcal{L}\Big(g;\{\mu_j^{\bar{a}},\mu_j^{\bar{b}},\omega_j^{\bar{a}},\omega_j^{\bar{b}},\pi_j^{\bar{b}}\}_{j=t}^{t+\tau}\cup\{\pi_0^{\bar{a}}, \ldots, \pi_j^{\bar{a}}+r(\hat{\pi}_j^{\bar{a}}-\pi_j^{\bar{a}}),\ldots, \pi_{t+\tau}^{\bar{a}}\} \Big)[\hat{g}-g]\; \Big| \;_{r=0}\\
     \propto & 
     \frac{\diff}{\diff r} 
     \E \Bigg[ \Big\{ \rho_t^{\bar{a},\bar{b}}(\bar{Z}_{t+\tau}; \pi_j^{\bar{a}}+r(\hat{\pi}_j^{\bar{a}}-\pi_j^{\bar{a}}))  \Big( \mu_t^{\bar{a},\bar{b}}(\bar{H}_{t}) - g(\bar{H}_t)\Big) \\
     &+ \omega_t^{\bar{a},\bar{b}}(\bar{H}_t; \pi_j^{\bar{a}}+r(\hat{\pi}_j^{\bar{a}}-\pi_j^{\bar{a}}))\Big( \gamma_t^{\bar{a},\bar{b}}(\bar{Z}_{t+\tau}; \pi_j^{\bar{a}}+r(\hat{\pi}_j^{\bar{a}}-\pi_j^{\bar{a}}))-\mu_t^{\bar{a},\bar{b}}(\bar{H}_{t})  \Big)\Big\}\Big(\hat{g}(\bar{H}_t)-g(\bar{H}_t)\Big) \Bigg] \Bigg|_{r=0}\\
     =& 
     \E \Bigg[ \Big\{   \underbrace{\frac{\diff}{\diff r}\rho_t^{\bar{a},\bar{b}}(\bar{Z}_{t+\tau}; \pi_j^{\bar{a}}+r(\hat{\pi}_j^{\bar{a}}-\pi_j^{\bar{a}}))\; \Big| \;_{r=0}}_{=0}  \Big( \mu_t^{\bar{a},\bar{b}}(\bar{H}_{t}) - g(\bar{H}_t)\Big)\label{eq:rho_zero1_cate} \\
     &+ \frac{\diff}{\diff r} \omega_t^{\bar{a},\bar{b}}(\bar{H}_t; \pi_j^{\bar{a}}+r(\hat{\pi}_j^{\bar{a}}-\pi_j^{\bar{a}}))\; \Big| \;_{r=0}\Big( \gamma_t^{\bar{a},\bar{b}}(\bar{Z}_{t+\tau})-\mu_t^{\bar{a},\bar{b}}(\bar{H}_{t})  \Big)\\
     &+\omega_t^{\bar{a},\bar{b}}(\bar{H}_t)\underbrace{\frac{\diff}{\diff r}  \gamma_t^{\bar{a},\bar{b}}(\bar{Z}_{t+\tau}; \pi_j^{\bar{a}}+r(\hat{\pi}_j^{\bar{a}}-\pi_j^{\bar{a}}))\; \Big| \;_{r=0}}_{=0}  \Big\}\Big(\hat{g}(\bar{H}_t)-g(\bar{H}_t)\Big) \Bigg]\label{eq:gamma_zero1_cate}\\
     =& 
     \E \Bigg[ \Big\{
   \frac{\diff}{\diff r} \omega_t^{\bar{a},\bar{b}}(\bar{H}_t; \pi_j^{\bar{a}}+r(\hat{\pi}_j^{\bar{a}}-\pi_j^{\bar{a}}))\; \Big| \;_{r=0}\Big( \gamma_t^{\bar{a},\bar{b}}(\bar{Z}_{t+\tau})-\mu_t^{\bar{a},\bar{b}}(\bar{H}_{t})  \Big)
       \Big\}\Big(\hat{g}(\bar{H}_t)-g(\bar{H}_t)\Big) \Bigg]\\
    =& 
     \E \Bigg[ \E\Big[ \Big\{
   \frac{\diff}{\diff r} \omega_t^{\bar{a},\bar{b}}(\bar{H}_t; \pi_j^{\bar{a}}+r(\hat{\pi}_j^{\bar{a}}-\pi_j^{\bar{a}}))\; \Big| \;_{r=0}\Big( \gamma_t^{\bar{a},\bar{b}}(\bar{Z}_{t+\tau})-\mu_t^{\bar{a},\bar{b}}(\bar{H}_{t})  \Big)
       \Big\}\Big(\hat{g}(\bar{H}_t)-g(\bar{H}_t)\Big) \; \Big| \; \bar{H}_t \Big]\Bigg]\\
   =& 
     \E \Bigg[ \Big\{
   \frac{\diff}{\diff r} \omega_t^{\bar{a},\bar{b}}(\bar{H}_t; \pi_j^{\bar{a}}+r(\hat{\pi}_j^{\bar{a}}-\pi_j^{\bar{a}}))\; \Big| \;_{r=0}\Big( \underbrace{\E\Big[ \gamma_t^{\bar{a},\bar{b}}(\bar{Z}_{t+\tau}) \; \Big| \; \bar{H}_t \Big]}_{=\mu_t^{\bar{a},\bar{b}}(\bar{H}_t)}-\mu_t^{\bar{a},\bar{b}}(\bar{H}_{t})  \Big)
       \Big\}\Big(\hat{g}(\bar{H}_t)-g(\bar{H}_t)\Big)\Bigg]\label{eq:expectation_gamma1_cate}\\
    =& 0,
\end{align}
where \Eqref{eq:rho_zero1_cate} follows from Lemma~\ref{lemma:orthogonality_rho_cate}, in \Eqref{eq:gamma_zero1_cate} from Lemma~\ref{lemma:orthogonality_gamma}, and in \Eqref{eq:expectation_gamma1_cate} follows from Lemma~\ref{lemma:expectation_gamma_cate}.

The pathwise derivative of $D_g \mathcal{L}(g;\eta^{\bar{a},\bar{b}})[\hat{g}-g]$ with respect to the nuisances $\mu_j^{\bar{a}}$ for $j=t,\ldots,t+\tau$ is given by
\begin{align}
    &D_{\mu_j^{\bar{a}}} D_g \mathcal{L}(g;\eta^{\bar{a},\bar{b}})[\hat{g}-g, \hat{\mu}_j^{\bar{a}}-\mu_j^{\bar{a}}]\\
     =& \frac{\diff}{\diff r} D_g \mathcal{L}\Big(g; \{\pi_j^{\bar{a}}, \pi_j^{\bar{b}},\omega_j^{\bar{a}},\omega_j^{\bar{b}},\mu_j^{\bar{b}}\}_{j=t}^{t+\tau}\cup \{\mu_0^{\bar{a}}, \ldots, \mu_j^{\bar{a}}+r(\hat{\mu}_j^{\bar{a}}-\mu_j^{\bar{a}}),\ldots, \mu_{t+\tau}^{\bar{a}}\}\Big)[\hat{g}-g]\; \Big| \;_{r=0}\\
     \propto & 
     \frac{\diff}{\diff r} 
     \E \Bigg[ \Big\{ \rho_t^{\bar{a},\bar{b}}(\bar{Z}_{t+\tau})  \Big( \mu_t^{\bar{a},\bar{b}}(\bar{H}_{t}; \mu_j^{\bar{a}}+r(\hat{\mu}_j^{\bar{a}}-\mu_j^{\bar{a}})) - g(\bar{H}_t)\Big) \\
     &+ \omega_t^{\bar{a},\bar{b}}(\bar{H}_t)\Big( \gamma_t^{\bar{a},\bar{b}}(\bar{Z}_{t+\tau}; \mu_j^{\bar{a}}+r(\hat{\mu}_j^{\bar{a}}-\mu_j^{\bar{a}}))-\mu_t^{\bar{a},\bar{b}}(\bar{H}_{t}; \mu_j^{\bar{a}}+r(\hat{\mu}_j^{\bar{a}}-\mu_j^{\bar{a}}))  \Big)\Big\}\Big(\hat{g}(\bar{H}_t)-g(\bar{H}_t)\Big) \Bigg] \Bigg|_{r=0}\\
     = & 
     \E \Bigg[ \Big\{ \rho_t^{\bar{a},\bar{b}}(\bar{Z}_{t+\tau})   \frac{\diff}{\diff r} \mu_t^{\bar{a},\bar{b}}(\bar{H}_{t}; \mu_j^{\bar{a}}+r(\hat{\mu}_j^{\bar{a}}-\mu_j^{\bar{a}}))\; \Big| \;_{r=0}  \\
     &+ \omega_t^{\bar{a},\bar{b}}(\bar{H}_t)\Big(  
     \underbrace{\frac{\diff}{\diff r} \gamma_t^{\bar{a},\bar{b}}(\bar{Z}_{t+\tau}; \mu_j^{\bar{a}}+r(\hat{\mu}_j^{\bar{a}}-\mu_j^{\bar{a}}))\; \Big| \;_{r=0}}_{=0}- \frac{\diff}{\diff r} \mu_t^{\bar{a},\bar{b}}(\bar{H}_{t}; \mu_j^{\bar{a}}+r(\hat{\mu}_j^{\bar{a}}-\mu_j^{\bar{a}}))\; \Big| \;_{r=0}  \Big)\Big\}\Big(\hat{g}(\bar{H}_t)-g(\bar{H}_t)\Big) \Bigg]\label{eq:gamma_zero2_cate}\\
     =& 
     \E \Bigg[ \Big\{ \rho_t^{\bar{a},\bar{b}}(\bar{Z}_{t+\tau})   \frac{\diff}{\diff r} \mu_t^{\bar{a},\bar{b}}(\bar{H}_{t}; \mu_j^{\bar{a}}+r(\hat{\mu}_j^{\bar{a}}-\mu_j^{\bar{a}}))\; \Big| \;_{r=0}  \\
     &- \omega_t^{\bar{a},\bar{b}}(\bar{H}_t)\Big(  \frac{\diff}{\diff r} \mu_t^{\bar{a},\bar{b}}(\bar{H}_{t}; \mu_j^{\bar{a}}+r(\hat{\mu}_j^{\bar{a}}-\mu_j^{\bar{a}}))\; \Big| \;_{r=0}  \Big)\Big\}\Big(\hat{g}(\bar{H}_t)-g(\bar{H}_t)\Big) \Bigg]\\
     =& 
     \E \Bigg[\E\Big[ \Big\{ \rho_t^{\bar{a},\bar{b}}(\bar{Z}_{t+\tau})   \frac{\diff}{\diff r} \mu_t^{\bar{a},\bar{b}}(\bar{H}_{t}; \mu_j^{\bar{a}}+r(\hat{\mu}_j^{\bar{a}}-\mu_j^{\bar{a}}))\; \Big| \;_{r=0}  \\
     &- \omega_t^{\bar{a},\bar{b}}(\bar{H}_t)  \frac{\diff}{\diff r} \mu_t^{\bar{a},\bar{b}}(\bar{H}_{t}; \mu_j^{\bar{a}}+r(\hat{\mu}_j^{\bar{a}}-\mu_j^{\bar{a}}))\; \Big| \;_{r=0} \Big\}\Big(\hat{g}(\bar{H}_t)-g(\bar{H}_t)\Big) \; \Big| \; \bar{H}_t \Big]\Bigg]\\
     =& 
     \E \Bigg[\underbrace{\E\Big[ \Big\{ \rho_t^{\bar{a},\bar{b}}(\bar{Z}_{t+\tau})  \; \Big| \; \bar{H}_t \Big]}_{=\omega_t^{\bar{a},\bar{b}}(\bar{H}_t)}   \frac{\diff}{\diff r} \mu_t^{\bar{a},\bar{b}}(\bar{H}_{t}; \mu_j^{\bar{a}}+r(\hat{\mu}_j^{\bar{a}}-\mu_j^{\bar{a}}))\; \Big| \;_{r=0}  \label{eq:expectation_rho2_cate}\\
     &- \omega_t^{\bar{a},\bar{b}}(\bar{H}_t)  \frac{\diff}{\diff r} \mu_t^{\bar{a},\bar{b}}(\bar{H}_{t}; \mu_j^{\bar{a}}+r(\hat{\mu}_j^{\bar{a}}-\mu_j^{\bar{a}}))\; \Big| \;_{r=0} \Big\}\Big(\hat{g}(\bar{H}_t)-g(\bar{H}_t)\Big) \Bigg]\\
     =& 0,
\end{align}
where \Eqref{eq:gamma_zero2_cate} follows from Lemma~\ref{lemma:orthogonality_gamma_cate}, and \Eqref{eq:expectation_rho2_cate} from Lemma~\ref{lemma:expectation_rho_cate}.

Finally, the pathwise derivative of $D_g \mathcal{L}(g;\eta^{\bar{a},\bar{b}})[\hat{g}-g]$ with respect to the nuisances $\omega_j^{\bar{a}}$ for $j=t,\ldots,t+\tau$ is given by
\begin{align}
    &D_{\omega_j^{\bar{a}}} D_g \mathcal{L}(g;\eta^{\bar{a},\bar{b}})[\hat{g}-g, \hat{\omega}_j^{\bar{a}}-\omega_j^{\bar{a}}]\\
     =& \frac{\diff}{\diff r} D_g \mathcal{L}\Big(g; \{\pi_j^{\bar{a}},\pi_j^{\bar{b}}, \mu_j^{\bar{a}},\mu_j^{\bar{b}},\omega_j^{\bar{b}}\}_{j=t}^{t+\tau}\cup \{\omega_0^{\bar{a}}, \ldots, \omega_j^{\bar{a}}+r(\hat{\omega}_j^{\bar{a}}-\omega_j^{\bar{a}}),\ldots, \omega_{t+\tau}^{\bar{a}}\}\Big)[\hat{g}-g]\; \Big| \;_{r=0}\\
     \propto & 
     \frac{\diff}{\diff r} 
     \E \Bigg[ \Big\{ \rho_t^{\bar{a},\bar{b}}(\bar{Z}_{t+\tau}; \omega_j^{\bar{a}}+r(\hat{\omega}_j^{\bar{a}}-\omega_j^{\bar{a}}))  \Big( \mu_t^{\bar{a},\bar{b}}(\bar{H}_{t}) - g(\bar{H}_t)\Big) \\
     &+ \omega_t^{\bar{a},\bar{b}}(\bar{H}_t; \omega_j^{\bar{a}}+r(\hat{\omega}_j^{\bar{a}}-\omega_j^{\bar{a}}))\Big( \gamma_t^{\bar{a},\bar{b}}(\bar{Z}_{t+\tau})-\mu_t^{\bar{a},\bar{b}}(\bar{H}_{t})  \Big)\Big\}\Big(\hat{g}(\bar{H}_t)-g(\bar{H}_t)\Big) \Bigg] \Bigg|_{r=0}\\
     =& 
     \E \Bigg[ \Big\{ \underbrace{\frac{\diff}{\diff r} \rho_t^{\bar{a},\bar{b}}(\bar{Z}_{t+\tau}; \omega_j^{\bar{a}}+r(\hat{\omega}_j^{\bar{a}}-\omega_j^{\bar{a}}))\; \Big| \;_{r=0} }_{=0} \Big( \mu_t^{\bar{a},\bar{b}}(\bar{H}_{t}) - g(\bar{H}_t)\Big)\label{eq:rho_zero3_cate} \\
     &+ \frac{\diff}{\diff r} \omega_t^{\bar{a},\bar{b}}(\bar{H}_t; \omega_j^{\bar{a}}+r(\hat{\omega}_j^{\bar{a}}-\omega_j^{\bar{a}}))\; \Big| \;_{r=0}\Big( \gamma_t^{\bar{a},\bar{b}}(\bar{Z}_{t+\tau})-\mu_t^{\bar{a},\bar{b}}(\bar{H}_{t})  \Big)\Big\}\Big(\hat{g}(\bar{H}_t)-g(\bar{H}_t)\Big) \Bigg]\\
    =& \E \Bigg[ \frac{\diff}{\diff r} \omega_t^{\bar{a},\bar{b}}(\bar{H}_t; \omega_j^{\bar{a}}+r(\hat{\omega}_j^{\bar{a}}-\omega_j^{\bar{a}}))\; \Big| \;_{r=0}\Big( \gamma_t^{\bar{a},\bar{b}}(\bar{Z}_{t+\tau})-\mu_t^{\bar{a},\bar{b}}(\bar{H}_{t})  \Big)\Big(\hat{g}(\bar{H}_t)-g(\bar{H}_t)\Big) \Bigg]\\
    =& \E \Bigg[\E \Big[ \frac{\diff}{\diff r} \omega_t^{\bar{a},\bar{b}}(\bar{H}_t; \omega_j^{\bar{a}}+r(\hat{\omega}_j^{\bar{a}}-\omega_j^{\bar{a}}))\; \Big| \;_{r=0}\Big( \gamma_t^{\bar{a},\bar{b}}(\bar{Z}_{t+\tau})-\mu_t^{\bar{a},\bar{b}}(\bar{H}_{t})  \Big)\Big(\hat{g}(\bar{H}_t)-g(\bar{H}_t)\Big) \; \Big| \; \bar{H}_t \Big]\Bigg]\\
    =& \E \Bigg[ \frac{\diff}{\diff r} \omega_t^{\bar{a},\bar{b}}(\bar{H}_t; \omega_j^{\bar{a}}+r(\hat{\omega}_j^{\bar{a}}-\omega_j^{\bar{a}}))\; \Big| \;_{r=0}\Big( \underbrace{\E\Big[ \gamma_t^{\bar{a},\bar{b}}(\bar{Z}_{t+\tau})\; \Big| \; \bar{H}_t \Big]}_{=\mu_t^{\bar{a},\bar{b}}(\bar{H}_{t})}-\mu_t^{\bar{a},\bar{b}}(\bar{H}_{t})  \Big)\Big(\hat{g}(\bar{H}_t)-g(\bar{H}_t)\Big) \Bigg]\label{eq:expectation_gamma3_cate}\\
    =0,
\end{align}
where \Eqref{eq:rho_zero3_cate} follows from Lemma~\ref{lemma:orthogonality_rho_cate}, and \Eqref{eq:expectation_gamma3_cate} from Lemma~\ref{lemma:expectation_gamma_cate}.

\end{proof}
\clearpage
\subsection{Generalizing the R-learner}\label{appendix:r_learner}

\textbf{Remark:}~\emph{For a single-step-ahead prediction $\tau=0$ (i.e., when there is \textbf{no} time-varying confounding as in the static setting), the R-learner has the same overlap weights as our \textbf{WO}-learner for CATE.}

\begin{proof}
    We show how our weighted population risk function reduces for $\tau=0$ and leverage previous findings on the identity of the R-learner \citep{Nie.2021}. 
    
    Notice that under our identifiability assumptions, for a single-step ahead prediction $\tau=0$, conditioning on the observed history (i.e., a backdoor-adjustment) is sufficient to adjust for all confounders, as \emph{there is no time-varying confounding}. 
    
    Hence, for $\tau=0$, we can treat the observed history $\bar{H}_t$ as a fixed set of covariates (typically denoted as $X$), the treatment variable $A_t \in \{0,1\}$ as well as the intervention $\bar{a}=a_t \in \{0,1\}$ as a binary treatment (denoted as $A$ and $a$, respectively), and the outcome $Y_t$ as the instantaneous outcome (denoted as $Y$). Finally, $\bar{Z}_t$ summarizes all variables $(\bar{H}_t, A_t, Y_t)$, which corresponds to $(X,A,Y)$ in the static setting (see Figure~\ref{fig:r_learner}).

    Let $\tau=0$. Then, the pseudo-outcomes and weights simplify as
    \begin{align}
    &\rho_t^{\bar{a}}(\bar{Z}_{t})\\
    =& \prod_{j=t}^{t+\tau}\pi_j^{\bar{a}}( \bar{H}_{j}) 
    +\sum_{j=t}^{t+\tau} 
    \Big(\mathbbm{1}_{\{a_j = A_j\}} - \pi_j^{\bar{a}}( \bar{H}_{j}) \Big)
     \omega_{j+1}^{\bar{a}}(\bar{H}_j)  \prod_{t\leq k < j} \pi_k^{\bar{a}}( \bar{H}_{k})\\
     =&\pi_t^{\bar{a}}( \bar{H}_{t}) 
    +
    \underbrace{\Big(\mathbbm{1}_{\{a_t = A_t\}} - \pi_t^{\bar{a}}( \bar{H}_{t}) \Big)
     \omega_{t+1}^{\bar{a}}(\bar{H}_t)  \underbrace{\prod_{t\leq k < t} \pi_k^{\bar{a}}( \bar{H}_{k})}_{=0}}_{=0}\\
     =& \pi_t^{\bar{a}}(\bar{H}_t),
    \end{align}
    and 
    \begin{align}
        &\omega_t^{\bar{a}}(\bar{H}_t)        
        = \E \Big[ \pi_t^{\bar{a}}(\bar{H}_t)\; \Big| \; \bar{H}_t\Big]
        = \pi_t^{\bar{a}}(\bar{H}_t),
    \end{align}
    such that
    \begin{align}
        &\rho_t^{\bar{a},\bar{b}}(\bar{Z}_t)\\
        =& \rho_t^{\bar{a}}(\bar{H}_t)\omega_t^{\bar{b}}(\bar{H}_t) + \rho_t^{\bar{b}}(\bar{H}_t)\omega_t^{\bar{a}}(\bar{H}_t) - \omega_t^{\bar{a}}(\bar{H}_t)\omega_t^{\bar{b}}(\bar{H}_t)\\
        =& \pi_t^{\bar{a}}(\bar{H}_t)\pi_t^{\bar{b}}(\bar{H}_t)+\pi_t^{\bar{a}}(\bar{H}_t)\pi_t^{\bar{b}}(\bar{H}_t) - \omega_t^{\bar{a}}(\bar{H}_t)\omega_t^{\bar{b}}(\bar{H}_t)\\
        =& \omega_t^{\bar{a}}(\bar{H}_t)\omega_t^{\bar{b}}(\bar{H}_t)\\
        =& \pi_t^{\bar{a}}(\bar{H}_t) \pi_t^{\bar{b}}(\bar{H}_t)\\
        =& \pi_t^{\bar{a}}(\bar{H}_t) (1-\pi_t^{\bar{a}}(\bar{H}_t) )
    \end{align}
    and finally
    \begin{align}
        &\xi_t^{\bar{a},\bar{b}}(\bar{Z}_{t}) \\
    =& \mu_t^{\bar{a},\bar{b}}(\bar{H}_t)+ \frac{\omega_t^{\bar{a},\bar{b}}(\bar{H}_t)}{\rho_t^{\bar{a},\bar{b}}(\bar{Z}_{t})}\Big( \gamma_t^{\bar{a},\bar{b}}(\bar{Z}_{t}) - \mu_t^{\bar{a},\bar{b}}(\bar{H}_t)\Big)\\
    =& \mu_t^{\bar{a},\bar{b}}(\bar{H}_t)+ \frac{\pi_t^{\bar{a}}(\bar{H}_t) (1-\pi_t^{\bar{a}}(\bar{H}_t) )}{\pi_t^{\bar{a}}(\bar{H}_t) (1-\pi_t^{\bar{a}}(\bar{H}_t) )}\Big( \gamma_t^{\bar{a},\bar{b}}(\bar{Z}_{t}) - \mu_t^{\bar{a},\bar{b}}(\bar{H}_t)\Big)\\
    =& \gamma_t^{\bar{a},\bar{b}}(\bar{Z}_t)\\
    =& \gamma_t^{\bar{a},1-\bar{a}}(\bar{Z}_t),
    \end{align}
    where $\gamma_t^{\bar{a}, 1-\bar{a}}(\bar{Z}_t)$ simplify to the DR pseudo-outcomes for CATE in the static setting \citep{Curth.2021, Frauen.2025}. As shown by \citet{Morzywolek.2023} and highlighted by other works \citep{Chernozhukov.2024, Fisher.2024}, the R-learner is an \emph{overlap-weighted DR learner} and, hence, minimizes the loss
    \begin{align}\label{eq:r_learner}
        \mathcal{L}(g;\eta^{a})= \frac{1}{\E\Big[\pi^{{a}}(X)(1-\pi^{{a}}(X))\Big]}\E\Big[ \pi^{{a}}(X)(1-\pi^{{a}}(X)) 
        \Big( \gamma^{{a},1-{a}}(X,A,Y) - g \Big)^2 \Big].
    \end{align}
    Since for $\tau=0$, we only need one set of nuisances for CATE, i.e., $\eta^{\bar{a}}=\eta^{\bar{b}}=\eta^{\bar{a},\bar{b}}$, and it follows that \Eqref{eq:r_learner} exactly mirrors
    \begin{align}
        \mathcal{L}(g;\eta^{\bar{a}})
        = \frac{1}{\E\Big[\pi_t^{\bar{a}}(\bar{H}_t)(1-\pi_t^{\bar{a}}(\bar{H}_t))\Big]}\E\Big[ \pi_t^{\bar{a}}(\bar{H}_t)(1-\pi_t^{\bar{a}}(\bar{H}_t)) 
        \Big( \gamma_t^{\bar{a},1-\bar{a}}(\bar{Z}_t) - g \Big)^2 \Big]
    \end{align}
    in our time-varying notation.
\end{proof}

\begin{figure}[h!]
\vspace{-0cm}
  \centering
  \includegraphics[width=0.6\textwidth, trim=7.5cm 18cm 7cm 8cm, clip]{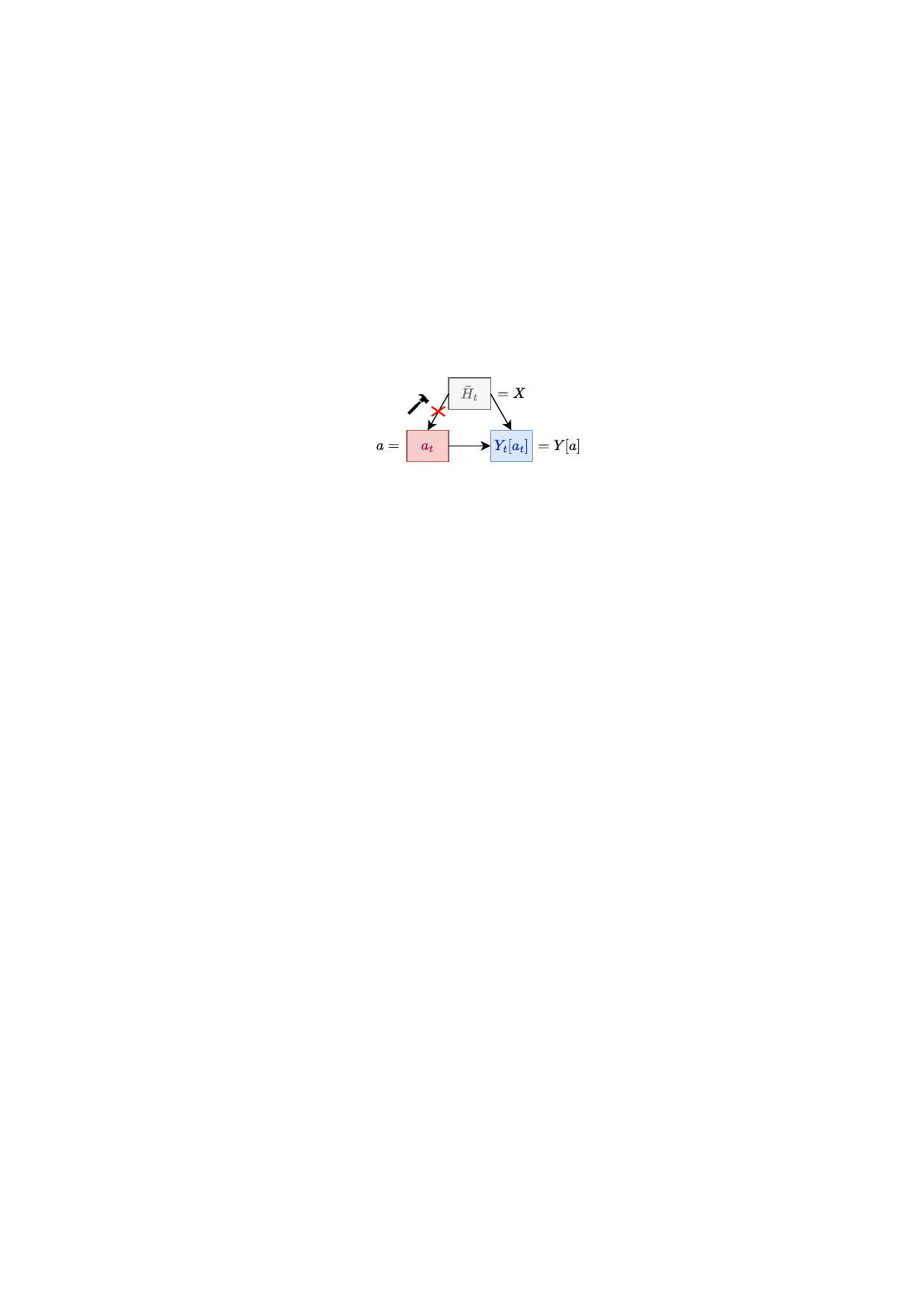}
\vspace{-1cm}
\caption{\textbf{One-step ahead prediction.} For a one-step ahead prediction $\tau=0$, there is no time-varying confounding. Hence, we can treat the observed history $\bar{H}_t$ as a fixed set of covariates $X$, and the single intervention and instantaneous outcomes as in the static setting. Our \textbf{WO}-learner for CATE then simplifies to the R-learner in the static setting.}
\vspace{-0.5cm}
\label{fig:r_learner}
\end{figure}
\clearpage

\subsection{\rebuttal{Uniformly bounded variance in low-overlap regimes}}\label{appendix:overlap_weights}

\rebuttal{In the following, we show how our overlap-weights stabilize the variance in low-overlap regimes. For this, we simplify notation and focus on the $\tau=0$ case. Longer prediction horizons $\tau>0$ follow \textbf{completely analogously}.}

\rebuttal{For our proof, we first show that the DR pseudo-outcome has conditional variance of order $1/\omega_t^{1,0}(\bar H_t)$, while our \textbf{WO}-learner uses pseudo-outcomes with \emph{uniformly bounded} variance, that are \textbf{independent} of overlap. Thereby, we demonstrate how our overlap weights stabilize estimation in low-overlap regimes.}

\rebuttal{Throughout, we assume bounded conditional variances of the potential outcomes:
\begin{align}\label{eq:variance_bounds}
0 < \sigma_{\min}^2 \;\le\; \mathrm{Var}(Y_t[a_t] \mid \bar{H}_t) \;\le\;
\sigma_{\max}^2 < \infty,\qquad a_t\in\{0,1\}.
\end{align}
}

\subsubsection{\rebuttal{Variance of the DR Pseudo-Outcome}}

\rebuttal{Recall that the DR pseudo-outcome for CATE where $\tau=0$ is
\begin{equation}
\label{eq:gamma_static}
\gamma_t^{1,0}(\bar{Z}_t)
=
\mu_t^1(\bar{H}_t)-\mu_t^0(\bar{H}_t)
+ \frac{A_t}{\pi_t^1(\bar{H}_t)}\bigl(Y_t-\mu_t^1(\bar{H}_t)\bigr)
- \frac{1-A_t}{\pi_t^0(\bar{H}_t)}\bigl(Y_t-\mu_t^0(\bar{H}_t)\bigr).
\end{equation}}

\rebuttal{\begin{lemma}[DR variance inflation under low overlap]
\label{lem:dr_variance_blowup}
Let the potential outcome satisfy bounded conditional variance as in \Eqref{eq:variance_bounds}. Then, the conditional variance satisfies
\[
\sigma_{\min}^2 \cdot \frac{1}{\omega_t^{1,0}(\bar{H}_t)}
\;\le\;
\mathrm{Var}\!\bigl(\gamma_t^{1,0}(\bar{Z}_t)\mid \bar{H}_t\bigr)
\;\le\;
\sigma_{\max}^2 \cdot \frac{1}{\omega_t^{1,0}(\bar{H}_t)}
\qquad\text{a.s.}
\]
In particular, as overlap $\omega_t{1,0}(\bar{H}_t)\to 0$,
the \textbf{conditional variance of the DR pseudo-outcome diverges} at rate
$1/\omega_t^{1,0}(\bar{H}_t)$.
\end{lemma}
}

\begin{proof}
\rebuttal{First, using \Eqref{eq:gamma_static} and conditioning on $\bar{H}_t$,
\begin{align}
&\mathbb{E}[\gamma_t^{1,0}(\bar{Z}_t)\mid \bar{H}_t]\\
=&
\mu_t^1(\bar{H}_t)-\mu_t^0(\bar{H}_t)
+ \mathbb{E}\!\left[\frac{A_t}{\pi_t^1(\bar{H}_t)}(Y_t-\mu_t^1(\bar{H}_t))\mid \bar{H}_t\right]
- \mathbb{E}\!\left[\frac{1-A_t}{\pi_t^0(\bar{H}_t)}(Y_t-\mu_t^0(\bar{H}_t))\mid \bar{H}_t\right].
\end{align}
The two expectations vanish because 
\begin{align}
\mathbb{E}[Y_t-\mu_t^1(\bar{H}_t)\mid \bar{H}_t,A_t=1]=0    
\end{align}
 and 
\begin{align}
\mathbb{E}[Y_t-\mu_t^0(\bar{H}_t)\mid \bar{H}_t,A_t=0]=0,
\end{align}
such that
\begin{align}
\mathbb{E}[\gamma(\bar{Z}_t)\mid \bar{H}_t]=\mu_t^{1,0}(\bar{H}_t).    
\end{align}
}

\rebuttal{Now, let the correction term be
\begin{align}
U_t := \frac{A_t}{\pi_t^1(\bar{H}_t)}(Y_t-\mu_t^1(\bar{H}_t))
    -\frac{1-A_t}{\pi_t^0(\bar{H}_t)}(Y_t-\mu_t^0(\bar{H}_t)),    
\end{align}
so that 
\begin{align}
\gamma_t^{1,0}(\bar{Z}_t)=\mu_t^{1,0}(\bar{H}_t)+U_t    
\end{align}
and
\begin{align}
\mathrm{Var}(\gamma_t^{1,0}(\bar{Z}_t)\mid \bar{H}_t)=\mathrm{Var}(U_t\mid \bar{H}_t).    
\end{align}
Next, we condition further on $A_t$, which yields
\begin{align}
\mathrm{Var}(U_t\mid \bar{H}_t)
= \mathbb{E}[\,\mathrm{Var}(U_t\mid \bar{H}_t,A_t)\mid \bar{H}_t].    
\end{align}
If $A_t=1$, then 
\begin{align}
U_t=(Y_t-\mu_t^1(\bar{H}_t))/\pi_t^1(\bar{H}_t)    
\end{align}
and hence
\begin{align}
\mathrm{Var}(U_t\mid \bar{H}_t,A_t=1)=\sigma_1^2(\bar{H}_t)/\pi_t^1(\bar{H}_t)^2.    
\end{align}
If $A_t=0$, then
\begin{align}
\mathrm{Var}(U_t\mid \bar{H}_t,A_t=0)=\sigma_0^2(\bar{H}_t)/\pi_t^0(\bar{H}_t)^2.    
\end{align}
Using $\mathbb{P}(A_t=1\mid \bar{H}_t)=\pi_t^1(\bar{H}_t)$,
\begin{align}
\mathrm{Var}(U_t\mid \bar{H}_t)
=
\frac{\sigma_1^2(\bar{H}_t)}{\pi_t^1(\bar{H}_t)}
+
\frac{\sigma_0^2(\bar{H}_t)}{\pi_t^0(\bar{H}_t)}.
    \end{align}
Finally,
\begin{align}
\frac{1}{\pi_t^1(\bar{H}_t)}+\frac{1}{\pi_t^0(\bar{H}_t)}
= \frac{\pi_t^1(\bar{H}_t)+\pi_t^0(\bar{H}_t)}{\pi_t^1(\bar{H}_t)\pi_t^0(\bar{H}_t)}
= \frac{1}{\pi_t^1(\bar{H}_t)\pi_t^0(\bar{H}_t)}
= \frac{1}{\omega(\bar{H}_t)},
\end{align}
and applying the bounds \Eqref{eq:variance_bounds} yields the claim.}
\end{proof}

\subsubsection{\rebuttal{Stabilized variance with our overlap-weights}}

\rebuttal{Recall that for $\tau = 0$, our \textbf{WO}-learner minimizes by Theorem~\ref{theorem:oracle_main} the weighted oracle risk
\begin{align}
\label{eq:wo_risk}
\mathcal{L}(g; \eta^{1,0})
=
\frac{1}{\mathbb{E}[\omega_t^{1,0}(\bar{H}_t)]}
\mathbb{E}\bigl[
  \omega_t^{1,0}(\bar{H}_t)\,(\gamma_t(\bar{Z}_t)-g(\bar{H}_t))^2
\bigr].
\end{align}
In the following, we define the transformed pseudo-outcome and prediction as
\begin{align}
\tilde{\gamma}_t^{1,0}(\bar{Z}_t) := \sqrt{\omega_t^{1,0}(\bar{H}_t)}\,\gamma_t^{1,0}(\bar{Z}_t),\qquad
\tilde{g}(\bar{H}_t) := \sqrt{\omega_t^{1,0}(\bar{H}_t)}\,g(\bar{H}_t).    
\end{align}}

\rebuttal{\begin{lemma}[Equivalent transformed risk]
\label{lem:wo_equiv}
For any $g$,
\begin{align}
\mathcal{L}(g; \eta^{1,0})
=
\frac{1}{\mathbb{E}[\omega_t^{1,0}(\bar{H}_t)]}
\mathbb{E}\!\left[
  \bigl(\tilde{\gamma}_t^{1,0}(\bar{Z}_t)-\tilde{g}(\bar{H}_t)\bigr)^2
\right].
\end{align}
\end{lemma}
}

\begin{proof}
\rebuttal{We simply factor $\sqrt{\omega_t^{1,0}(\bar{H}_t)}$ inside the square:
\begin{align}
\omega_t^{1,0}(\bar{H}_t)\,\bigl(\gamma_t^{1,0}(\bar{Z}_t)-g(\bar{H}_t)\bigr)^2
=
\bigl(\sqrt{\omega_t^{1,0}(\bar{H}_t)}\,\gamma_t^{1,0}(\bar{Z}_t)
      -\sqrt{\omega_t^{1,0}(\bar{H}_t)}\,g(\bar{H}_t)\bigr)^2.
\end{align}
}
\end{proof}

\rebuttal{In the following, we analyze the conditional variance of $\tilde{\gamma}_t^{1,0}(\bar{Z}_t)$.}

\rebuttal{\begin{lemma}[Bounded variance using overlap weights]
\label{lem:tilde_variance_bounded}
Under the assumptions of Lemma~\ref{lem:dr_variance_blowup},
\begin{align}
\mathrm{Var}\!\left(\tilde\gamma_t^{1,0}(\bar{Z}_t)\mid \bar{H}_t\right)
\in[\sigma_{\min}^2,\sigma_{\max}^2]
\qquad\text{a.s.}
\end{align}
In particular, the conditional variance of $\tilde{\gamma}_t^{1,0}(\bar{Z}_t)$ is \textbf{uniformly
bounded} and \textbf{independent of $\omega_t^{1,0}(\bar{H}_t)$}.
\end{lemma}
}

\begin{proof}
\rebuttal{First, we again write $\gamma_t^{1,0}(\bar{Z}_t)=\mu_t^{1,0}(\bar{H}_t)+U_t$ with
$\mathbb{E}[U_t \mid \bar{H}_t]=0$. Lemma~\ref{lem:dr_variance_blowup} shows that
\begin{align}
\mathrm{Var}(U_t \mid \bar{H}_t)
\in\left[
  \sigma_{\min}^2\cdot\frac{1}{\omega_t^{1,0}(\bar{H}_t)},\;
  \sigma_{\max}^2\cdot\frac{1}{\omega_t^{1,0}(\bar{H}_t)}
\right].
\end{align}
Since $\tilde{\gamma}_t^{1,0}(\bar{Z}_t)=\sqrt{\omega_t^{1,0}(\bar{H}_t)}\gamma_t^{1,0}(\bar{Z}_t)
  =\tilde{\mu}_t^{1,0}(\bar{H}_t)+\tilde{U}_t$ with
$\tilde{\mu}_t^{1,0}=\sqrt{\omega_t^{1,0}(\bar{H}_t)}{\mu}_t^{1,0}$ and
$\tilde{U}_t=\sqrt{\omega_t^{1,0}(\bar{H}_t)}\, U_t$, it follows that
\begin{align}
\mathrm{Var}(\tilde{\gamma}_t^{1,0}(\bar{Z}_t)\mid \bar{H}_t)
=
\mathrm{Var}(\tilde{U_t}\mid \bar{H}_t)
=
\omega_t^{1,0}(\bar{H}_t)\,\mathrm{Var}(U_t\mid \bar{H}_t).    
\end{align}
Substituting the bounds yields
\begin{align}
\sigma_{\min}^2
\;\le\;
\mathrm{Var}(\tilde{\gamma}_t^{1,0}(\bar{Z}_t)\mid \bar{H}_t)
\;\le\;
\sigma_{\max}^2,
\end{align}
as claimed.}
\end{proof}

\clearpage

\section{Details on the data-generating processes}\label{sec:dgp}

\subsection{Synthetic data generation}\label{sec:dgp_synth}

We now describe the data-generating processes for the synthetic datasets 
$\mathcal{D}^\gamma$ \emph{(low-overlap regime)}, $\mathcal{D}^\pi$ \emph{(complex propensity)}, $\mathcal{D}^\mu$ \emph{(complex response function)}, and $\mathcal{D}^N$ \emph{(low-sample setting)}. All of them have the following general structure:

As in \citet{Frauen.2025}, for each $*\in \{\gamma, \pi, \mu, N\}$, we first simulate an initial confounder $X_0\sim \mathcal{N}(0,1)$. Then, for time steps $t=1,\ldots,T^*$, we generate $d_x^*$-dimensional time-varying confounders via
\begin{align}
    X_t = (X_{t,1},\ldots,X_{t,d_x^*}) = f_x^* (Y_{t-1}, A_{t-1}, \bar{H}_{t-1})+\varepsilon_x
\end{align}
and time-varying treatments via
\begin{align}
    A_t \sim \sigma \Big( f_a^*(\bar{H}_t) \Big),
\end{align}
where $\sigma(\cdot)$ is the sigmoid function. The outcomes are then simulated via
\begin{align}
    Y_t = f_y^*(A_t, \bar{H}_t) + \epsilon_y.
\end{align}
with $\varepsilon_y \sim \mathcal{N}(0, 0.3^2)$. For each dataset, we simulate $n^*$ samples for training and $1000$ samples for testing. For the test set, we always generate the ground-truth CATE of a $\tau^*$-step \textbf{\emph{always treat}} against a $\tau^*$-step \textbf{\emph{never treat}} intervention.

We provide the specific configurations of $f_x^*,f_a^*, f_y^*$, $\tau^*$, $d_x^*$, $T^*$ and $n^*$ below:

\textbf{(1)}~\textbf{Low-overlap regime}~${\mathcal{D}^\gamma}$: For ${\mathcal{D}^\gamma}$, we set $\tau^\gamma=1$, $d_x^\gamma=1$, $T^\gamma=5$, and $n^\gamma=4000$. The covariates are generated via $f_x^\gamma(Y_{t-1}, A_{t-1}, \bar{H}_{t-1})=0.5 X_{t-1}$, the treatments via $f_a^\gamma(\bar{H}_t)=\gamma(0.5X_t + 0.5Y_{t-1}-0.5(A_{t-1}-0.5))$, where $\gamma$ controls the overlap strength, and the outcomes via $f_y^\gamma(A_t, \bar{H}_t)=0.5 \exp(-X_t^2) (A_t-0.5)$. In order to decrease the overlap, we vary the overlap parameter $\gamma\in\{0.5,1.0,1.5,\ldots,6.5\}$.

\textbf{(2)}~\textbf{Complex treatment}~${\mathcal{D}^\pi}$: For ${\mathcal{D}^\pi}$, we set $d_x^\pi=1$, $T^\pi=15$, and $n^\pi=4000$. The covariates are generated via $f_x^\pi(Y_{t-1}, A_{t-1}, \bar{H}_{t-1})=0.5 X_{t-1}$, the treatments via $f_a^\pi(\bar{H}_t)=\sin(0.5X_t + 0.5Y_{t-1}-0.5(A_{t-1}-0.5))$, and the outcomes via $f_y^\pi(A_t, \bar{H}_t)=0.5 \exp(-X_t^2) (A_t-0.5)$. In order to increase the complexity of the treatment propensity, we increase the prediction horizon $\tau^\pi\in\{1,3,5,7\}$.

\textbf{(3)}~\textbf{Complex response}~${\mathcal{D}^\mu}$: For ${\mathcal{D}^\mu}$, we set $\tau^\mu=1$, $T^\mu=15$, and $n^\mu=4000$. The covariates are generated via $f_x^\pi(Y_{t-1}, A_{t-1}, \bar{H}_{t-1})=0.5 X_{t-1}$, the treatments via $f_a^\pi(\bar{H}_t)=0.5\sum_{p=1}^{d_x^\mu}X_{t,p}/d_x^\mu + 0.5Y_{t-1}-0.5(A_{t-1}-0.5)$, and the outcomes via $f_y^\pi(A_t, \bar{H}_t)=\exp(0.5(A_t-0.5)\sum_{p=1}^{d_x^\mu}\cos(X_{t-1,p})\cos(\cos(X_{t-1,p}))/d_x^\mu)$. In order to increase the complexity of the response function, we increase the dimensionality $d_x$ of the time-varying confounders $X_t=(X_{t,1},\ldots,X_{t,d_x^\mu})$ via $d_x^\mu \in \{5,10,15,20,25,30,35\}$.

\textbf{(4)}~\textbf{Low-sample setting}~${\mathcal{D}^N}$: For ${\mathcal{D}^N}$, we set $\tau^N=1$, $d_x^N=5$, and $T^N=5$. The covariates are generated via $f_x^N(Y_{t-1}, A_{t-1}, \bar{H}_{t-1})=0.5 X_{t-1}$, the treatments via $f_a^N(\bar{H}_t)=3.5(0.5\sum_{p=1}^{d_x^\mu} X_{t,p}/d_x^N + 0.5Y_{t-1}-0.5(A_{t-1}-0.5))$, and the outcomes via $f_y^N(A_t, \bar{H}_t)=0.5 \exp(-(\sum_{p=1}^{d_x^N}\cos(X_{t,p})/d_x^N)^2) (A_t-0.5)$. We vary the sample size for training the nuisance functions and second-stage estimators via $n^N\in \{8000,7000,6000,5000,4000,3000,2000\}$.

\clearpage

\subsection{Semi-synthetic data generation}\label{sec:dgp_semisynth}
For our semi-synthetic experiments, we employ the MIMIC-III \citep{Wang.2020} extract based on the MIMIC-III dataset \citep{Johnson.2016}. Here, we use time-varying real-world covariates \emph{heart rate, red blood cell count, sodium, mean blood pressure, systemic vascular resistance, glucose, chloride urine, glascow coma scale total, hematocrit, positive end-expiratory pressure set,} and \emph{respiratory rate}. All measurements are aggregated at hourly levels. Further, we include \emph{gender} and \emph{age} as a static covariates. We summarize all covariates as $X_t = (X_{t,1},\ldots, X_{t,d_x})$. Then, we simulate treatments $A_t$ and outcomes $Y_t$ based on these covariates.

Our data generating process is designed to have time-varying confounding, has a complex propensity score, and a complex response function. Specifically, we simulate the treatments via 
\begin{align}
    A_t \sim \sigma\Big( f_a(\bar{H}_t) + \varepsilon_a \Big),
\end{align}
with $\varepsilon_a\sim\mathcal{N}(0, 0.2^2)$, $f_a(A_t,\bar{H}_t) = \sin(Y_{t-1})) - A_{t-1} \sum_{p=1}^{d_x}\sin(X_{t,p})/d_x$,
and the outcomes via
\begin{align}
    Y_t = f_y(A_t, \bar{H}_t) +\varepsilon_y
\end{align}
with $f_y(A_t,\bar{H}_t) = Y_{t-1} + 2(A_{t}-0.5)\exp\Big(2(A_{t}-0.5) \sin(Y_{t-1}) \sum_{p=1}^{d_x}\cos(X_{t,p})/(t d_x)\Big)$ and $\varepsilon_y \sim \mathcal{N}(0, 0.1^2)$.
We include trajectories of length $T=20$, and simulate $n_{\emph{train}}=1500$ samples for training. For evaluation of CATE, we again compare a $\tau$-step \emph{\textbf{always treat}} against a $\tau$-step \emph{\textbf{never treat}} treatment intervention sequence.
\clearpage

\clearpage

\section{Implementation details}\label{sec:implementation_details}
$\bullet$~\textbf{Implementation details:} We report implementation details for our transformer instantiation in Section~\ref{sec:experiments}. Here, we closely follow the setup by \citet{Frauen.2025} (see \textbf{Table~\ref{tab:implementation_details}}):
\begin{itemize}
    \item All nuisance functions and second-stage estimators can be written as regression models that take the history $\bar{H}_t$ as input and learn some $\delta$-step-ahead outcome $\tilde{Y}_{t+\delta}$ (e.g., for the \textbf{HA}-learner, $\tilde{Y}_{t+\delta}=Y_{t+\delta}$).
    \item Hence, we parametrize each regression model as $g_\theta(\bar{h}_t)=g_\theta^2(g_\theta^1(\bar{h}_t))$, where $g_\theta^1$ is a representation function (in our main experiments: a standard transformer), and $g_\theta^2$ a read-out function (a standard multi-layer perceptron).
    \item As in \citep{Frauen.2025}, we learn the propensity scores $\pi_{t+j}^{\bar{a}}$ in a joint model, whereas we learn the response functions $\mu_{t+j}^{\bar{a}}$ and the weight functions $\omega_{t+j}^{\bar{a}}$ in separate models.
    \item \emph{Representation function $g_\theta^1(\cdot)$:} For our main experiments in Section~\ref{sec:experiments}, we use an encoder transformer \citep{Vaswani.2017} with a single transformer block and a causal mask to avoid look-ahead bias, as well as non-trainable positional encodings. The transformer block has a self-attention mechanism with $d_{\text{att}}$ attention heads and a hidden state dimension $d_{\text{hid}}$, followed by a feed-forward network with hidden layer size $d_{\text{ff}}$. The self-attention mechanism and the feed-forward network use residual connections, followed by dropout layers with dropout probability $0.1$, and post-normalization for regularization.
    \item \emph{Read-out function $g_\theta^2(\cdot)$:} We use a simple multilayer-perceptron with one hidden layer of size $d_{\text{mlp}}$, ReLU nonlinearities, and either a linear (regression) or softmax (classification) output activation.
\end{itemize}
We summarize all parameterizations in \textbf{Table~\ref{tab:implementation_details}}. To ensure a fair comparison, all nuisance models and second-stage estimators share, where appropriate, the exact same architecture and parametrization.

$\bullet$~\textbf{Runtime:} For each transformer-based learner, training took approximately $1.5$ minutes with $n_{\text{train}}=4000$ samples and an AMD Ryzen 7 Pro CPU and 32GB of RAM. The runtime was comparable for our WO-learner and the existing meta-learners.

\begin{table}[h!]
    \centering
    \begin{adjustbox}{max width=\textwidth}
    \begin{tabular}{c|c|c|c|c|c|c|c|c} 
        \toprule
        \textbf{Estimator} & \textbf{Hyperparameter} 
        & Configuration & \textbf{HA} & \textbf{RA}
        & \textbf{IPW} & \textbf{DR} & \textbf{IVW} & \textbf{WO}~(\emph{ours})  \\
        \midrule
        
    \multirow{7}{*}{Second-stage function}
    & $d_{\text{att}}$
        & $3$ & & & & & \\ 
    & $d_{\text{hid}}$
        & $30$ &   &   &  &   &  \\ 
    & $d_{\text{ff}}$ 
        & $20$ & \cmark & \cmark & \cmark & \cmark & \cmark & \cmark \\ 
    & $d_{\text{mlp}}$
        & $20$ &   &   &  &   &  \\ 
    & Learning rate
        & $0.001$ &   &   &  &   &  \\ 
    & Number of epochs
        & $100$ &   &   &  &   &  \\ 
     & Batch size
        & 64  &  &  & 
        &  &  \\ 
        \midrule

    \multirow{7}{*}{Response functions}     
    & $d_{\text{att}}$
        & $3$ & & & & & \\ 
    & $d_{\text{hid}}$
        & $30$ &   &   &  &   &  \\ 
    & $d_{\text{ff}}$ 
        & $20$ & \xmark & \cmark & \xmark & \cmark & \cmark & \cmark \\ 
    & $d_{\text{mlp}}$
        & $20$ &   &   &  &   &  \\ 
    & Learning rate
        & $0.001$ &   &   &  &   &  \\ 
    & Number of epochs
        & $100$ &   &   &  &   &  \\ 
     & Batch size
        & 64  &  &  & 
        &  &  \\ 
        \midrule

    \multirow{7}{*}{Propensity score} 
    & $d_{\text{att}}$
        & $3$ & & & & & \\ 
    & $d_{\text{hid}}$
        & $30$ &   &   &  &   &  \\ 
    & $d_{\text{ff}}$ 
        & $20$ & \xmark & \xmark & \cmark & \cmark & \cmark & \cmark \\ 
    & $d_{\text{mlp}}$
        & $20$ &   &   &  &   &  \\ 
    & Learning rate
        & $0.001$ &   &   &  &   &  \\ 
    & Number of epochs
        & $100$ &   &   &  &   &  \\ 
     & Batch size
        & 64  &  &  & 
        &  &  \\ 
        \midrule

    \multirow{7}{*}{IV weight functions} 
    & $d_{\text{att}}$
        & $3$ & & & & & \\ 
    & $d_{\text{hid}}$
        & $30$ &   &   &  &   &  \\ 
    & $d_{\text{ff}}$ 
        & $20$ & \xmark & \xmark & \xmark & \xmark & \cmark & \xmark \\ 
    & $d_{\text{mlp}}$
        & $20$ &   &   &  &   &  \\ 
    & Learning rate
        & $0.001$ &   &   &  &   &  \\ 
    & Number of epochs
        & $100$ &   &   &  &   &  \\ 
     & Batch size
        & 64  &  &  & 
        &  &  \\ 
        \midrule

    \multirow{7}{*}{Overlap weight functions} 
    & $d_{\text{att}}$
        & $3$ & & & & & \\ 
    & $d_{\text{hid}}$
        & $30$ &   &   &  &   &  \\ 
    & $d_{\text{ff}}$ 
        & $20$ & \xmark & \xmark & \xmark & \xmark & \xmark  & \cmark \\ 
    & $d_{\text{mlp}}$
        & $20$ &   &   &  &   &  \\ 
    & Learning rate
        & $0.001$ &   &   &  &   &  \\ 
    & Number of epochs
        & $100$ &   &   &  &   &  \\ 
     & Batch size
        & 64  &  &  & 
        &  &  \\ 

        \bottomrule
    \end{tabular}
    \end{adjustbox}

    \caption{\textbf{Hyperparameters} of our transformer instantiations for the nuisance function estimators and second-stage estimators. To ensure a fair comparison, the nuisance functions for all meta-learners share the exact same parametrization.}
    \label{tab:implementation_details}
\end{table}

\clearpage
\clearpage

\section{\rebuttal{Sensitivity Analysis for Nuisance Misspecification}}
\label{appendix:sensitivity_mu}

\rebuttal{In Section~\ref{sec:wo_learner} we show that our population risk is Neyman-orthogonal with respect to the esitmated nuisance functions. This means that that errors in estimated nuisances should enter the final CATE estimator only at higher order. In the following, we empirically validate this property. Here, we conduct a controlled sensitivity analysis in which we corrupt the estimated response-function nuisance $\mu_t^{\bar{a},\bar{b}}$ by a fixed bias and measure how this corruption propagates to the final-stage estimate of CATE.}

\rebuttal{For this, we use our data generating process $\mathcal{D}^\gamma$ and fix $\gamma=3.5$ (for details, see Supplement~\ref{sec:dgp_synth}). We fist estimate the response-function nuisance $\widehat{\mu}_t^{\bar{a},\bar{b}}$, and then construct perturbed versions
\begin{align}
\widehat{\mu}_{t,\delta}^{\bar{a},\bar{b}}(\cdot)
    \;=\;
    \widehat{\mu}_t^{\bar{a},\bar{b}}(\cdot)
    \;+\; \delta,
\end{align}
where $\delta \in \{0.0, 0.1, 0.2, \dots, 1.0\}$ is a constant additive bias. Therein, we mimic systematic mis-specification of the response function. For each value of $\delta$, we retrain the final-stage CATE model using the corrupted nuisance while keeping all other nuisance components fixed. The results are reported in Figure~\ref{fig:sensitivity_mu}.}

\begin{figure}[h!]
    \centering
    \includegraphics[width=0.65\textwidth]{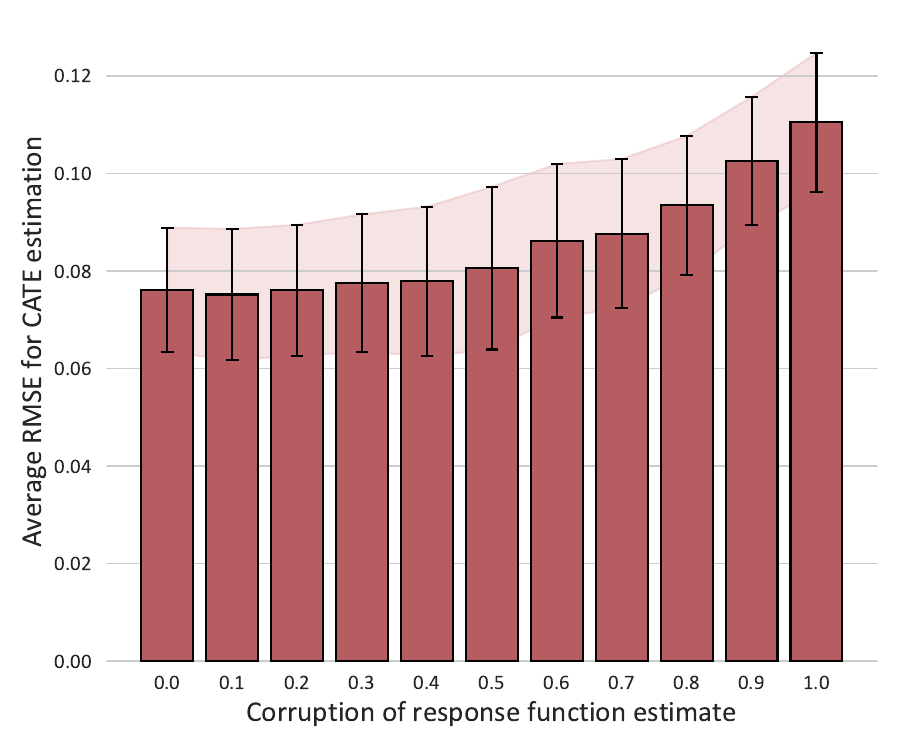}
    \caption{\rebuttal{Sensitivity of CATE estimation with our  \textbf{WO}-learner to corruption of the response-function nuisance. Reported are the average RMSE of the final CATE estimates against the injected bias.}}
    \label{fig:sensitivity_mu}
\end{figure}

\rebuttal{The results show that even strong corruption of the response-function nuisance leads to only relatively low errors in the final CATE estimates. The error grows slowly and approximately linearly. This is consistent with the \emph{higher-order} influence from our Neyman-orthogonality result (Theorem~\ref{theorem:orthogonality_main}): the population risk is insensitive to first-order perturbations of nuisances such as $\mu_t^{\bar{a},\bar{b}}$, and errors enter only through second-order terms. In other words, the final estimator remains stable even under sizeable misspecification of the response-function nuisance.}

\clearpage

\section{Real-world outcome prediction}\label{sec:rwd}


\rebuttal{In this section, we report the performance of our \textbf{WO}-learner on
\emph{factual} outcome prediction using real-world data. This experiment is \emph{not} designed to evaluate causal validity; instead, it serves as a standard \emph{sanity check} to confirm that the model components used in our meta-learner behave sensibly on real outcome trajectories.  Such auxiliary diagnostics are commonly included in longitudinal causal studies, even though they do \textbf{not} speak to counterfactual accuracy.}

In Table~\ref{tab:rwd}, we report the performance of the propensity-based meta-learners and the \textbf{HA} baseline performance. For this, we use the MIMIC-III dataset \citep{Johnson.2016, Wang.2020}. The outcome variable of interest is diastolic blood pressure, and we consider mechanical ventilation as the treatment variable. We further use $19$ time-varying covariates such as cholesterol, respiratory rate, heart rate, and sodium, and further include gender as a static covariate for predicting the factual outcome. All measurements are aggregated at hourly levels.


\rebuttal{We emphasize that \emph{factual outcome prediction is \textbf{not} the task
our method is designed for}, nor is it the target of any time-varying meta-learner that adjusts for causal, time-varying confounding. \textbf{Predicting observed outcomes requires no adjustment for future treatment sequences} and can be solved by a standard regression on the observed history. Accordingly, the \textbf{HA} baseline performs best, exactly as expected: for factual prediction, a simple history adjustment is the statistically optimal
approach.}

\rebuttal{The purpose of this experiment is different: to verify that the \textbf{WO}
learner, which is primarily designed to estimate \emph{counterfactual}
quantities, still produces stable and reasonable predictions when applied to
real-world outcome trajectories. It is simply a confirmation that the nuisance components, final stage estimates, and representation layers behave sensibly on real data.}

\rebuttal{Unlike other propensity-based meta-learners, which exhibit instability or variance inflation even in this predictive setting, the \textbf{WO}-learner
remains consistently well-behaved across all horizons. This is fully aligned with our theoretical analysis: the weighting mechanism stabilizes estimation in the presence of small propensities, which affects not only causal adjustment but also any learning task involving propensity-based pseudo-outcomes.}

\begin{table}[ht]
\centering
\resizebox{\textwidth}{!}{
\begin{tabular}{c|ccccc}
\toprule
Prediction horizon $\tau$ & \textbf{HA} & \textbf{IPW} & \textbf{DR} & \textbf{IVW} & \textbf{WO} (ours) \\
\midrule
$1$ & ${0.708 \pm 0.022}$ & $5.627 \pm 8.498$ & $1.601 \pm 1.038$ & $53.200 \pm 98.958$ & {$1.077 \pm 0.025$} \\
$2$ & ${0.774 \pm 0.023}$ & $3.419 \pm 4.767$ & $1.255 \pm 0.305$ & $37.408 \pm 36.641$ & {$1.079 \pm 0.022$} \\
$3$ & ${0.822 \pm 0.024}$ & $3.729 \pm 5.094$ & $3.978 \pm 5.208$ & $21.151 \pm 40.098$ & {$1.087 \pm 0.027$} \\
$4$ & ${0.869 \pm 0.024}$ & $1.426 \pm 0.674$ & $5.556 \pm 8.939$ & $22.128 \pm 32.211$ & {$1.099 \pm 0.006$} \\
\bottomrule
\end{tabular}
}
\caption{Reported are the RMSEs for factual outcome prediction. We emphasize that \emph{factual outcome prediction is \textbf{not} the task our \textbf{WO}-learner is tailored for}. Yet, different from the other propensity-based methods, it has very robust performance over all prediction horizons. In line with our theoretical considerations, we conclude that the \textbf{HA} learner has the best performance since regressing on the observed history -- a simple history-adjustment -- is sufficient for factual outcome prediction.}
\label{tab:rwd}
\end{table}

\rebuttal{This experiment is \textbf{not} intended as evidence of causal validity on real data. \emph{Counterfactual treatment effects over time cannot be validated on observational data} \citep{Pearl.2009, Poinsot.2025}, as at least one of the required potential outcomes is never observed. Therefore, the correct and widely accepted evaluation strategy for time-varying CATE/CAPO estimation consists of synthetic and semi-synthetic experiments where ground-truth effects are known, which is the protocol we follow in our main results in Section~\ref{sec:experiments}.}


\end{document}